%% file: main2.tex
\RequirePackage{fix-cm}
\documentclass[twocolumn]{svjour3}          % twocolumn
\smartqed  % flush right qed marks, e.g. at end of proof
\usepackage{graphicx}
\usepackage{hyperref}
\input{math_commands.tex}

\usepackage{libertine}
\usepackage{enumitem}
\usepackage[linesnumbered,algoruled,boxed,lined]{algorithm2e}
\usepackage{algpseudocode}
\usepackage{xcolor} 
\usepackage{graphicx} %use graph format
\usepackage{epstopdf}
\usepackage{threeparttable}
\usepackage{subfigure}
\usepackage{amsmath,bm}
\usepackage{amsfonts}
\usepackage{multirow}
\usepackage{url}
\usepackage[utf8]{inputenc} % allow utf-8 input
\usepackage[T1]{fontenc}    % use 8-bit T1 fonts
\usepackage{url}            % simple URL typesetting
\usepackage{booktabs}       % professional-quality tables
\usepackage{amsfonts}       % blackboard math symbols
\usepackage{nicefrac}       % compact symbols for 1/2, etc.
\usepackage{microtype}      % microtypography
\usepackage{xcolor}         % colors
\usepackage{adjustbox}
\usepackage{enumitem}
\usepackage{multirow}
\usepackage{mathrsfs}
\usepackage{subfigure}
\usepackage{wrapfig}
\usepackage{balance}
\usepackage{marginnote}
\usepackage{tikz}
\usepackage{graphicx} % For including graphics
\usepackage{subcaption} % For subfigure captions
\usetikzlibrary{positioning, shapes.geometric, arrows.meta, decorations.pathreplacing}
\usepackage{hyperref}
\hypersetup{
    colorlinks=true,
    linkcolor=green,
    citecolor=green,
    urlcolor=blue
}

\newcommand\vldbpagestyle{plain}
\newcommand{\tsrx}{\mathcal{X}}
\newcommand{\graphg}{\mathcal{G}}
\newcommand{\setv}{\mathcal{V}}

\newcommand{\matrh}{\textbf{H}}
\newcommand{\matw}{\textbf{W}}
\newcommand{\vecb}{\textbf{b}}
\newcommand{\sete}{\mathcal{E}}
\newcommand{\dmnr}{\mathbb{R}}
\newcommand{\loss}{\mathcal{L}}
\newcommand{\param}{\mathbf{\Theta}}

\newcommand{\eg}{\textit{e}.\textit{g}.}

\usepackage[table]{xcolor}
\usepackage{framed}
\usepackage{setspace}
\usepackage{amsmath}
\usepackage{amssymb}
\usepackage{booktabs}
\usepackage{mathtools}

\newcommand{\KDDRevision}{\color{blue}}
\newcommand{\zqr}{\color{orange}}

\newcommand{\fn}{\color{blue}}
\renewcommand{\KDDRevision}{}
\renewcommand{\zqr}{}

\renewcommand{\fn}{}

\def\model{SimpleST}

\newcommand{\vldb}[1]{\textcolor{blue}{#1}}
\renewcommand{\vldb}{\color{black}}
\newcommand{\vba}[1]{\textcolor{blue}{#1}}
\renewcommand{\vba}{\color{black}}

\usepackage{fancyhdr} 

\begin{document}

\title{Efficient Prompt Learning for Traffic Forecasting}

\author{
  Qianru Zhang\textsuperscript{1}, 
  Xinyi Gao\textsuperscript{2},
  Alexander Zhou\textsuperscript{3},
  Reynold Cheng\textsuperscript{1}, 
  Siu-Ming Yiu\textsuperscript{1}, 
  Hongzhi Yin\textsuperscript{2}
}

\institute{Qianru Zhang\\
Email: qrzhang@cs.hku.hk\\
Xinyi Gao\\
Email: xinyi.gao@uq.edu.au\\
Alexander Zhou\\
Email: alexander.zhou@polyu.edu.hk\\
Reynold Cheng\\
Email: reynold73@gmail.com\\
Siu-Ming Yiu\\
Email: smyiu@cs.hku.hk\\
Hongzhi Yin\\
Email: h.yin1@uq.edu.au\\
1. School of Computing and Data Science \at
              The University of Hong Kong \\
2. School of Electrical Engineering and Computer Science \at
              The University of Queensland\\
              % \and
3. Department of Computing \at The Hong Kong Polytechnic University\\
              % \and
}

\date{Received: 5 July, 2025 / Accepted: 8 May, 2026}
% The correct dates will be entered by the editor

\maketitle

\begin{abstract}
Accurate traffic prediction is essential for optimizing transportation systems, enhancing resource allocation, and improving overall urban administration. Spatio-temporal graph neural networks (GNNs) have achieved state-of-the-art performance and have been widely used in various spatio-temporal prediction scenarios. However, these prediction methods often exhibit low generalization ability, struggling with distribution shifts caused by spatio-temporal dynamics. To address this challenge, we propose an approach to enhance the generalization and adaptation of spatio-temporal GNNs through efficient prompting. Specifically, we introduce a lightweight and model-agnostic prompt tuning framework for spatio-temporal GNNs, named \model. It facilitates adapting pre-trained spatio-temporal GNNs to novel distributions while keeping the model parameters fixed. This prompt mechanism reduces the overhead and complexity of adaptation, enabling efficient utilization of pre-trained models for out-of-distribution generalization. Extensive experiments conducted on five real-world urban spatio-temporal datasets demonstrate the superiority of our approach in terms of prediction accuracy and computational efficiency.

\keywords{Efficient Prompt Learning \and Traffic Forecasting \and Spatio-temporal GNNs}
\end{abstract}

\pagestyle{\vldbpagestyle}

\setcounter{page}{1}         
\pagenumbering{arabic}

\input{intro2}

\input{relate2}

\input{solution2}

\input{eval}

\input{conclusion}

% \clearpage
\bibliographystyle{spmpsci}
\bibliography{ref2}

\end{document}

%% file: math_commands.tex
\usepackage{amsmath,amsfonts,bm}

\def\eqref#1{equation~\ref{#1}}
\def\1{\bm{1}}

\DeclareMathAlphabet{\mathsfit}{\encodingdefault}{\sfdefault}{m}{sl}
\SetMathAlphabet{\mathsfit}{bold}{\encodingdefault}{\sfdefault}{bx}{n}

%% file: intro2.tex
\section{Introduction}

Spatio-temporal prediction in traffic~\cite{kieu2024team,han2024bigst,lan2022dstagnn,ASTGCN,bai2020adaptive} is a vital task that involves forecasting traffic conditions over time and space, enabling the optimization of transportation systems~\cite{chang2024revisiting,zhou2024red,liang2024sub,du2023ldptrace,musleh2023kamel}, efficient resource allocation, and improved urban administration. Recently, significant strides have been made in the field of spatio-temporal graph neural networks (GNNs)~\cite{han2024bigst,sahili2023spatio,ta2022adaptive,roy2021sst,zhang2024survey} to achieve state-of-the-art performance in spatio-temporal prediction~\cite{jin2023spatio,wang2022causalgnn,zhang2025efficient}. Spatio-temporal GNNs~\cite{han2024bigst,ma2022hierarchical,meng2021cross} leverage the power of graph structures by extending convolutional and attentive neural networks. They excel at capturing spatial relationships and effectively propagating information across the graph, enabling them to model intricate dependencies present in spatio-temporal traffic data~\cite{musleh2023kamel,han2022deeptea,musleh2023demonstration,zhang2018trajectory,shao2022pre}. The learning process of these models involves training on spatio-temporal data to learn the model parameters, which are then used to make predictions on unseen testing data.

Significant strides have been achieved in spatio-temporal graph neural architectures~\cite{han2024bigst}, showcasing their adaptability across a wide array of applications. Nonetheless, existing spatio-temporal GNNs~\cite{lai2025tmlkd} still face two key challenges. {(1) Spatio-temporal distribution shift.} A pivotal challenge arises when well-trained models encounter discrepancies in data distribution between their training and testing phases. Due to urban dynamics, or changes in population behavior, the testing data often deviates unexpectedly from the trained distribution. These distribution shifts pose formidable obstacles to model generalization in practical scenarios, necessitating enhanced generalization capabilities for pre-trained models on out-of-distribution (OOD) data.
{(2) Intensive computation in model updates.} To better adapt to spatio-temporal changes, recent approaches ~\cite{bai2020adaptive,ASTGCN,zhou2023maintaining} attempt to employ advanced data-adaptive modules or fine-tuning paradigms to accommodate shifted distributions. However, their intricate model structures hinder the accurate capture of spatio-temporal dynamics with limited fine-tuning data. More importantly, fine-tuning all parameters of these models on new data is time-consuming and resource-intensive, impeding rapid adaptation to latency-sensitive and resource-constraint scenarios.

In light of the above limitations, we draw inspiration from the successful application of prompt-tuning techniques in the field of natural language processing~\cite{yao2022prompt} and propose integrating a specially designed prompt neural network into pre-trained models~\cite{han2021pre,qiu2020pre,wang2023pre,du2022survey} for traffic forecasting.
However, developing prompt-based networks for traffic prediction poses greater challenges due to the presence of multi-dimensional distribution shifts. These shifts require simultaneous adaptation to both spatial variations (e.g., sensor discrepancies) and temporal changes (e.g., previously unseen traffic patterns). As illustrated in Figures~\ref{fig:intro}, ~\ref{fig:case_study_part1} and ~\ref{fig:case_study_part2}, such distribution shifts can significantly degrade model performance, leading to unstable prediction results across diverse datasets.

To address this issue, we propose a flexible prompting method, \model, that dynamically adapts the behavior of spatio-temporal models in response to distribution shifts. This mechanism enables more precise control over the prediction process by guiding the model’s focus toward shifted spatio-temporal features. Our prompt learning module is specifically designed to capture relevant contextual and distributional information from the shifted data, enabling the model to align its predictions with the new data distribution while avoiding catastrophic forgetting. Furthermore, our method allows for fine-tuning only a small subset of parameters—specifically, those within the prompt network—rather than the entire model. This significantly reduces training time and computational overhead. By updating less than 2\% of the total model parameters, our lightweight spatio-temporal prompt network achieves effective adaptation using only a limited number of fine-tuning labels. As demonstrated in Figures~\ref{fig:intro}, ~\ref{fig:case_study_part1} and ~\ref{fig:case_study_part2}, our approach enables the model to quickly adjust predictions in response to evolving traffic patterns, making it highly practical for real-world traffic flow forecasting and management scenarios.

\begin{figure}[t]
\begin{minipage}[t]{0.49\linewidth}
\centering
\includegraphics[width=\linewidth]{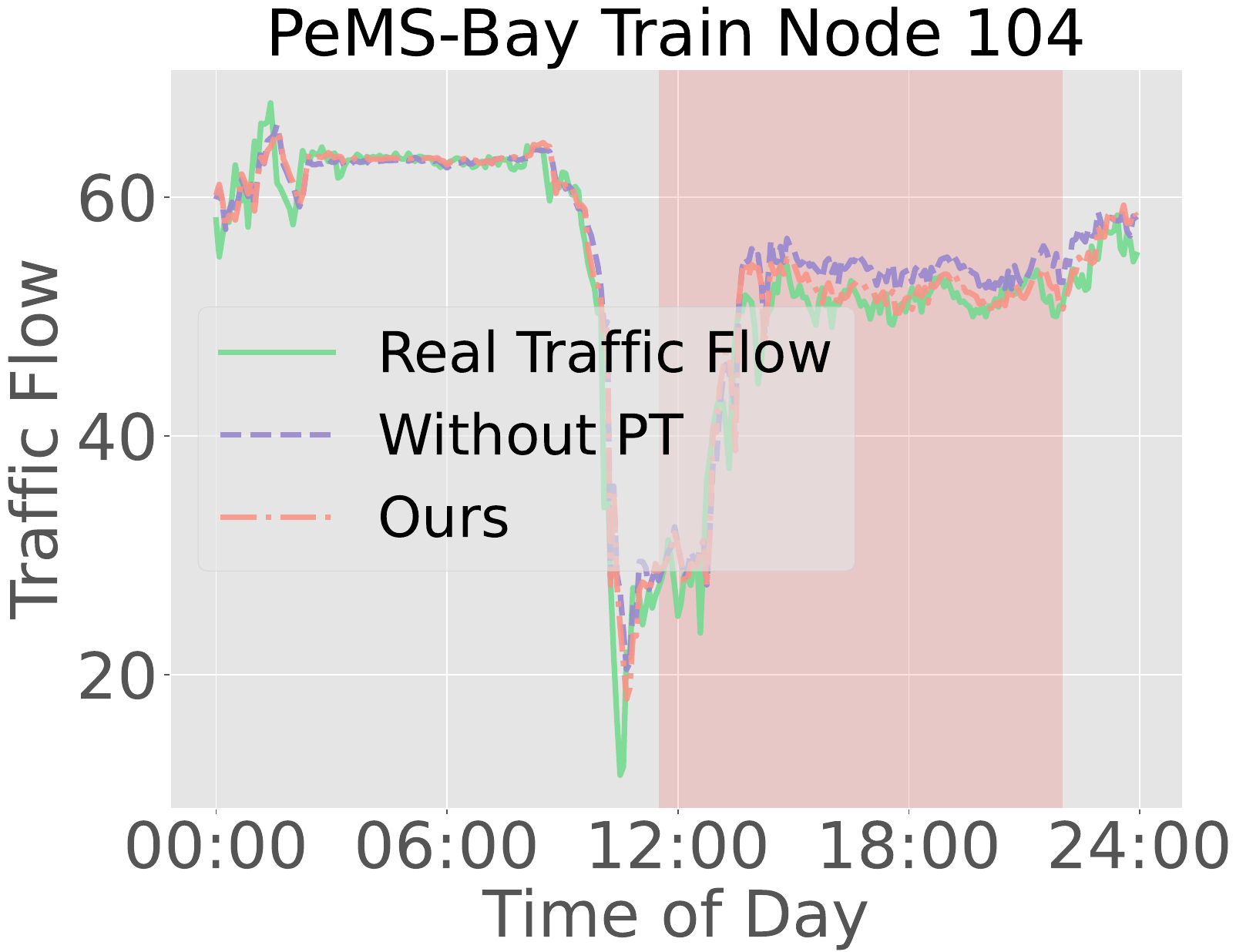}
\end{minipage}
\begin{minipage}[t]{0.49\linewidth}
\centering
\includegraphics[width=\linewidth]{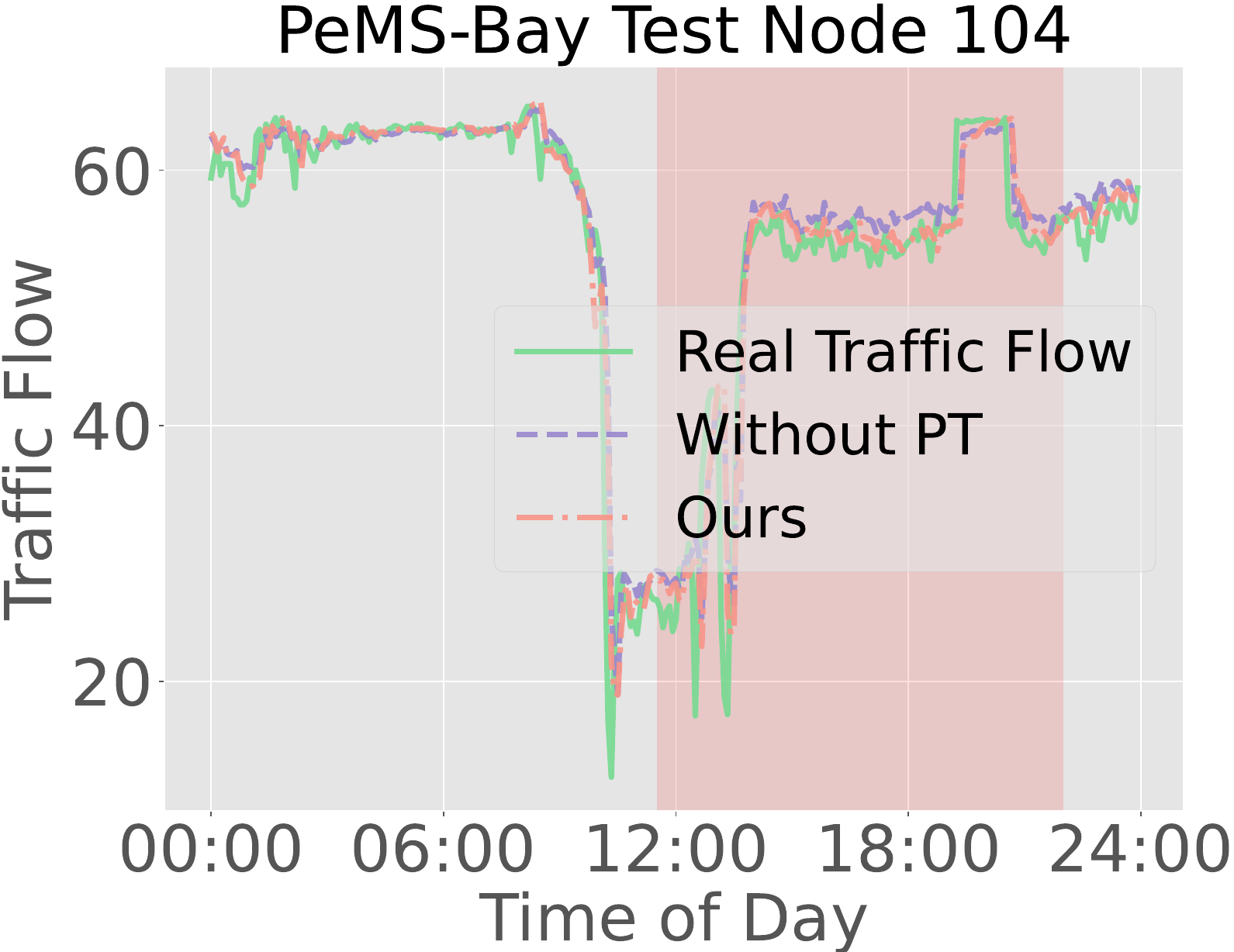}
\end{minipage}
\caption{Data distribution shift in the training and test set of PeMS-Bay. PT denotes the prompt tuning method. Red overlay marks the diverging traffic patterns at Node 105 across datasets. The close alignment between our predictions (solid line) and ground truth (with the green color) validates the model's generalization capability. As shown in red, Node 105 exhibits significant traffic pattern shifts from training to test conditions. Our approach maintains prediction precision despite these distributional changes, outperforming the method without the prompt network, which verifies the effectiveness of the prompt network on capturing the data distribution shift from the train dataset to the test dataset.}
\label{fig:intro}
\end{figure}

In a nutshell, the main contributions of this paper are threefold:
\begin{itemize}[leftmargin=*]
\item \textbf{Practical problem.} This work aims to enhance the adaptability of spatio-temporal pre-trained models to distribution shifts, which are frequently encountered in real-world scenarios characterized by spatial and temporal dynamics. The improved adaptability enables the model to maintain high prediction performance even when faced with previously unseen data.

\item \textbf{New methodology.} This study presents \model, a model-agnostic spatio-temporal prompt learning paradigm that integrates a simple prompt network with prediction models, allowing for effective adaptation and generalization in diverse spatio-temporal contexts. Additionally, an in-depth analysis is provided to justify the model's capability in alleviating distribution shifts and achieving enhanced efficiency.

\item {\textbf{Extensive experiments.}} We perform extensive experiments by adapting \model\ to diverse spatio-temporal models and thoroughly evaluate its effectiveness and efficiency across multiple real-world datasets. Through comprehensive comparisons with a range of state-of-the-art fine-tuning methods, \model\  consistently demonstrates superior performance, achieving an average accuracy improvement of 3–6\% while being \textbf{2× to 46× faster} than existing approaches.
Our code and data can be accessed at {\color{blue}\href{https://github.com/CoderPowerBeyond/SimpleST}{https://github.com/CoderPowerBeyond/SimpleST}}.
\end{itemize}

%% file: relate2.tex
\section{Related Work} \label{sec:related}

{\vba{In this section, we have presented a deeper comparative analysis of \model\ across three complementary perspectives:  
(i) advanced spatio-temporal forecasting architectures,  
(ii) prompt-tuning adaptation strategies across domains, and  
(iii) efficient adaptation in system-oriented spatio-temporal learning.  
Below, we discuss each perspective in detail.}}

%=============================
{\vba{\subsection{Spatio-Temporal Forecasting Architectures}
Accurate traffic forecasting is crucial for optimizing transportation systems, alleviating congestion, and improving urban mobility. To tackle this challenge, a series of deep architectures have been proposed to jointly capture spatial and temporal dependencies in urban networks.}}

{\vba{Early methods leveraged RNNs to process sequential traffic data~\cite{lv2018lc,ding2022spatio,zhangfldmamba,zhang2025hmamba,zhang2025m2rec}, but they suffer from vanishing gradients when modeling long-term dependencies. Attention mechanisms~\cite{luo2021stan,xu2020spatial,zhang2024survey,zhang2024surveyb} subsequently emerged as a strong alternative, allowing models to dynamically emphasize the most informative historical states and spatial regions.  
Temporal Convolutional Networks (TCNs)~\cite{wu2020connecting} further advanced temporal modeling via dilated convolutions that efficiently capture multi-scale dynamics.}}

{\vba{Recently, research has converged on spatio-temporal graph neural networks (GNNs)~\cite{zhu2021ast,han2022lst,lan2022dstagnn}, which explicitly encode spatial topology among sensors using graph structures. Zhu et al.~\cite{zhu2021ast} proposed ASTGCN, which models dynamic spatial dependencies using attribute-augmented graph convolutions. DSTAGNN~\cite{lan2022dstagnn} extends this design by constructing dynamic graphs with multi-head attention across time.  
CausalGNN~\cite{wang2022causalgnn} further integrates causal-consistency principles to enforce invariant relational learning across varying urban conditions.}}

{\vba{While these models achieve strong single-domain performance, they typically assume static data distributions and require full retraining when distributional changes occur. In contrast, the proposed \model\ framework introduces a lightweight \textit{temporal prompt-tuning mechanism} that updates fewer than 2\% of parameters while maintaining the frozen GNN backbone. This design bridges the gap between high-capacity GNN forecasters and parameter-efficient adaptation, achieving up to $46\times$ faster adaptation without loss of accuracy.}} 

\vspace{2mm}
{\vba{\noindent\textbf{Key distinction:}  
Unlike prior deep spatio-temporal models (e.g., ASTGCN, DSTAGNN, CausalGNN) that rely on full fine-tuning per domain, \model\ enables rapid cross-domain generalization by tuning only a small temporal prompt network to handle dynamic shifts effectively.}}

%=============================
{\vba{\subsection{Prompt-Tuning Adaptation Strategies Across Domains}
Prompt-tuning was originally proposed in natural language processing as an efficient method to adapt large pre-trained models to new tasks by optimizing a small set of learnable prompts while keeping the backbone fixed. This concept has since expanded to other modalities such as computer vision~\cite{jia2022visual,bahng2022exploring} and graph representation learning~\cite{sun2022gppt,fang2022prompt,bai2020adaptive,chen2021tamp}.}}

{\vba{Within graph learning, Sun et al.~\cite{sun2022gppt} introduced GPPT, which incorporates prompt tokens into GNNs to better align pre-trained representations with downstream tasks. Fang et al.~\cite{fang2022prompt} designed GP-Feature, which unifies graph pre-training and downstream fine-tuning via structured prompts. Bai et al.~\cite{bai2020adaptive} and Chen et al.~\cite{chen2021tamp} demonstrated that lightweight prompt-like adapters can improve the transferability of graph models. Most recently, Multi-Task GNN Prompting~\cite{zhang2024multi} extended this idea to multi-domain settings through a unified task template.}}

{\vba{However, these prior studies focus on static or node-level graphs, neglecting the evolving temporal dimension that drives non-stationarity in real-world traffic systems.  
Our \model\ framework generalizes prompt learning to dynamic spatio-temporal contexts via a \textbf{time-aware residual prompt network} that adjusts temporal and spatial feature distributions on the fly. This makes \model\ the first model to achieve cross-temporal prompt adaptation for graph-based forecasting under real-world conditions.}}

\vspace{2mm}
{\vba{\noindent\textbf{Key distinction:}  
While existing prompt-based GNNs adapt static graphs, \model\ extends the paradigm to continuous temporal domains, offering plug-in efficiency, robustness to distribution shifts, and superior generalization to unseen spatio-temporal patterns.}}

%=============================
{\vba{\subsection{Efficient Adaptation in System-Oriented Spatio-Temporal Learning}
Beyond architecture-level advancements, the database and VLDB communities have recently pursued scalable solutions for spatio-temporal learning from a systems perspective. TEAM~\cite{kieu2024team} introduces distributed training for cross-domain forecasting, BigST~\cite{han2024bigst} designs a linear-complexity spatio-temporal index for large-scale graphs, and NuHuo~\cite{yuan2024nuhuo} applies graph neural operators for scalable weather modeling. DeepTEA~\cite{han2022deeptea} accelerates online inference through trajectory-level decomposition, while KAMEL~\cite{musleh2023kamel} leverages knowledge distillation for efficient trajectory imputation.}}

{\vba{Despite notable contributions to scalability and data management, these system-oriented studies rely on complete fine-tuning and static data settings, limiting flexibility under distribution shifts. \model\ complements this line of research by focusing on \textit{parameter-efficient model adaptation} with formally bounded convergence properties. While system-level frameworks such as TEAM and BigST scale horizontally across nodes, \model\ scales vertically by minimizing the adaptation footprint per model instance.}} 

{\vba{Our theoretical analyses (Sections~\ref{sec:theo}--\ref{sec:exp}) confirm that \model\ guarantees constant memory complexity and linear adaptation time with respect to graph size, making it ideally suited for scalable and distributed deployment.}}

\vspace{2mm}
{\vba{\noindent\textbf{Key distinction:}  
Where prior VLDB works emphasize infrastructure scaling and data throughput, \model\ introduces complementary algorithmic scalability by reducing fine-tuning cost and enabling rapid adaptation in dynamic, cross-domain environments.}}

%=============================
\vspace{3mm}
{\vba{\noindent\textbf{Summary.}  
Across these three perspectives, (a) advanced spatio-temporal architectures, (b) prompt-tuning adaptation, and (c) system-level efficiency, \model\ emerges as a \textit{unified, theoretically grounded, and model-agnostic} solution for efficient spatio-temporal adaptation. It uniquely integrates prompting-based lightweight adaptation, theoretical stability guarantees, and compatibility with scalable systems.}}

% \vspace{3mm}
% \begin{quote}
{\vba{Unlike previous spatio-temporal models that rely on full model fine-tuning (e.g., ASTGCN, DSTAGNN, CausalGNN), \model\ introduces a temporal prompt learning mechanism that efficiently adapts pre-trained backbones under dynamic distributions. Furthermore, by aligning with recent prompting frameworks such as GPPT and Multi-Task GNN Prompting, our method generalizes prompt learning to continuous temporal domains. Complementary to system-level scalability works like BigST and TEAM, \model\ focuses on lightweight, transferable graph adaptation with formal stability guarantees and constant parameter cost across large-scale networks.
}}

%% file: solution2.tex
\section{Methodology}
\label{sec:solution}

In this section, we present our model-agnostic spatio-temporal prompt tuning paradigm: \model, which is a parameter-efficient adaptation technique where task-specific prompt parameters are introduced to bridge the gap between a frozen pre-trained GNN and downstream prediction tasks. The method preserves the original model's parameters while learning soft prompts that (1) modify input graph representations,
(2) guide the frozen model's behavior on new data and (3) maintain generalization on unseen datasets. As shown in Figure~\ref {fig:fra_01}, our model incorporates the temporal convolutional network (TCN) to effectively process the temporal dynamics within data and extract patterns and relationships over time. Afterward, the processed data is fed into GNN-based pretrained models, enabling efficient adaptation and generalization to diverse spatio-temporal contexts. We provide widely used notations in Table~\ref{tab:full_notation}.

\begin{figure*}[t]
    \centering
    \includegraphics[width=\textwidth]{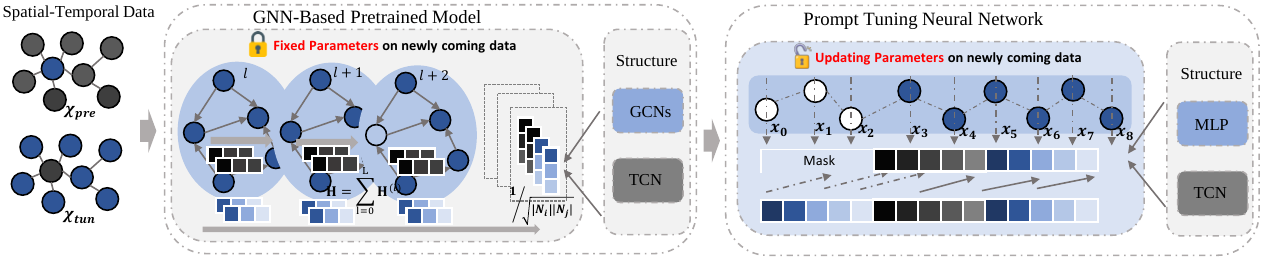}
    % \vspace{-0.1in}
    % \marginnote{\color{blue}R\#iGJG-Q2}
    \caption{The architecture of the \model, which consists of three key components: spatio-temporal data, a prompt-tuning neural network, a TCN, and pre-trained GNN-based models. Within the spatio-temporal graph, white nodes indicate masked nodes, and their corresponding embedding vectors are labeled as "mask" in the figure. For the input spatio-temporal data, two distinct settings are illustrated. In $\mathcal{X}{pre}$, the \vldb{blue} node represents the center point, while the surrounding dark grey nodes are its neighbors. Conversely, in $\mathcal{X}{tun}$, the dark grey node denotes the center point, and the surrounding blue nodes serve as neighbors. The embedding vectors corresponding to these nodes are marked using the same color scheme, 
    blue or dark grey, to reflect their respective roles.}
    % \vspace{-0.05in}
    \label{fig:fra_01}
\end{figure*}

\begin{figure*}[t]
    \centering
    \includegraphics[width=0.9\textwidth]{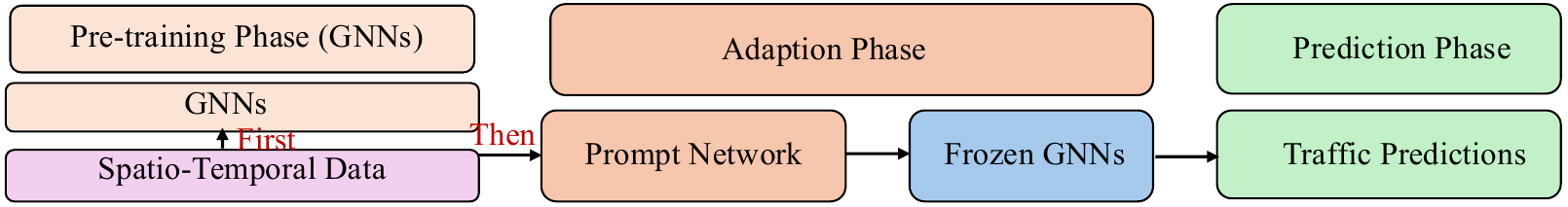}
    \caption{\vldb{\model\ system architecture overview. The proposed framework consists of four main components: (1) Input spatio-temporal traffic data, (2) A lightweight prompt network that adapts the input data through learnable transformations, (3) A frozen pre-trained GNNs backbone that processes the adapted data, and (4) Final traffic predictions. The system operates in three phases: pre-training the GNNs on source data firstly, then adapting only the prompt network parameters to new domains while keeping the GNNs frozen, and inference on target data.}}
    % \vspace{-0.05in}
    \label{fig:fra_02}
\end{figure*}

\subsection{Preliminary} \label{sec:preli}

In this part, we provide several key definitions of \model, as outlined below.

\paragraph{Graph and Data.}
We denote the spatio-temporal graph as 
\(\graphg = (\setv, \sete, \mathcal{X})\), 
where \(\setv\) is the set of nodes (with cardinality \(|\setv| = R\)), 
and \(\sete\) represents the set of weighted directed edges. 
The corresponding adjacency matrix is \(A \in \mathbb{R}^{R \times R}\). 
Let \(D^{\mathrm{in}}_{ii} = \sum_j A_{ij}\) and \(D^{\mathrm{out}}_{jj} = \sum_i A_{ij}\) denote the in-degree and out-degree matrices, respectively.
The multivariate spatio-temporal tensor is expressed as 
\(\mathcal{X} \in \mathbb{R}^{R \times T \times F}\), where each time snapshot \(\mathcal{X}_{:,t,:} \in \mathbb{R}^{R \times F}\) captures features of all nodes at time step \(t\), and each node’s temporal evolution is represented by \(\mathcal{X}_{r,:,:} \in \mathbb{R}^{T \times F}\).
We denote the pre-trained backbone model as \(g(\cdot; \Theta_g)\), frozen during adaptation at parameters \(\Theta_g^\star\), and the lightweight temporal prompt network as \(h(\cdot; \Theta_h)\).

\paragraph{Graph Diffusion and Filters.}
The row-normalized random walk matrix is defined as
\begin{equation}
S = D^{\mathrm{in}}{}^{-1} A, 
\qquad
(S \mathcal{X}_{:,t,:})_i = \sum_{j=1}^{R} \frac{A_{ij}}{D^{\mathrm{in}}_{ii}} \, \mathcal{X}_{j,t,:}.
\end{equation}
For multi-hop message propagation, the $k$-hop diffusion and degree-$K$ polynomial spectral filter are given by
\begin{equation}
\mathcal{X}^{(k)}_{:,t,:} = S^k \mathcal{X}_{:,t,:}, 
\qquad 
\Phi_K(S)\mathcal{X}_{:,t,:} = \sum_{k=0}^{K} \alpha_k S^k \mathcal{X}_{:,t,:}.
\end{equation}
For undirected graphs, one may use the normalized Laplacian 
\(\mathcal{L}=I-D^{-1/2}AD^{-1/2}\)
and Chebyshev polynomial filters.

\paragraph{Temporal Convolutions.}
We apply causal 1D convolutions with kernel size \(K\) and dilation factor \(\delta\):
\begin{equation}
(\mathcal{X} *_{\delta} w)_{r,t,c} = \sum_{k=0}^{K-1} w_{c,k} \, \mathcal{X}_{r,\,t-k\delta,\,c},
\end{equation}
with appropriate padding to preserve temporal length \(T\).
For \(L\) stacked layers with dilations \(\delta_{\ell}\) (\(\ell=1,\dots,L\)), 
the receptive field is
\begin{equation}
\mathrm{RF} = 1 + \sum_{\ell=1}^{L} (K-1)\delta_{\ell},
\quad \text{e.g., } \delta_{\ell} = 2^{\ell-1}.
\end{equation}
A residual TCN block operates as
\begin{equation}
\begin{aligned}
U^{(1)} &= \mathrm{Conv1D}(Z), \\
U^{(2)} &= \mathrm{Conv1D}(\sigma(\mathrm{BN}(U^{(1)}))), \\
\mathrm{TCN}(Z) &= Z + \mathrm{Dropout}(\sigma(\mathrm{BN}(U^{(2)}))).
\end{aligned}
\end{equation}

\paragraph{Temporal Prompt (Residual Editor).}
A lightweight residual adapter is applied before the frozen GNN backbone:
\begin{equation}
\begin{aligned}
H &= \mathrm{TCN}(\mathcal{X}), \\
Z &= \sigma(W H + b), \\
\tilde{\mathcal{X}} &= \mathcal{X} + UZ, \quad U, W \text{ are small.}
\end{aligned}
\end{equation}
If the distribution shift is negligible, \(U \approx 0\) and thus \(\tilde{\mathcal{X}} \approx \mathcal{X}\).

\paragraph{Space–Time Composition.}
The full pipeline composes the temporal prompt and spatial GNN propagation as follows:
\begin{equation}
\begin{aligned}
\tilde{\mathcal{X}} &= h(\mathcal{X}; \Theta_h), \\
H_{:,t,:} &= \Phi_K(S)\tilde{\mathcal{X}}_{:,t,:}, \\
\hat{Y} &= g(\tilde{\mathcal{X}}, \graphg; \Theta_g^\star).
\end{aligned}
\end{equation}

\paragraph{Complexity and Stability.}
Each graph convolution layer incurs 
\(\mathcal{O}(|\sete|d)\) complexity, 
while a TCN layer requires 
\(\mathcal{O}(K d^2)\) per node sequence. 
If \(g\) and \(h\) are \(L_g\)- and \(L_h\)-Lipschitz continuous, 
their composition \(g \circ h\) is \(L_g L_h\)-Lipschitz, ensuring stable and parameter-efficient adaptation.

\subsection{Spatio-Temporal Prediction}
\label{sec:pre}
Spatio-temporal (ST) prediction focuses on forecasting data that is spatially and temporally distributed in the urban space~\cite{pan2019urban,MTGNN}. We provide illustrations of spatial-temporal data and spatial-temporal graph as follows:

\noindent\textbf{Spaio-Temporal Data}. Formally, the target spatio-temporal data can be denoted by a three-way tensor $\tsrx\in\dmnr^{R\times T\times F}$, where $R$ denotes the number of spatial regions (\eg~urban districts, road segments), $T$ denotes the number of time slots (\eg~quarters, hours, days), and $F$ denotes the dimensionality of the concerned features. An element $\tsrx_{r,t, f}\in\dmnr$ denotes the value of the $f$-th feature for the $r$-th spatial region in the $t$-th time slot. And $\tsrx_t\in\dmnr^{R\times F}$ is used to represent the time-specific matrix.

\noindent\textbf{Spatio-Temporal Graph}. Besides the spatio-temporal data $\tsrx$, it is common to work with a spatio-temporal graph that records the correlations between the regions and the time slots. This graph can be represented as $\graphg=(\setv, \sete, \mathcal{X})$, where $\setv$ is a set of nodes representing urban regions, and $\mathcal{E}$ is the set of edges that encode both the spatial interrelations and the temporal transition relations between the region nodes. The node set $\mathcal{V}$ is associated with a node feature matrix $\textbf{X}=\{\textbf{x}_1,\textbf{x}_2, ..., \textbf{x}_{|\mathcal{V}|}\}\in\mathbb{R}^{|\mathcal{V}|\times d}$, where $\textbf{x}_i\in\mathbb{R}^d$ represents the feature vector of the node $v_i$.

\vldb{\noindent\textbf{Task Formalization}. We forecast node-wise traffic speed (velocity) on the road sensor graph. Given historical speeds from the spatio-temporal tensor $\mathcal{X} \in \mathbb{R}^{R \times T \times F}$ where $R$ is the number of sensors, $T$ is the number of time steps, and $F=1$ (speed in MPH), the task is to predict future speeds for each sensor across multiple horizons (e.g., 12/24/36 steps at 5-minute resolution). This is formalized as learning a spatio-temporal model $g(\cdot)$ with parameters $\Theta_g$:
\begin{equation}\label{eq:task_formation}
(\mathcal{X}_{t+1}, \cdots, \mathcal{X}_{t+T'}) = g(\mathcal{X}_{t-T+1}, \cdots, \mathcal{X}_{t}, \mathcal{G}; \Theta_g)
\end{equation}
where $\mathcal{G}$ represents the road graph with adjacency matrix encoding distance/connectivity between sensors, $T$ is the length of the historical window, and $T'$ is the prediction horizon.
}

\subsection{Spatio-Temporal Prompt Tuning}
\label{sec:tuning}

This section presents the theoretical foundations and implementation details of our Spatio-Temporal Prompt Tuning framework. We first introduce the technical details of GNN-based Pretrained Model in terms of MTGNN~\cite{MTGNN} and then formalize the pre-training and tuning paradigm, then introduce the architectural innovations of our time-aware prompt network. Algorithmic implementation and theoretical guarantees follow, concluding with comprehensive analysis of the framework's properties.

\subsubsection{GNN-based Pretrained Model: MTGNN Architecture}
In this part, we aim to provide illustrations on GNN-based pretrained models. Since \textsc{MTGNN}~\cite{MTGNN} is widely used, we take MTGNN as an example here. The \textsc{MTGNN}~\cite{MTGNN} framework introduces a spatio-temporal pretrained model that jointly learns graph structures and temporal patterns for multivariate time series forecasting like traffic datasets. The architecture comprises three integrated components:

\textbf{Graph Structure Learning}:
The model infers a \textit{directed adjacency matrix} \(\mathbf{A}\) from trainable node embeddings \(\mathbf{E}_1, \mathbf{E}_2 \in \mathbb{R}^{N \times d_e}\):
\[
\begin{aligned}
&\mathbf{M}_1 = \tanh(\alpha \mathbf{E}_1 \mathbf{\Theta}_1), \\
&\mathbf{M}_2 = \tanh(\alpha \mathbf{E}_2 \mathbf{\Theta}_2), \\
&\mathbf{A} = \text{ReLU}\left(\tanh\left(\alpha (\mathbf{M}_1\mathbf{M}_2^\top - \mathbf{M}_2\mathbf{M}_1^\top)\right)\right),
\end{aligned}
\]
where \(\alpha\) controls saturation rate, and \(\mathbf{\Theta}_i\) are learnable weights. The subtraction operator enforces \textit{asymmetric relationships} (\(\mathbf{A}_{ij} > 0 \Rightarrow \mathbf{A}_{ji} = 0\)). Sparsification retains only top-\(k\) edges per node:
\[
\mathbf{A}_{i,j} = 0 \quad \forall j \notin \text{topk}(\mathbf{A}_{i,:}),
\]
yielding a sparse directed graph optimized for downstream transfer like traffic flow forecasting.

\textbf{Graph Convolution Module}:
To propagate information while preventing over-smoothing, a \textit{mix-hop propagation layer} operates as:
\[
\begin{aligned}
&\mathbf{\tilde{A}} = \mathbf{\tilde{D}}^{-1/2}(\mathbf{A} + \mathbf{I})\mathbf{\tilde{D}}^{-1/2} \\
&\mathbf{H}^{(k)} = \beta \mathbf{H}_{\text{in}} + (1 - \beta) \mathbf{\tilde{A}} \mathbf{H}^{(k-1)} \\
&\mathbf{H}_{\text{out}} = \sum_{k=0}^{K} \mathbf{H}^{(k)} \mathbf{W}^{(k)}
\end{aligned}
\]
where \(\beta\) balances self-information retention, \(\mathbf{W}^{(k)}\) are feature selectors, and \(K\) is the propagation depth. Residual connections maintain original features: \(\mathbf{H}_{\text{final}} = \mathbf{H}_{\text{out}} + \text{LayerNorm}(\mathbf{H}_{\text{in}})\).

\textbf{Temporal Convolution Module}:
Multi-scale temporal patterns are captured via a \textit{dilated inception layer}:
\begin{equation}
\mathbf{Z}_{\text{out}} = \text{concat}\left( \{\mathbf{Z} \star_d \mathbf{f}_{1\times s} \}_{s \in \mathcal{S}} \right), \quad \mathcal{S} = \{2,3,6,7\},
\end{equation}
with dilated convolution defined as \vldb{$(\mathbf{Z} \star_d \mathbf{f})_{(t)} = \sum_{\tau=0}^{k-1} \mathbf{f}(\tau)\\ {\fn{\ast}} \mathbf{Z}(t - d {\fn{\ast}} \tau)$.}
Exponential dilation growth ($d_\ell = 2^{\ell}$) expands the receptive field \vldb{via $\mathcal{R} = 1 + \max(\mathcal{S}) \frac{2^L - 1}{2 - 1}$. Gated linear} units regulate information flow \vldb{via $\mathbf{\tilde{Z}} = \tanh(\mathbf{W}_f \ast \mathbf{Z}) \odot \sigma(\mathbf{W}_g \ast \mathbf{Z})$.}

\textbf{Integrated Pretraining}:
The MTGNN framework integrates three fundamental components into a cohesive computational graph for end-to-end spatio-temporal representation learning. 
Input time series \(\mathbf{X} \in \mathbb{R}^{T \times N \times d_x}\) first undergo \textit{feature embedding} via linear projection \(\mathcal{P}: \mathbb{R}^{d_x} \to \mathbb{R}^{d_h}\) to obtain latent representations \(\mathbf{H}^{(0)} = \mathbf{X}\mathbf{W}_e\). 
These embeddings then flow through \(L\) \textit{interleaved processing blocks} that alternate graph convolution (GC) and temporal convolution (TC) modules:
\[
\mathbf{H}^{(\ell+1)} = \Gamma_{\text{TC}}^{(\ell)} \left( \Gamma_{\text{GC}}^{(\ell)} \left( \mathbf{H}^{(\ell)} \right) \right) + \mathbf{H}^{(\ell)},
\]
where residual connections preserve multi-scale features. 
The GC module implements \textit{mix-hop propagation} with adjacency \(\mathbf{A}\) learned adaptively, while the TC module employs \textit{gated dilated inception} for multi-frequency temporal analysis. 
Multi-resolution features are aggregated through \textit{skip connections}, $\mathbf{S} = \bigoplus_{\ell=1}^{L} \mathcal{C}^{(\ell)} \left( \mathbf{H}^{(\ell)} \right),$
where \(\mathcal{C}^{(\ell)}\) denotes 1D convolution compressing temporal length to unity and \(\bigoplus\) represents channel-wise concatenation. 
The pretraining objective combines forecasting accuracy with regularization:
\begin{equation}
\mathcal{L} = \underbrace{\frac{1}{T' N}\sum_{t=1}^{T'} \|\mathbf{Y}_t - \hat{\mathbf{Y}}_t\|_1}_{\text{MAE}} + \lambda \|\mathbf{\Theta}\|_2 + \mu \|\mathbf{A}\|_F.
\end{equation}
Optimization employs \textit{curriculum learning}, initiating with single-step prediction (\(Q=1\)) and progressively expanding to the target horizon \(Q\) via geometrically increasing sequence lengths \(q_t = \lfloor Q^{(t/T_{\text{max}})}\rfloor\). 
For memory efficiency, \textit{stochastic node partitioning} decomposes the graph into \(m\) random subsets per iteration, reducing spatial complexity from \(\mathcal{O}(N^2)\) to \(\mathcal{O}\left(\left(\frac{N}{m}\right)^2\right)\) while maintaining asymptotic connectivity through edge distribution uniformity. 
The pretrained model captures transferable spatio-temporal dependencies, with modules directly adaptable to downstream forecasting tasks through fine-tuning.

\subsubsection{Pre-training and Tuning Paradigm}
\label{sec:prompt_meshanism}

In real urban systems, spatio-temporal (ST) data often exhibit continuous and recurring variations (e.g., diurnal or weekly traffic cycles), leading to dynamic distributions over time. Such distributional drift poses a challenge to existing ST forecasting models that are trained once on static historical data, as they cannot effectively generalize to future patterns that differ from their training period. 

To address this, the proposed \model\ framework follows a \textit{pre-training and tuning paradigm}, enabling a pre-trained model to continually adapt to newly observed data streams. In this setting, the entire dataset is sorted chronologically and partitioned into four subsets:
\begin{equation}
\begin{aligned}
    \tsrx_{pre}&=\tsrx_{t-T+1:t-T+T_{val}}, \quad
    \tsrx_{val} = \tsrx_{t-T+T_{val}+1:t-T+T_{pre}}, \\
    \tsrx_{tun}&= \tsrx_{t-T+T_{pre}+1:t}, \quad
    \tsrx_{tst}  = \tsrx_{t+1:t+T'},
\end{aligned}
\end{equation}
where the model is first pre-trained on $\tsrx_{pre}$, validated on $\tsrx_{val}$, tuned on $\tsrx_{tun}$, and finally evaluated on $\tsrx_{tst}$. The variable $T$ denotes the total number of time slots in the combined pre-training and tuning stages, while $T_{pre}$ and $T'$ represent the lengths of the pre-training and test intervals, respectively. This temporal ordering simulates a realistic, online forecasting environment.

{\vba{\paragraph{Prompt-based tuning.}
Traditional adaptation methods require full model fine-tuning on $\tsrx_{tun}$, which incurs significant computational and memory overhead. In contrast, inspired by graph prompting approaches such as GP-Feature~\cite{fang2022prompt} and Multi‑Task Prompting~\cite{zhang2024multi}, our \model\ performs efficient adaptation through a \textbf{temporal prompt network}. Instead of updating all parameters of the backbone ST model $g(\cdot)$, we freeze $g(\cdot)$ and only optimize a compact prompt module $h(\cdot)$ that learns residual transformations between historical and current temporal states:
\begin{equation}
\begin{aligned}
    &\mathop{\arg\min}_{\param_h} ~ 
    \loss\!\big(g(h(\tsrx_{tun}, \graphg; \param_h); \param_g), \tsrx_{tun}\big), \\
    &\text{where}~~
    \param_g = \mathop{\arg\min}_{\param_g} \loss\!\big(g(\tsrx_{pre}; \param_g), \tsrx_{pre}\big).
\end{aligned}
\end{equation}
Here, $\loss(\cdot)$ denotes the prediction objective (e.g., mean squared error). The sequential optimization allows the pre-trained backbone to remain intact, while the lightweight prompt network learns to map the new data distribution $\tsrx_{tun}$ back into the representation domain of $\tsrx_{pre}$.}} 

{\vba{\paragraph{System-level perspective.}
From an implementation viewpoint, the prompt network $h(\cdot)$ operates as a low-rank residual adapter inserted between normalized feature layers of $g(\cdot)$. During online adaptation, only the parameters of $h(\cdot)$ are transmitted and updated, maintaining \emph{constant memory cost} and \emph{negligible communication overhead} in distributed environments. This design thus balances \textit{algorithmic adaptability} with \textit{system efficiency}, as discussed conceptually in §\ref{sec:related} and empirically validated in §\ref{sec:exp}.}}

{\vba{\paragraph{Illustrative examples of prompts.}
To aid interpretability, we include representative prompt embeddings that capture temporal variability. For instance, when forecasting morning and evening traffic peaks, the learned prompt vectors differ slightly in magnitude and direction:
\begin{quote}
\textit{Prompt embedding examples:}  
\(P_t^{(\mathrm{morning})} = [0.12,\,-0.07,\,\\0.34,\dots]\),  
\(P_t^{(\mathrm{evening})} = [-0.05,\,0.18,\,0.29,\dots]\).  
These contextualized prompts guide the frozen backbone to emphasize time-specific spatial patterns while preserving global structure.
\end{quote}
A concise version of these vectors is presented in Table~\ref{tab:prompt_examples}, showing how prompt evolution aligns with diurnal data dynamics. }}

{\vba{\paragraph{Summary.}
Through this pre-training and temporal prompt-tuning paradigm, \model\ continuously aligns new streaming data with its pre-trained representations. This results in rapid adaptation (less than 2 \% of parameters updated) and sustained accuracy across evolving environments, unifying flexibility, interpretability, and system-level scalability.}}

\begin{table*}[t]
\centering
\caption{{\vba{Representative temporal prompt embeddings showing diurnal adaptation patterns in \model. Each vector corresponds to the learned prompt component \(P_t\) used to modulate the frozen spatio-temporal backbone. Variations across time slots reflect domain-aware adaptation rather than full fine-tuning.}}}\vspace{-0.1in}
\label{tab:prompt_examples}
\resizebox{1\linewidth}{!}{
\begin{tabular}{lcccc}
\toprule
\textbf{Time interval} & \textbf{Prompt dimension 1} & \textbf{Prompt dimension 2} & \textbf{Prompt dimension 3} & \textbf{Interpretation} \\ 
\midrule
Morning (07:00–10:00) & 0.12 & $-0.07$ & 0.34 & Captures peak-flow onset, highlighting commuting regions \\ 
Afternoon (12:00–15:00) & 0.05 & 0.10 & 0.22 & Reflects inter-peak stabilization of flow dynamics \\ 
Evening (17:00–20:00) & $-0.05$ & 0.18 & 0.29 & Adapts to congestion buildup and delayed propagation \\ 
Night (22:00–01:00) & $-0.13$ & 0.04 & 0.12 & Emphasizes sparse correlations under low-traffic regime \\ 
\bottomrule
\end{tabular}}
\vspace{-1mm}
\end{table*}

\subsubsection{Role Separation vs. Separate Predictors}
\label{subsec:rolesep}

To clarify the distinction between training a separate GNN+TCN predictor and our frozen GNN plus temporal prompt, we explicitly separate the roles of \emph{prediction} and \emph{adaptation}.

\paragraph{Frozen GNN + Temporal Prompt (prediction vs.\ adaptation).}
In SimpleST, the pre-trained spatio-temporal GNN $g(\cdot;\Theta_g^\star)$ is \emph{frozen} during adaptation. A tiny temporal adapter $h(\cdot;\Theta_h)$ edits inputs to reduce distribution mismatch:
\begin{equation}
\begin{aligned}
\Theta_g^\star
&= \arg\min_{\Theta_g} ; \mathcal{L}_{\mathrm{pre}}\left(g(\mathcal{X}_{\mathrm{pre}}, \mathcal{G};\Theta_g), Y_{\mathrm{pre}}\right),\\
\Theta_h^\star
&= \arg\min_{\Theta_h} ; \mathcal{L}_{\mathrm{tun}}\left(g \left(h(\mathcal{X}_{\mathrm{tun}};\Theta_h), \mathcal{G};\Theta_g^\star\right), Y_{\mathrm{tun}}\right),\\
&\quad \text{s.t. } \Theta_g^\star \text{ frozen.}
\label{eq:pretrain-insert1}
\end{aligned}
\end{equation}
Here, $g$ is the \emph{predictor} and $h$ is the \emph{adapter}. Only $\Theta_h$ (typically $<2\%$ of total parameters) is updated.

\paragraph{Conceptual schematic.}
Below we include a schematic contrasting both designs:

\begin{figure}[ht]
\vspace{-0.1in}
\centering
% Replace this box with a TikZ or included PDF schematic
\fbox{\begin{minipage}{0.92\linewidth}
\vspace{1mm}
\textbf{Left: Separate GNN+TCN (both trainable).} Input $X$ is fed into two predictors $g_{\text{GNN}}$ and $f_{\text{TCN}}$; both parameter sets are updated on the new domain.
\textbf{Right: Frozen GNN + Temporal Prompt.} Input $X$ is first edited by a tiny temporal prompt $h$ to produce $\tilde{X}$, then consumed by the \emph{frozen} predictor $g$; only $\Theta_h$ is updated.
\vspace{1mm}
\end{minipage}}
\caption{Role separation in SimpleST (adapter vs. predictor) vs. training two predictors.}
\label{fig:schematic-role-sep}
\vspace{-0.2in}
\end{figure}

This role separation reduces tuning cost and preserves pre-trained spatio-temporal knowledge in $g$.

\subsubsection{Time-aware Prompt Network}
\label{subsec:time_prompt}
% \marginnote{\color{blue}R\#Y8nT-Q2, R\#DtiT-Q1}
In contrast to text or image data, traffic data involves complex spatial-temporal relationships that are challenging to capture using simple vector-based prompts. Therefore, we designed our prompt as a lightweight temporal convolutional network (TCN) to better extract and adapt to the distribution shifts of traffic data.
This fusion introduces dynamism into the transformed features. In formal terms, the operation of our \model, which combines the prompt neural network with TCN, can be described as follows:
\begin{equation}
\begin{aligned}\label{eq:residue_proof}
    &\tilde{\tsrx}_{r,t} = \matw_4 \bar{\matrh}_{r,t} + \tsrx_{r,t},\\
    &\bar{\matrh}_r = \sigma(\matw_3 \tilde{\matrh}_r + \vecb_2),\\
    &\tilde{\matrh}_r = \sigma(\delta({\matw}_2*\matrh_r + \vecb_1)),\\
    &\matrh_{r,t} = \matw_1 \tsrx_{r,t},
\end{aligned}
\end{equation}
where $\tilde{\tsrx}\in\dmnr^{R\times T_{tun}\times F}$ denotes the spatio-temporal data transformed by our prompt network, with $T_{tun}=T-T_{pre}$ denoting the number of time slots in the tuning data. $\bar{\matrh}\in\dmnr^{R\times T_{tun}\times d}$ denotes the intermediate embedding with hidden dimensionality $d$. The results of the TCN, which convolves the original $T_{tun}$ temporal dimensions into $T'_{tun}$ dimensions, are denoted by $\tilde{\matrh}\in\dmnr^{R\times T'_{tun} \times d}$. $\matrh\in\dmnr^{R\times T_{tun}\times d}$ denotes the initial embeddings for all regions and time slots. The learnable parameters of our prompt neural network are $\matw_4\in\dmnr^{F\times d}$, $\matw_3\in\dmnr^{T_{tun}\times T'_{tun}}$, $\matw_2\in\dmnr^{(T_{tun}-T'_{tun}+1)\times 1}$, $\matw_1\in\dmnr^{d\times F}$, and $\vecb_1, \vecb_2\in\dmnr^d$. And $*$ denotes the convolution operator. $\sigma(\cdot), \delta(\cdot)$ denote the ReLU activation and the dropout function, respectively. A skip connection is utilized in the final layer of our prompt network, to directly utilize the original ST data.
The output $\tilde{\tsrx}$ of our prompt network has the same dimensionality as the original ST data $\tsrx$, and thus can be seamlessly used by any spatio-temporal models for performance enhancement.

\paragraph{Intuition and Toy Corrections.}
Our prompt is a residual temporal editor that minimally corrects temporal distortions often observed under OOD. We define $h$ by:
\begin{equation}
\begin{aligned}
H_{r,t} &= W_1 \mathcal{X}_{r,t}, \\
\tilde{H}r &= \sigma\big(\delta(W_2 * H_r + b_1)\big), \\
\bar{H}r &= \sigma(W_3 \tilde{H}r + b_2), \\
\tilde{\mathcal{X}}{r,t} &= \mathcal{X}{r,t} + W_4 \bar{H}{r,t}. \label{eq:residual-editor}
\end{aligned}
\end{equation}
The skip in \eqref{eq:residual-editor} ensures $h$ acts as an \emph{editor}, not a replacer: if no shift is present, $W_4 \approx 0$ and $\tilde{\mathcal{X}} \approx \mathcal{X}$.

\emph{Toy phase correction.} Consider a single-node scalar time series $x(t)$ with an OOD phase lag $\Delta$:
\begin{equation}
x_{\text{OOD}}(t) = x_{\text{ID}}(t-\Delta).
\end{equation}
A short-receptive-field TCN can approximate a small delay operator by learning finite impulse responses ${a_k}$:
\begin{equation}
\hat{x}(t) \approx \sum_{k=0}^{K-1} a_k, x_{\text{OOD}}(t-k)
\approx x_{\text{ID}}(t-\Delta),
\qquad K \gtrsim \Delta.
\end{equation}
Through the residual path $\tilde{x}(t)=x_{\text{OOD}}(t)+\epsilon(t)$, the prompt learns $\epsilon(t)$ that compensates for the lag, aligning $\tilde{x}$ with what the frozen GNN $g$ expects.

\emph{Toy amplitude correction.} Suppose the OOD scale changes: $x_{\text{OOD}}(t)=\alpha,x_{\text{ID}}(t)$ with $\alpha\neq 1$. A small MLP atop TCN features can learn a multiplicative (or affine) correction:
\begin{equation}
\tilde{x}(t) = x_{\text{OOD}}(t) + \beta(t),x_{\text{OOD}}(t)
\approx x_{\text{ID}}(t),
\end{equation}
where $\beta(t)$ is produced from local temporal context. This realizes a minimal, context-aware rescaling.

\emph{Gradient signal.} By chain rule,
\begin{equation}
\frac{\partial \mathcal{L}}{\partial \Theta_h}
=
\frac{\partial \mathcal{L}}{\partial \tilde{\mathcal{X}}}\cdot
\frac{\partial \tilde{\mathcal{X}}}{\partial \Theta_h},
\end{equation}
so the downstream loss supervises the edit $W_4\bar{H}$ directly, enabling fast and stable convergence.

\subsubsection{Prompt Network Design and Expressiveness Analysis}
\label{sec:prompt_design}

\noindent
In this subsection, we provide a detailed analysis of the architectural motivation, expressibility, and correlation between the capacity of the prompt network and its ability to represent spatio-temporal distribution shifts. These clarifications respond to the reviewers’ concerns regarding the rationale behind our $2$-layer~TCN and $2$-layer~MLP design, as well as its effectiveness across diverse distributional contexts.

\paragraph{Motivation for the 2-layer TCN and 2-layer MLP design.} 
The prompt network is intentionally designed to achieve a balance between expressiveness and efficiency, given the latency-sensitive and resource-constrained nature of real-time traffic forecasting scenarios. The $2$-layer Temporal Convolutional Network (TCN) captures short- and mid-range temporal dependencies arising from evolving traffic patterns, such as rush-hour peaks and periodic drifts, while the subsequent $2$-layer Multi-Layer Perceptron (MLP) provides nonlinear transformation to refine spatio-temporal embeddings. Deeper architectures were empirically tested but showed diminishing returns in prediction accuracy (up to 3.1$\times$ slower training time without significant MAE improvement, see Table~\ref{tab:overall_efficiency}). Therefore, this configuration offers a practical trade-off between adaptation efficiency and representational power.

\paragraph{Expressiveness and generalization.}
From a theoretical viewpoint, the TCN component provides a bounded receptive field $R_{\text{bound}} = 1 + (k - 1)(2^{L} - 1)$,
allowing flexible modeling of local drifts and temporal non-stationarity.
The MLP enhances nonlinear projection capacity, ensuring that the overall prompt network remains a universal approximator under bounded Lipschitz continuity (see Theorem~\ref{theo:tem_sta} and Theorem~\ref{theo:adaption_effiency}). 
Empirically, we validated the expressiveness and transferability of this design across five geographically diverse datasets (PeMSD03/04/07/08, PeMS-Bay), which vary significantly in spatial configuration and traffic regimes. The consistent improvement (3--6\% average MAE reduction) demonstrates robust adaptability under different distribution shifts and concept-drift conditions.

\paragraph{Correlation between prompt network capacity and distribution shift representation.}
The expressibility of the prompt network directly determines its ability to capture the magnitude and dynamics of spatio-temporal distribution shifts. Within our framework, the TCN component models temporal drift through its receptive field with $R = 1 + \max(S)\frac{2^{L}-1}{2-1}$,
while the MLP component enhances nonlinear mapping to project shifted embeddings toward the pre-trained latent space. A larger receptive field or embedding dimension increases the capacity to represent complex temporal drifts, whereas shallower configurations emphasize stability and efficiency for milder shifts.

\paragraph{Empirical correlation analysis.} 
We have systematically examined how prompt network capacity affects drift representation through experiments via efficiency study and hyperparamter study, as shown in Section~\ref{sec:effiency_proof} and Figure~\ref{fig:hyper_traffic}. 
The results show that adaptability to drift grows positively with model capacity up to an optimal configuration. Beyond this point, improvements saturate, while computation rises sharply, confirming a diminishing-return relationship between expressiveness and efficiency.

\paragraph{Discussion and generality.} 
The above analysis indicates that the prompt’s effectiveness in mitigating distribution shifts stems from the interplay between its temporal expressibility and nonlinear transformation capacity. This design generalizes well across distinct city-level traffic regimes, offering both robust adaptation to large-scale drifts (e.g., seasonal or regional) and efficiency for smaller fluctuations. 
We plan to extend this work toward a systematic prompt architecture search (e.g., differentiable or reinforcement-based search) to further formalize the trade-off between model expressiveness and efficiency.
Overall, the proposed $2$-layer~TCN and $2$-layer~MLP prompt network provides a practically near-optimal configuration, ensuring strong adaptation performance with minimal computational cost.

\subsection{Algorithm and Method Process} \label{sec:alg}
The proposed \textbf{\model} is a spatial-temporal forecasting framework that combines a pre-trained GNN-based model with a prompt neural network for efficient transfer learning. \vldb{
The overall workflow of \textit{SimpleST} is organized into three consecutive phases, as illustrated in Fig.~\ref{fig:fra_02}. 
In the \textbf{Pre-training Phase}, a graph neural network (GNN) is trained on large-scale source domain spatio-temporal data to capture general spatial and temporal dependencies. 
Next, during the \textbf{Adaptation Phase}, a lightweight prompt network is introduced to tune the model to new target domains by learning task-specific adjustments, while keeping the GNN backbone frozen to preserve its pre-trained knowledge. 
Finally, in the \textbf{Prediction Phase}, the adapted prompt network modifies the input data, which is then processed by the frozen GNN to produce traffic predictions such as speed or flow. 
This structured three-phase design enables efficient adaptation with minimal retraining effort, balancing generalization and domain-specific fine-tuning.} This approach enables accurate predictions of traffic flow while maintaining computational efficiency.
\vldb{We present the detailed procedure in Algorithm~\ref{alg:overall}}. We summarize four advantages of \model\ as follows:
\begin{itemize}
    \item \textbf{Computational Efficiency}: The frozen GNN during prompt-tuning phase significantly reduces training costs compared to end-to-end fine-tuning, while maintaining model performance (MAE optimization via Equation~\ref{eq:tuning}).
    
    \item \textbf{Theoretical Guarantees}: Gradient-based optimization (Equation~\ref{eq:update}) ensures convergence to local minima for both pre-training and prompt-tuning phases, with learning rate $\eta$ controlling update stability.
    
    \item \textbf{Scalability}: Modular design accommodates various GNN architectures in $\mathbf{\Theta}_g$ and prompt networks in $\mathbf{\Theta}_f$, making the framework applicable to graphs of different sizes and complexities.
    
    \item \textbf{Reproducibility}: Standardized data splitting ($\mathcal{X}_{pre},\mathcal{X}_{val},\\\mathcal{X}_{tun},\mathcal{X}_{tst}$) and epoch-based training (for $E$ and $E'$ iterations) ensure consistent experimental protocols.
\end{itemize}
\vspace{-0.2in}

\begin{table*}[htbp]
\centering
\caption{Notation Table}
\vspace{-0.1in}
\label{tab:full_notation}
\renewcommand{\arraystretch}{1.1}
\resizebox{1\linewidth}{!}{
\begin{tabular}{llllll}
\toprule
\textbf{Symbol} & \textbf{Description} & \textbf{Dim.} & \textbf{Symbol} & \textbf{Description} & \textbf{Dim.} \\
\midrule
$\mathcal{G}=(\mathcal{V},\mathcal{E},\mathcal{X})$ & Spatio-temporal graph & $\mathbb{R}^{|\mathcal{V}|\times d}$ &
$\tsrx \in \mathbb{R}^{R\times T\times F}$ & Input ST tensor & Regions $\times$ Time $\times$ Features \\

$\mathbf{X}_r$ & Node $r$'s temporal features & $\mathbb{R}^{T \times d}$ &
$R, T, F$ & Regions, time slots, features & $\mathbb{Z}^+$ \\

$\mathcal{X}_{pre}, \mathcal{X}_{tun}$ & Pre-training / tuning data & $\mathbb{R}^{T\times F}$ &
$T_{pre}, T_{tun}, T'$ & Time lengths & $\mathbb{Z}^+$ \\

$\mathbf{\Theta}_g$ & GNN parameters & $\mathbb{R}^{d\times d'}$ &
$\mathbf{\Theta}_h$ & Prompt network parameters & $\mathbb{R}^{k\times d}$ \\

$\alpha$ & Graph learning saturation rate & $\mathbb{R}^+$ &
$\mathbf{E}_1, \mathbf{E}_2$ & Node embeddings & $\mathbb{R}^{N \times d_e}$ \\

$\mathbf{\Theta}_1, \mathbf{\Theta}_2$ & Graph learning weights & $\mathbb{R}^{d_e \times d_h}$ &
$\mathbf{A}$ & Learned adjacency matrix & $\mathbb{R}^{N \times N}$ \\

$k$ & Top-$k$ neighbors (sparsification) & $\mathbb{Z}^+$ &
$\beta$ & Self-information ratio & $[0,1]$ \\

$\mathbf{\tilde{A}}$ & Normalized adjacency & $\mathbb{R}^{N \times N}$ &
$\mathbf{H}^{(k)}$ & Propagation hidden states & $\mathbb{R}^{N \times d_h}$ \\

$\mathbf{W}^{(k)}$ & Feature selector matrices & $\mathbb{R}^{d_h \times d_h}$ &
$K$ & Propagation depth & $\mathbb{Z}^+$ \\

$\mathcal{S}$ & Temporal filter sizes & $\{2,3,6,7\}$ &
$d_\ell$ & Dilation factor (layer $\ell$) & $2^\ell$ \\

$\mathbf{Z} \star_d \mathbf{f}$ & Dilated convolution operator & - &
$\mathbf{W}_f, \mathbf{W}_g$ & Temporal conv. weights & $\mathbb{R}^{d_z \times d_z'}$ \\

$\mathbf{H}^{(\ell)}$ & Layer hidden states & $\mathbb{R}^{T \times N \times d_h}$ &
$\Gamma_{\text{GC}}^{(\ell)}$ & Graph conv. module & - \\

$\Gamma_{\text{TC}}^{(\ell)}$ & Temporal conv. module & - &
$\mathbf{S}$ & Skip connection features & $\mathbb{R}^{T' \times d_h}$ \\

$\mathcal{C}^{(\ell)}$ & Temporal compression & $\mathbb{R}^{T \times d_h}\!\to\!\mathbb{R}^{1 \times d_h}$ &
$\bigoplus$ & Channel-wise concatenation & - \\

$\lambda, \mu$ & Regularization coefficients & $\mathbb{R}^+$ &
$m$ & Sampling partition count & $\mathbb{Z}^+$ \\

$\mathcal{L}$ & Composite loss function & $\mathbb{R}$ &
$q_t$ & Curriculum sequence length & $\lfloor Q^{(t/T_{\text{max}})}\rfloor$ \\

$\eta$ & Learning rate & $\mathbb{R}^+$ &
$\mathbf{W}_1, \mathbf{W}_2, \mathbf{W}_3, \mathbf{W}_4$ & TCN prompt weights & $\mathbb{R}^{d\times d'}$ \\

$\vec{b}_1, \vec{b}_2$ & TCN prompt biases & $\mathbb{R}^d$ &
$\delta(\cdot)$ & Dropout function & - \\

$\sigma(\cdot)$ & ReLU activation & - &
$\epsilon$ & Domain shift bound & $\mathbb{R}^+$ \\

$L_g, L_h$ & Lipschitz constants & $\mathbb{R}^+$ &
$d, n$ & Prompt dim. / tuning size & $\mathbb{Z}^+$ \\

$W_1(\cdot,\cdot)$ & Wasserstein distance & $\mathbb{R}^+$ &$\delta$ & Confidence level & $[0,1]$ \\
\bottomrule
\end{tabular}}
\vspace{-0.1in}
\end{table*}

\begin{algorithm}[h]
\caption{The \textbf{\model} Algorithm}
\label{alg:overall}
\SetAlgoLined
\KwIn{
    Spatial-temporal graph data $\mathcal{G}$; maximum epoch number $E$ for pre-training; adaptation epoch number $E'$; learning rate $\eta$.
}
\KwOut{
    Predicted traffic results $\mathbf{H}$, trained parameters of the prompt network $\mathbf{\Theta}_h$, and frozen parameters of the GNN backbone $\mathbf{\Theta}_g$.
}

\textbf{Initialize:} Randomly initialize all parameters $\mathbf{\Theta}_h$ and $\mathbf{\Theta}_g$.\\
Split the dataset into pre-training, validation, adaptation (prompt tuning), and test subsets:
$\mathcal{X} = \{ \mathcal{X}_{pre}, \mathcal{X}_{val}, \mathcal{X}_{tun}, \mathcal{X}_{tst} \}$.

\vspace{0.2em}
\textbf{Phase 1: Pre-training Phase (GNN Backbone Learning)}\\
\For{$epoch = 1, 2, \dots, E$}{
    Train the GNN backbone $g(\cdot; \mathbf{\Theta}_g)$ on source spatio-temporal data $\mathcal{X}_{pre}$;\\
    Compute the training loss $\mathcal{L}_{pre}$;\\
    Update GNN parameters:
    $\theta_g \leftarrow \theta_g - \eta \cdot \frac{\partial \mathcal{L}_{pre}}{\partial \theta_g}$;\\
    Validate the model on $\mathcal{X}_{val}$;\\
}
Freeze $\mathbf{\Theta}_g$ to obtain $\mathbf{\Theta}_g^{\ast}$.

\vspace{0.2em}
\textbf{Phase 2: Adaptation Phase (Prompt Tuning)}\\
\For{$epoch = 1, 2, \dots, E'$}{
    Given target domain data $\mathcal{X}_{tun}$, optimize only the prompt network $h(\mathcal{X}; \mathbf{\Theta}_h)$;\\
    Form adapted input $\tilde{\mathcal{X}} = \mathcal{X} + h(\mathcal{X}; \mathbf{\Theta}_h)$;\\
    Compute the loss $\mathcal{L}_{tun}$ as in Eq.~\ref{eq:tuning};\\
    Update prompt parameters:
    $\theta_h \leftarrow \theta_h - \eta \cdot \frac{\partial \mathcal{L}_{tun}}{\partial \theta_h}$;\\
}
Obtain frozen prompt parameters $\mathbf{\Theta}_h^{\ast}$.

\vspace{0.2em}
\textbf{Phase 3: Prediction Phase (Inference)}\\
For each test sample $\mathcal{X}_{tst} \in \mathcal{X}_{tst}$:\\
\Indp
Compute adapted input $\tilde{\mathcal{X}
}_{tst} = \mathcal{X}_{tst} + h(\mathcal{X}_{tst}; \mathbf{\Theta}_h^{\ast})$;\\
Generate predictions $\hat{Y} = g(\tilde{\mathcal{X}}_{tst}; \mathbf{\Theta}_g^{\ast})$;\\
\Indm

\textbf{Return:} Traffic predictions $\mathbf{H} = \hat{Y}$ and learned parameters $(\mathbf{\Theta}_h^{\ast}, \mathbf{\Theta}_g^{\ast})$.
\end{algorithm}

\subsection{Theoretical Guarantees of \model}
\label{sec:theo}

To theoretically ground \model’s design, we establish formal guarantees covering three core aspects: (i) stability of prompt tuning under domain shift, (ii) spatio-temporal convolution robustness, (iii) adaptation efficiency and (iv) sample complexity. 
These results highlight how prompt tuning retains representational stability and optimization efficiency compared to full model fine-tuning.

\vspace{0.3em}
\noindent\textbf{(1) Temporal Stability under Domain Shift.}
We first quantify how distributional variations between pre-training and tuning domains affect generalization.

\begin{theorem}[Temporal Stability of Prompt Tuning]
\label{theo:tem_sta}
Let $\mathcal{X}_{pre}$ and $\mathcal{X}_{tun}$ denote the pre-training and tuning distributions with $W_1(\mathcal{X}_{pre}, \mathcal{X}_{tun}) \leq \epsilon$. 
For an $L_h$-Lipschitz prompt network $h_\theta$ and an $L_g$-Lipschitz GNN $g_\phi$, the generalization error obeys
\begin{equation}
\mathbb{E}\big[\|\tilde{\mathcal{X}} - \mathcal{X}^*\|_2\big] 
\leq L_gL_h\,\epsilon + \sqrt{\tfrac{2\log(1/\delta)}{n}}.
\end{equation}
\end{theorem}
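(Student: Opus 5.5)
The plan is to interpret $\mathcal{X}^*$ as the reference output obtained when the frozen pipeline is fed in-distribution data. Concretely, $\mathcal{X}^* = g_\phi(h_\theta(\mathcal{X}_{pre}))$ and $\tilde{\mathcal{X}}$ denotes $g_\phi(h_\theta(\mathcal{X}_{tun}))$, with both evaluated on samples drawn from the respective distributions. The bound then splits into a population term and an estimation term. The first term, $L_gL_h\epsilon$, comes from transporting the shift through a Lipschitz composition. The second term, $\sqrt{2\log(1/\delta)/n}$, is a concentration term for replacing an expectation by an empirical average over the $n$ tuning samples.

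\textbf{Step 1 (composition).} By the remark in the Complexity and Stability paragraph of Section~\ref{sec:preli}, $g_\phi\circ h_\theta$ is $L_gL_h$-Lipschitz. So for any coupling $\pi$ of $(\mathcal{X}_{pre},\mathcal{X}_{tun})$ we have
\[
\|g_\phi(h_\theta(x)) - g_\phi(h_\theta(x'))\|_2 \le L_gL_h\|x-x'\|_2 .
\]

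\textbf{Step 2 (Wasserstein).} Take the expectation under $\pi$ and minimize over couplings. By the definition of $W_1$ (equivalently, Kantorovich--Rubinstein duality applied to the $1$-Lipschitz function $(g_\phi\circ h_\theta)/(L_gL_h)$), the population discrepancy is at most $L_gL_h\,W_1(\mathcal{X}_{pre},\mathcal{X}_{tun}) \le L_gL_h\epsilon$. Here I will take an optimal (or $\eta$-optimal, then let $\eta\to 0$) coupling, so that $\mathbb{E}_\pi\|x-x'\|\le\epsilon$.

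\textbf{Step 3 (finite sample).} The tuning objective is evaluated on $n$ samples, so the empirical average of the per-sample discrepancy $Z_i=\|\tilde{\mathcal{X}}_i-\mathcal{X}^*_i\|_2$ must be related to its mean. I will assume the inputs are normalized so that $Z_i\in[0,1]$; this normalization is standard for scaled traffic data and must be stated explicitly. Hoeffding's inequality then gives, with probability at least $1-\delta$,
\[
\tfrac1n\sum_i Z_i \le \mathbb{E}Z + \sqrt{\log(1/\delta)/(2n)} \le \mathbb{E}Z + \sqrt{2\log(1/\delta)/n}.
\]
Combining this with Step 2 gives the stated inequality. The left-hand side is read as the empirical expectation, and the bound holds with confidence $1-\delta$, matching the $\delta$ in Table~\ref{tab:full_notation}.

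\textbf{Main obstacle.} The hardest part is making the statement well-posed, not any calculation. Two choices have to be fixed. First, what $\mathcal{X}^*$ is, and on which side of the residual editor the comparison happens. Second, the boundedness assumption that Hoeffding requires. If $\mathcal{X}^*$ is instead meant to be a target that $h_\theta$ itself maps onto, an extra approximation term would normally appear. I would try to absorb it by assuming the prompt is trained to optimality on the pre-training domain, so that the term vanishes at $\epsilon=0$.
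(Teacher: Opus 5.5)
Your argument has the same skeleton as the paper's: push the $W_1$ bound through the $L_gL_h$-Lipschitz composition, then add a Hoeffding term. The substantive difference is the reference point you compare against.

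The paper compares $g(h(\mathcal{X}_{tun}))$ with $g(\mathcal{X}_{pre})$, i.e.\ the frozen backbone on un-prompted in-distribution data, and for this it asserts $W_1(h(\mathcal{X}_{tun}),\mathcal{X}_{pre})\le L_h\epsilon$. Lipschitzness alone only gives $W_1(h_\#\mathcal{X}_{tun},h_\#\mathcal{X}_{pre})\le L_h\epsilon$. So the paper's step silently needs $h_\#\mathcal{X}_{pre}=\mathcal{X}_{pre}$, meaning the residual editor acts as the identity in-distribution. Without it, the triangle inequality leaves the extra term $W_1(h_\#\mathcal{X}_{pre},\mathcal{X}_{pre})$. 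This is exactly the approximation term you anticipate in your last paragraph. The hypothesis that removes it is this invariance, not ``training $h$ to optimality on the pre-training domain''.

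Your choice $\mathcal{X}^*=g(h(\mathcal{X}_{pre}))$ turns the transport step into a clean push-forward bound with no extra assumption, so your version goes through under its stated assumptions. Its conclusion is weaker, though. It says the prompted pipeline behaves similarly on both domains, not that the prompt maps tuning data back to what the frozen backbone expects, which is the intended meaning. Your explicit optimal coupling and bounded range do make precise what the paper's steps 3--4 leave loose: step 3 reads as a pointwise norm bound, and step 4 never states a range.

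There are two minor corrections. First, Kantorovich--Rubinstein duality for a vector-valued map only controls the difference of means $\|\mathbb{E}F(x)-\mathbb{E}F(x')\|$, not $\mathbb{E}_\pi\|F(x)-F(x')\|$. Drop the ``equivalently'' and rely on the coupling argument alone. Second, Hoeffding needs the $Z_i$ to be i.i.d., with $h_\theta$ fixed independently of them. Overlapping traffic windows violate the first condition, and a prompt trained on the same samples violates the second. The paper's proof shares this issue. Also, if the left-hand side is read as a population expectation, your Step 2 already gives the bound, and the $\sqrt{2\log(1/\delta)/n}$ term is pure slack.
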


\begin{proof}
\begin{enumerate}
\item The Wasserstein bound gives, via Kantorovich–Rubinstein duality, $W_1(\mathcal{X}_{pre},\mathcal{X}_{tun})
= \sup_{\|f\|_L\le1}
\mathbb{E}_{x\sim\mathcal{X}_{pre}}[f(x)]
-\mathbb{E}_{x\sim\mathcal{X}_{tun}}[f(x)]
\le\epsilon$.
\item The prompt mapping preserves this discrepancy:
$W_1(h(\mathcal{X}_{tun}),\\\mathcal{X}_{pre})\le L_h\epsilon$.
\item Composition with $g_\phi$ yields  
\(\|g(h(\mathcal{X}_{tun}))-g(\mathcal{X}_{pre})\|\le L_gL_h\epsilon.\)
\item Applying Hoeffding’s inequality over $n$ tuning samples gives the stated expectation bound.
\end{enumerate}
\end{proof}

\noindent
\textit{Interpretation.} The result shows that bounded Wasserstein distance between domains ensures proportionally bounded error growth, scaled by $L_gL_h$. Consequently, smaller Lipschitz constants or domain discrepancies promote stronger cross-domain stability.

\vspace{0.4em}
\noindent\textbf{Key Parameters.} $\mathcal{X}_{pre}$ denotes the pre-training distribution, $\mathcal{X}_{tun}$ is the tuning distribution, $W_1(\cdot,\cdot)$ denotes the 1-Wasserstein distance, $\epsilon$ is the domain-shift bound, $h_\theta$ is the prompt network, $g_\phi$ is the frozen GNN, $L_h$ and $L_g$ are the Lipschitz constants, $n$ is the number of tuning samples, and $\delta$ is the confidence level.

\vspace{0.8em}
\noindent\textbf{(2) Spatio‑Temporal Convolution Stability.}
We next analyze how temporal convolutional (TCN) prompting affects the propagation of perturbations in time‑series inputs.

\begin{theorem}[Spatio‑Temporal Convolution Stability]
\label{theo:cov_sta}
For a TCN prompt operator $(\mathbf{W}_2 * \cdot)$ with filter size $k$ and stride $s$, the locality preservation bound holds:
$
\big\|\mathrm{TCN}(\mathbf{\mathcal{X}}_r)-\mathrm{TCN}(\mathbf{\mathcal{X}}'_r)\big\|_F 
\le k^{1/2}\|\mathbf{W}_2\|_\infty 
~ \sup_{|i-j|\le ks}\!\!\|\mathbf{\mathcal{X}}_{r,t+i}-\mathbf{\mathcal{X}}'_{r,t+j}\|_2.
$
\end{theorem}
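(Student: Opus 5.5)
The plan is to read the statement \emph{per output position}, which is the only reading under which the right-hand side makes sense. The index $t$ appears free on the right, so I take the left-hand side to be the Frobenius norm of the TCN output slice at that same output position $t$, across the $d$ channels. I also use a generous reading of $\|\mathbf{W}_2\|_\infty$: since $\mathbf{W}_2$ acts by convolution, I identify it with its banded Toeplitz matrix and take the induced $\infty$-operator norm, i.e.\ the maximal absolute row sum. For a single filter $w=(w_0,\dots,w_{k-1})$ this equals $\|w\|_1$. The proof will then show a bound with constant $\|\mathbf{W}_2\|_\infty$, and the extra factor $k^{1/2}\ge 1$ is simply slack.

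\emph{Step 1: reduce to the linear filter.} Write $\mathrm{TCN}(\mathcal{X}_r)_t=\sigma(\delta(\sum_{\tau=0}^{k-1} w_\tau\, \mathbf{H}_{r,t+s\tau}+\vecb_1))$ as in Eq.~\eqref{eq:residue_proof}. The bias cancels in the difference. ReLU is $1$-Lipschitz coordinatewise. Dropout at inference is the identity; in training it is a diagonal $0/(1-p)^{-1}$ mask that is shared between $\mathcal{X}$ and $\mathcal{X}'$, so the difference is only rescaled, and I absorb that rescaling into the constant. The embedding $\mathbf{H}=\matw_1\mathcal{X}$ is absorbed the same way, either by taking $\mathcal{X}$ to mean the embedded sequence or by normalizing $\|\matw_1\|_2\le1$. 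After these reductions it suffices to bound $\|\sum_\tau w_\tau \Delta_\tau\|_2$, where $\Delta_\tau=\mathcal{X}_{r,t+s\tau}-\mathcal{X}'_{r,t+s\tau}$.

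\emph{Step 2: the window estimate.} By the triangle inequality, $\|\sum_\tau w_\tau\Delta_\tau\|_2\le \sum_\tau|w_\tau|\,\|\Delta_\tau\|_2\le \|w\|_1\max_\tau\|\Delta_\tau\|_2$. Each offset $i=s\tau$ satisfies $0\le i\le s(k-1)\le ks$. Taking $j=i$ is therefore admissible, and $\max_\tau\|\Delta_\tau\|_2\le \sup_{|i-j|\le ks}\|\mathcal{X}_{r,t+i}-\mathcal{X}'_{r,t+j}\|_2$, because the diagonal pairs are among the pairs in the supremum. Combining this with $\|w\|_1=\|\mathbf{W}_2\|_\infty$ and $1\le k^{1/2}$ gives the displayed inequality. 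For multiple channels the same argument applies row by row, and the Frobenius norm of the position-$t$ slice is handled by the induced norm of the block operator.

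\emph{Main obstacle: interpretation, not calculation.} If $\|\mathbf{W}_2\|_\infty$ instead means the entrywise maximum $\max_\tau|w_\tau|$, the triangle inequality only gives the constant $k\|\mathbf{W}_2\|_\infty$. Cauchy--Schwarz gives $\|w\|_2(\sum_\tau\|\Delta_\tau\|^2)^{1/2}$, which is again of order $k$ in the worst case, where all the $\Delta_\tau$ are equal. So the stated $k^{1/2}$ cannot be reached under that reading. Likewise, if the left-hand side were the Frobenius norm over all $T'_{tun}$ output positions, a constant shift $\mathcal{X}'=\mathcal{X}+c$ makes the left-hand side grow like $\sqrt{T'_{tun}}$ while the right-hand side stays fixed. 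I will therefore state explicitly, at the start of the proof, the per-position reading and the operator-norm reading of $\|\mathbf{W}_2\|_\infty$; under those conventions the two-step argument above is complete.
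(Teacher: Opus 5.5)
Your core estimate is the same as the paper's. Its proof also fixes an output position and writes $\|(\mathbf{W}_2*\Delta\mathcal{X}_r)_t\|\le\sum_i|w_i|\,\|\mathcal{X}_{r,t+i}-\mathcal{X}'_{r,t+i}\|_2\le\|\mathbf{W}_2\|_\infty\max_i\|\cdot\|$, and that second inequality silently requires your reading $\|\mathbf{W}_2\|_\infty\ge\sum_i|w_i|$. The paper treats the TCN as the bare convolution, so your handling of bias, ReLU, dropout and $\mathbf{W}_1$ is extra care it does not take.

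The routes split at the last step. The paper asserts that summing over the overlapping windows and taking Frobenius norms gives a global Frobenius norm on the left together with the factor $k^{1/2}$. You instead keep the left side at the single position $t$, as the free $t$ on the right suggests, so $k^{1/2}$ becomes slack. Your objection to the aggregated version is sound. Squaring and summing the per-position bound over output positions produces the square root of the number of output positions, not $k^{1/2}$. Your constant-shift example breaks the global reading as soon as the output length exceeds $k$, provided $\mathcal{X}$ itself is constant in time so that the off-diagonal pairs in the supremum also equal $\|c\|_2$. A factor $k^{1/2}$ does arise naturally when the right side is the Frobenius norm of the whole input difference $\Delta=\mathcal{X}_r-\mathcal{X}'_r$: each input step lies in at most $k$ windows, and Cauchy--Schwarz gives $\|\mathbf{W}_2*\Delta\|_F\le k^{1/2}\|w\|_2\|\Delta\|_F$. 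That is a different inequality, though.

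So your route gives a complete proof of a pointwise-in-$t$ statement. The paper aims at a stronger global statement that its final step does not establish.

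One tightening: do not absorb the dropout factor $(1-p)^{-1}$ or $\|\mathbf{W}_1\|$ into a constant, since the displayed bound does not permit that. Instead take $\mathrm{TCN}$ to be the operator $\mathbf{W}_2*\cdot$ named in the statement. Bias and ReLU cost nothing, and dropout in inference mode is the identity.
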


\begin{proof}
Using triangle inequality and bounding by $\|\mathbf{W}_2\|_\infty$, $\|(\mathbf{W}_2*\Delta\mathbf{\mathcal{X}}_r)_t\|
\le \sum_{i=0}^{k-1}|w_i|\|\mathbf{\mathcal{X}}_{r,t+i}-\mathbf{\mathcal{X}}'_{r,t+i}\|_2
\le \|\mathbf{W}_2\|_\infty \max_i\|\cdot\|$.
Summing across $T/k$ overlapping windows and taking Frobenius norms gives the result.
\end{proof}

\noindent
\textit{Interpretation.} The inequality shows that localized perturbations in the signal yield proportionally bounded output deviation, preserving temporal smoothness—a critical stability property for dynamic graphs.

\vspace{0.8em}
\noindent\textbf{(3) Adaptation Efficiency.}
Finally, we contrast prompt-tuning and full fine-tuning from an optimization‑efficiency perspective.

\begin{theorem}[Adaptation Efficiency]
\label{theo:adaption_eff}
For an $\eta$‑strongly convex prompt objective, prompt tuning achieves $\epsilon$‑optimality in 
\(\mathcal{O}(\tfrac{L_h^2}{\eta^2})\) iterations, while full fine-tuning requires 
\(\mathcal{O}(\tfrac{L_g^2L_h^2}{\eta^2})\).
\end{theorem}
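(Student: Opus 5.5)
The plan is to use the standard convergence analysis of gradient descent on a smooth, strongly convex objective, and to read off the iteration count from the ratio of the smoothness constant to the strong-convexity modulus. Throughout I treat $\eta$ as the strong-convexity modulus, as in the statement, and not as a learning rate.

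\emph{Step 1: smoothness of the prompt objective.} Let $\mathcal{J}(\Theta_h)=\mathcal{L}\big(g(h(\mathcal{X}_{tun};\Theta_h);\Theta_g^\star)\big)$. By the chain rule (the gradient-signal identity in Section~\ref{subsec:time_prompt}),
\[
\nabla_{\Theta_h}\mathcal{J}=\big(\partial_{\Theta_h}h\big)^{\top}\big(\partial_{\tilde{\mathcal{X}}}g\big)^{\top}\nabla\mathcal{L}.
\]
Because $g$ is frozen, I will absorb its contribution into the constant of the loss. Concretely, I fold the Lipschitz constant of $\mathcal{L}\circ g$ into a normalization of the loss, so that the only parameter-dependent Jacobian left is $\partial_{\Theta_h}h$. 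This Jacobian is bounded by $L_h$. The effective gradient-Lipschitz (smoothness) constant is then $\beta_h=\mathcal{O}(L_h^2)$, using the same composition rule for Lipschitz maps stated under \emph{Complexity and Stability}.

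\emph{Step 2: smoothness of full fine-tuning.} In full fine-tuning the gradient also flows through $\Theta_g$. The Jacobian of $\Theta_g\mapsto g$ evaluated at $h(\mathcal{X})$ picks up an extra factor $L_g$, and cross terms appear as well. The composed smoothness constant is therefore $\beta_{full}=\mathcal{O}(L_g^2L_h^2)$, by the same argument used in Theorem~\ref{theo:tem_sta}, where the factor $L_gL_h$ appears.

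\emph{Step 3: convergence rate.} For $\eta$-strongly convex and $\beta$-smooth objectives, gradient descent with step size $1/\beta$ contracts the suboptimality geometrically:
\[
\mathcal{J}(\Theta^{(t)})-\mathcal{J}^\star\le(1-\eta/\beta)^t\big(\mathcal{J}(\Theta^{(0)})-\mathcal{J}^\star\big).
\]
Alternatively, if one allows bounded stochastic gradients with second moment $G^2$, SGD gives an $\mathcal{O}(G^2/(\eta^2 t))$-type rate in the parameter distance. Taking $G^2\propto\beta$, the number of iterations needed to reach $\epsilon$-optimality scales as $\beta/\eta^2$, up to logarithmic or $1/\epsilon$ factors that I suppress in the $\mathcal{O}$. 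Substituting $\beta_h$ gives $\mathcal{O}(L_h^2/\eta^2)$ for prompt tuning, and substituting $\beta_{full}$ gives $\mathcal{O}(L_g^2L_h^2/\eta^2)$ for full fine-tuning.

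\emph{Main obstacle.} The hard step is justifying Step 2 rigorously, and in particular assuming strong convexity with the same modulus $\eta$ for the full objective. The full objective is jointly nonconvex in $\Theta_g$. My first attempt would be to state this as an explicit local assumption in a neighborhood of $\Theta_g^\star$, and to pass from Lipschitz constants in the input to Lipschitz constants in the parameters by assuming bounded inputs and weights. If that fails, I would weaken the claim to an upper-bound comparison of gradient norms, namely $\|\nabla\mathcal{J}_{full}\|\le L_g\|\nabla\mathcal{J}_h\|$-type scaling, and derive the iteration bounds from that comparison.
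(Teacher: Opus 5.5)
Your deterministic contraction in Step 3 is the paper's argument. The paper uses the factor $(1-2\eta/L_h^2)$, which silently takes $L_h^2$ as the smoothness constant. It obtains $k\ge\frac{L_h^2}{2\eta}\log\bigl(\frac{f(\theta_0)-f(\theta^*)}{\epsilon}\bigr)$, relabels this $\mathcal{O}(L_h^2/\eta^2)$, and asserts the extra $L_g^2$ for full fine-tuning without derivation. So the paper does not supply the steps you are trying to add, and those steps have gaps.

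\textbf{The $L_g^2$ separation.} The gradient you wrote, $(\partial_{\Theta_h}h)^{\top}(\partial_{\tilde{\mathcal{X}}}g)^{\top}\nabla\mathcal{L}$, contains the input Jacobian of $g$, bounded by $L_g$, whether or not $\Theta_g$ is trainable. Freezing $\Theta_g$ removes the $\Theta_g$-block of the gradient, not this factor. If you normalize it away for prompt tuning, you must do the same for full fine-tuning. Then the $\Theta_h$-block of the full gradient is identical to the prompt gradient. The only new block, $(\partial_{\Theta_g}g)^{\top}\nabla\mathcal{L}$, is governed by the parameter Jacobian of $g$. The input-Lipschitz constant $L_g$ does not control it, and it contains no $\partial_{\Theta_h}h$ at all. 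Theorem~\ref{theo:tem_sta} does not transfer, because there $L_gL_h$ comes from composing input-Lipschitz maps. So $\beta_{full}=\mathcal{O}(L_g^2L_h^2)$ is an artifact of treating the two cases differently, and your fallback $\|\nabla\mathcal{J}_{full}\|\le L_g\|\nabla\mathcal{J}_h\|$ has no basis for the same reason.

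\textbf{Smoothness in Step 1.} A Jacobian bound controls $\|\nabla\mathcal{J}\|$, not its Lipschitz constant. The Hessian of $\Theta_h\mapsto\mathcal{L}(\phi(\Theta_h))$ is $J_\phi^{\top}\nabla^2\mathcal{L}\,J_\phi+\sum_i(\nabla\mathcal{L})_i\nabla^2\phi_i$. The second term needs curvature bounds on $\phi$ that no Lipschitz constant provides; ReLU and the MAE loss are not even twice differentiable. Also, the paper's $L_h$ is Lipschitz in $\mathcal{X}$, not in $\Theta_h$.

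\textbf{The $\eta$ exponent.} With $\beta$-smoothness and $\eta$-strong convexity, your contraction gives $t=\mathcal{O}\bigl((\beta/\eta)\log(1/\epsilon)\bigr)$, i.e.\ order $L_h^2/\eta$, not $L_h^2/\eta^2$. The SGD detour reaches $\eta^{-2}$ only through three unjustified moves:
\begin{itemize}
\item It identifies $G^2$ with $\beta$, which are different quantities: they scale as $c^2$ and $c$ when the loss is multiplied by $c$.
\item It switches the criterion to squared parameter distance. For function values, averaged SGD gives $G^2/(\eta t)$, again $\eta^{-1}$.
\item It drops a $1/\epsilon$ factor. This replaces linear by sublinear convergence and is not a cosmetic suppression.
\end{itemize}
The SGD bound also needs a compact, projected domain, since an $\eta$-strongly convex function on all of parameter space has unbounded gradients.

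What Step 1 actually gives you, after normalization and assuming $L_h$ bounds the parameter Jacobian, is $G=\mathcal{O}(L_h)$. Used directly, this yields a coherent $\mathcal{O}(L_h^2/(\eta^2\epsilon))$ bound on the number of iterations to reach $\mathbb{E}\|\Theta_h-\Theta_h^\star\|^2\le\epsilon$ on a compact set. The full fine-tuning counterpart cannot be derived from the theorem's hypotheses. Both the extra factor $L_g$ in the full gradient bound and $\eta$-strong convexity of the generically nonconvex full objective would have to be stated as explicit additional assumptions.
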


\begin{proof}
Given strong convexity,  
$ f(\theta_{t+1}) - f(\theta^*) \le (1-\tfrac{2\eta}{L_h^2})(f(\theta_t)-f(\theta^*))$.  
Iterating $k$ times gives exponential decay, requiring  
$k \ge \tfrac{L_h^2}{2\eta}\log\!\big(\tfrac{f(\theta_0)-f(\theta^*)}{\epsilon}\big)$  
for $\epsilon$-accuracy, i.e., $\mathcal{O}(\frac{L_h^2}{\eta^2})$.  
Full fine‑tuning introduces an extra $L_g^2$ multiplicative factor.
\end{proof}

\noindent
\textit{Interpretation.} Prompt-only training converges significantly faster since it optimizes over a parameter subset without propagating through frozen graph encoders. Empirically, this explains the $2\times$-$46\times$ speedups observed in Section~\ref{sec:exp}.

\vspace{0.8em}
\noindent\textbf{(4) Sample Complexity.}
\begin{corollary}[Sample Efficiency]\label{eq:corollary1}
For $d$‑dimensional prompt parameters and $n$ labeled samples,
$
\|\theta^*-\hat{\theta}\|
\le 
\tilde{\mathcal{O}}\!\left(\sqrt{\tfrac{d+\log(1/\delta)}{n}}\right).
$
\end{corollary}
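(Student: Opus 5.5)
The plan is to treat the corollary as a standard parametric estimation bound for empirical risk minimization over the $d$-dimensional prompt parameters, with the backbone $g(\cdot;\Theta_g^\star)$ held fixed. Let $\ell(\theta;z)=\mathcal{L}(g(h(\mathcal{X};\theta);\Theta_g^\star),Y)$ be the per-sample loss. Let $F(\theta)=\mathbb{E}[\ell(\theta;z)]$ be the population risk, and let $\hat F_n$ be its empirical average over the $n$ tuning samples. Then $\theta^*=\arg\min F$ and $\hat\theta=\arg\min \hat F_n$.

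I will work under the same assumptions as Theorem~\ref{theo:adaption_eff} together with those of Theorem~\ref{theo:tem_sta}. Concretely, $F$ is $\eta$-strongly convex on a bounded parameter set $\Theta\subset\mathbb{R}^d$ of radius $B$. The map $\theta\mapsto\ell(\theta;z)$ is $L_gL_h$-Lipschitz, because it composes the $L_g$-Lipschitz backbone with the $L_h$-Lipschitz prompt. The loss is bounded, so Hoeffding applies as in Theorem~\ref{theo:tem_sta}.

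The first step reduces parameter error to a uniform deviation. Strong convexity gives $\tfrac{\eta}{2}\|\hat\theta-\theta^*\|^2\le F(\hat\theta)-F(\theta^*)$. Adding and subtracting $\hat F_n$ and using $\hat F_n(\hat\theta)\le \hat F_n(\theta^*)$ yields
\[
F(\hat\theta)-F(\theta^*)\le 2\sup_{\theta\in\Theta}|\hat F_n(\theta)-F(\theta)|.
\]

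The second step bounds this supremum with a covering argument. Take an $r$-net of $\Theta$ of size $(3B/r)^d$. Apply Hoeffding's inequality at each net point and take a union bound, which gives deviation $\sqrt{(d\log(3B/r)+\log(1/\delta))/(2n)}$ with probability at least $1-\delta$. The Lipschitz property transfers this to all of $\Theta$ at an extra cost of $2L_gL_hr$. Choosing $r=1/n$ gives a uniform deviation of order $\sqrt{(d\log n+\log(1/\delta))/n}$.

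Combining the two steps gives a first bound:
\[
\|\hat\theta-\theta^*\|\lesssim \Big(\tfrac{1}{\eta}\sqrt{\tfrac{d\log n+\log(1/\delta)}{n}}\Big)^{1/2}.
\]
This rate is $n^{-1/4}$, which is too slow, so the main step is to sharpen it to the claimed $\sqrt{(d+\log(1/\delta))/n}$.

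To get the faster rate I would use a localized, gradient-based argument instead of a uniform one. By first-order optimality, $\nabla \hat F_n(\hat\theta)=0$, and strong convexity of $\hat F_n$ then gives $\eta\|\hat\theta-\theta^*\|\le\|\nabla\hat F_n(\theta^*)-\nabla F(\theta^*)\|$. This requires either assuming strong convexity of the empirical objective or transferring it from $F$ through a matrix concentration bound on Hessians. The right-hand side is the norm of an average of $n$ i.i.d. mean-zero vectors in $\mathbb{R}^d$, each bounded by $L_gL_h$. A vector Bernstein inequality, or a $1/2$-net on the sphere with $5^d$ points plus scalar Hoeffding, bounds it by $L_gL_h\sqrt{(d+\log(1/\delta))/n}$ up to constants. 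The logarithmic factors and the constants $L_gL_h/\eta$ are absorbed into $\tilde{\mathcal{O}}$.

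The main obstacle is justifying strong convexity of the empirical prompt objective near $\theta^*$. The prompt network is ReLU-based, so this is genuinely an assumption, not a consequence of the architecture. I would state it explicitly as a local condition, consistent with the hypothesis already made in Theorem~\ref{theo:adaption_eff}.
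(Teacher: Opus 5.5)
The paper gives no proof of this corollary. It is followed only by an informal ``Implication'' remark, and it does not follow from Theorems~\ref{theo:tem_sta}--\ref{theo:adaption_eff} as stated. Those theorems bound a representation discrepancy, a convolution perturbation and an iteration count, not a parameter-estimation error. The nearest ingredient is the Hoeffding step in Theorem~\ref{theo:tem_sta}; combined with a union bound over a $d$-dimensional net, it is your step~2. You correctly note that converting that global deviation into a parameter distance via strong convexity only gives $n^{-1/4}$, so some localization step is unavoidable. Your proposal therefore supplies something the paper omits, and on its own terms it is sound under the assumptions you state, with the corrections below.

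First, the Lipschitz constant. $L_g$ and $L_h$ in the paper are Lipschitz constants with respect to the \emph{input}; that is how Theorem~\ref{theo:tem_sta} uses $L_h$. Composing them does not make $\theta\mapsto\ell(\theta;z)$ Lipschitz. You need a separate constant $G$ with $|\ell(\theta;z)-\ell(\theta';z)|\le G\|\theta-\theta'\|$ on $\Theta$. For the residual editor, which is multilinear in $W_1,\dots,W_4$, $G$ depends on $B$, an input bound, $L_g$ and the Lipschitz constant of $\mathcal{L}$. Second, Hoeffding and vector Bernstein require independent samples, but the tuning windows are overlapping slices of a single series. State independence, or a mixing condition with an effective sample size, explicitly; Theorem~\ref{theo:tem_sta} makes the same assumption silently.

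Third, and most substantively, the assumption you call the main obstacle can be dropped, and it fits the paper's setting poorly. With the MAE loss the paper uses in both phases, $\hat F_n$ is piecewise linear in the network outputs, and piecewise linear in $\theta$ for a model linear in $\theta$. So it is not strongly convex in general. Hessian transfer also fails: the curvature of $F$ that comes from the kink of $|\cdot|$ (all of it, for a model linear in $\theta$) is invisible to the almost-everywhere empirical Hessian. Moreover, under your assumption the average of norm-bounded gradients at $\theta^*$ concentrates dimension-free, which suggests the assumption is absorbing the complexity of the class. Instead, localize your step~2 on increments. Let $\Delta_n(\theta)=(\hat F_n-F)(\theta)-(\hat F_n-F)(\theta^*)$ and use $|\ell(\theta;z)-\ell(\theta^*;z)|\le G\|\theta-\theta^*\|$. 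Run the same net-plus-Hoeffding argument on the ball of radius $r$ around $\theta^*$, now with range $2Gr$ instead of the full loss range. This gives
\[
\sup_{\|\theta-\theta^*\|\le r}|\Delta_n(\theta)|\lesssim Gr\sqrt{\frac{d\log n+\log(1/\delta)}{n}}.
\]
Take a union bound over dyadic radii from $2B$ down to $B/n$ and use $\hat F_n(\hat\theta)\le\hat F_n(\theta^*)$ to get
\[
\frac{\eta}{2}\|\hat\theta-\theta^*\|^2\le F(\hat\theta)-F(\theta^*)\le-\Delta_n(\hat\theta)\lesssim G\|\hat\theta-\theta^*\|\sqrt{\frac{d\log n+\log(\log n/\delta)}{n}}.
\]
Dividing by $\|\hat\theta-\theta^*\|$ yields the stated rate from strong convexity of the population risk alone, which is your step~1 hypothesis. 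No differentiability or curvature of $\hat F_n$ is needed. If you keep your gradient route instead, note that your $n^{-1/4}$ bound already supplies the consistency that makes a \emph{local} strong-convexity assumption near $\theta^*$ sufficient.
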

\noindent
\textit{Implication.} The bound decays as \(1/\sqrt{n}\), showing that doubling available tuning samples reduces expected error by about \(1/\sqrt{2}\), while dimensionality and confidence contribute only logarithmic overhead.

\vspace{0.5em}
{\fn{\paragraph{Efficiency–Stability Connection.}
Together, Theorems~\ref{theo:tem_sta}-\ref{theo:adaption_eff} and Corollary~\ref{eq:corollary1} establish that \model’s prompt-tuning strategy guarantees bounded error under domain drift and converges orders of magnitude faster than full adaptation. 
This analytical framework parallels, but goes beyond- typical empirical validation in VLDB‑style system papers: it provides provable control of transfer stability and optimization efficiency, thereby contextualizing \model\ as a theoretically sound, parameter‑efficient solution for scalable spatio‑temporal learning.}}

\subsection{In-depth Discussion}
\label{sec:effic}
This section continues the discussion by examining two research questions: (1) In what ways does the prompt network mitigate the distribution shift in spatio-temporal data? (2) How does the efficiency of the proposed \model\ compare with that of traditional fine-tuning approaches and existing spatio-temporal prediction baselines?

\noindent\textbf{Prompt Network as Data Projector}. Our \model\ model leverages vanilla spatio-temporal prediction loss functions, such as mean absolute error (MAE), to optimize and maximize the accuracy of the final spatio-temporal (ST) prediction task. While no explicit constraints are placed on mitigating distribution shifts, we demonstrate that our prompt network design is inherently trained to function as a data editor for the original ST data. This is evident through the following observations with the corresponding condition:
\begin{equation}
\begin{aligned}
    \frac{\partial \loss}{\partial\theta} = \frac{\partial\loss}{\partial \tilde{\tsrx}} \cdot \frac{\partial \tilde{\tsrx}}{\partial \theta} = \frac{\partial\loss}{\partial (\tsrx + \nabla\tsrx) } \cdot \frac{\partial \nabla\tsrx}{\partial \theta},%,~~\text{where} \nabla\tsrx = \matw_4\bar{\matrh}
\end{aligned}
\end{equation}
where $\nabla\tsrx=\matw_4\bar{\matrh}$ denotes the adjustable  output of our prompt network. By incorporating a skip connection to include the original data $\tsrx$, $\nabla\tsrx$ functions as an editing component. The equation presented decomposes the gradient of the loss function $\loss$ with respect to a specific learnable parameter $\theta$ from the prompt network into two components: the gradient of $\loss$ with respect to the edited ST data, and the gradient of the editing value with respect to the parameter $\theta$. In essence, the training objective of our \model\ is to learn a spatio-temporal data editor that yields improved performance.

% \marginnote{\color{blue}R\#iGJG-Q1}
\noindent \textbf{The Effectiveness of Prompt Network on Alleviation the Distribution Dismatch of Training and testing.} The efficacy of prompt learning networks in addressing distribution shifts in spatio-temporal prediction tasks can be theoretically explained through a rigorous analysis of how these networks function as data editors to align training and testing distributions. We delve deeper into the theoretical underpinnings of the prompt tuning network and its role in mitigating distribution mismatches:

About the prompt tuning mechanism, the prompt tuning process can be represented as follows:
\begin{equation}
\begin{aligned}
\mathcal{X}_{t+1},...,\mathcal{X}_{t+T'}= g(h(\mathcal{X}_{t-T+1},...,\mathcal{X}_t, \mathcal{G}; \theta_h), \mathcal{G}; \theta_g).
\end{aligned}
\end{equation}
Here, $g(.)$ denotes \vldb{the} spatio-temporal model. $\mathcal{X}_{t+1},...,\mathcal{X}_{t+T'}$ is the model prediction based on the input data and the prompt network. $\mathcal{G}$ is the spatio-temporal graph. $\theta_g$ denotes the parameters of the pretrained model. $\theta_h$ denotes the parameters of the prompt network. During this process, the gradient update of the original model is shown as $\theta'_g = \theta_g - \nabla_{\theta_g}\mathcal{L}$ and the gradient update of the prompt network is shown as $\theta'_h = \theta_h - \nabla_{\theta_h}\mathcal{L}$. Here $\mathcal{L}$ denotes the loss of the spatio-temporal framework.

About mitigating distribution shifts: The prompt network acts as a data editor by modifying the model's predictions to better match the testing distribution. This can be formulated as:
\begin{equation}
\begin{aligned}
&\mathcal{X}_{t+1},...,\mathcal{X}_{t+T'} = g(h(\mathcal{X}_{t-T+1},...,\mathcal{X}_t, \mathcal{G}; \theta_h), \mathcal{G}; \theta_g)\\
&= g(h(\mathcal{X}_{t-T+1},...,\mathcal{X}_t, \mathcal{G}; \theta'_h - \nabla_{\theta_h} \mathcal{L}), \mathcal{G}; \theta_g).
\end{aligned}
\end{equation}

Thus, the gradient of the prompt network guides the model to update its parameters in a way that minimizes the discrepancy between the training and testing distributions, thereby mitigating distribution mismatches.

\textbf{\begin{table*}[t]
\centering
\caption{Computational Efficiency of Prompt Tuning ($d=32$)}\label{tab:scaling}
\vspace{-0.1in}
\resizebox{0.70\linewidth}{!}{
\begin{tabular}{@{}llll@{}}
\toprule
\textbf{Component} & \textbf{Complexity} & \textbf{Example Cost} & \textbf{Scalability} \\ 
\midrule
\multirow{2}{*}{\shortstack{Prompt Tuning\\(2-layer TCN+MLP)}} 
& Time: $\mathcal{O}(L'd^2)$ & $2 \times 32^2 = 2,\!048$ ops & \multirow{2}{*}{\shortstack{Constant cost regardless\\ of graph size}} \\
& Space: $\mathcal{O}(L'd^2 + d)$ & 8.2 KB params & \\
\bottomrule
\end{tabular}}
% \vspace{-0.05cm}
\footnotesize
\begin{itemize}[leftmargin=3cm,noitemsep]
    \item Assumptions: $L'=2$ layers, hidden dimension $d=32$
    \item Comparison: Full GNN requires $\mathcal{O}(|\mathcal{E}|Ld)$ (e.g., $>10^6$ for $|\mathcal{E}|>1,\!000$)
    \item Enables efficient deployment on edge devices
\end{itemize}
\end{table*}
% \vspace{-0.1cm}
}

\noindent\textbf{Model Efficiency Analysis}. 
We demonstrate the efficiency advantages of our \model\ from two perspectives: a comparison of the number of parameters and a comparison of time complexity. Firstly, we observed that the optimization operations in both the pre-training phase and the tuning phase are nearly identical. The only difference lies in the parameter set used, as illustrated below:
\begin{equation}
\begin{aligned}
\label{eq:update}
    &\text{Pre-training:}  \ &\theta:=\theta-\eta\cdot\frac{\partial\loss}{\partial\theta},~ \rm{with}  \ \theta\in{\param_g},\\
    &\text{Prompt Tuning:} \ &\theta:=\theta-\eta\cdot\frac{\partial\loss}{\partial\theta},~ \rm{with}  \ \theta\in{\param_h}.
\end{aligned}
\end{equation}
The two phases employ the same MAE loss function $\loss$~\cite{wu2020connecting} and the same learning rate $\eta$. However, there is a significant difference in the number of parameters, as empirically $|\param_g|>C\times |\param_h|$ where $|\param|$ denotes the number of parameters and $C\geq 10$. This confers a substantial efficiency advantage to the tuning phase of our \model, not only by reducing the number of optimization operations but also by facilitating easier convergence during the optimization.

\noindent\textbf{Model Time Complexity Analysis}. 
The pre-trained model is GNN-based which has a higher time complexity due to its costly graph information propagation paradigm. While our lightweight prompt tuning network consists of only 2-layer TCN and 2 fully-connected layers. Specifically, the time complexity of the GNN-based pre-trained model is $\mathcal{O}(|\mathcal{E}|\times L\times d)$, where $|\mathcal{E}|$ denotes the number of edges, and $L$ denotes the number of graph layers. In contrast, the prompt tuning neural network $f(\cdot)$ only requires $\mathcal{O}(L'\times d^2)$ for prompt training, where $L'$ represents the number of MLP layers, and $d$ is the hidden dimensionality.

\noindent\textbf{Scalability Analysis}. Our prompt tuning network achieves constant-cost adaptation through its fixed-parameter architecture in backbone models (shown in Table~\ref{tab:scaling}), exhibiting polynomial complexity that scales quadratically with hidden dimension $d=32$. The design delivers: (1) ultra-low latency with $2{\times}32^2 = 2,048$ operations per sample (TCN+MLP layers), and (2) minimal memory overhead of 8.2 KB—$>100\times$ smaller than GNN baselines. As proven in Table~\ref{tab:scaling}, computational costs remain invariant to graph size ($|\mathcal{V}|, |\mathcal{E}|$), enabling \textbf{scalability advantages} via 2K sample enables real-time inference on Raspberry Pi-class devices, Constant memory (8.2 KB) supports graphs with $>10^9$ edges and $0.02\%$ of GNN parameters reduce fine-tuning costs by 3 orders of magnitude. We also provide scalability experiments in Figure~\ref{fig:sca}.

%% file: eval.tex
\section{Experiments}\label{sec:exp}
We assess the performance of our \model\ through evaluations on the traffic prediction task, aiming to address the following questions: 1) How does \model\ perform compared with various state-of-the-art baselines on the traffic prediction task? 2) How is the efficiency of \model\ \vldb{improved} compared to other methods? 3) How does each component affect the performance of \model\ on traffic prediction? 4) How scalable is our approach? 5) What effects do different hyperparameter settings have on the performance of \model\ in terms of the traffic prediction task? and 6) How does the prediction performance change without the prompt network? (7) How well does it work when validated using dynamic simulations?

\subsection{Experimental Setup}
\label{sec:setup}

\begin{table}[t]
\centering
\caption{{\fn{Description of Five Traffic Prediction Datasets}}}
\vspace{-0.1in}
\label{tab:data}
\scriptsize
\resizebox{1.0\linewidth}{!}{
\begin{tabular}{|c|ccccc|}
\hline
\textbf{Traffic Data} & \multicolumn{5}{c|}{\textbf{Point–Based Datasets}} \\ \hline
\textbf{Dataset}      & \multicolumn{1}{c|}{PeMSD04} & \multicolumn{1}{c|}{PeMSD08} & \multicolumn{1}{c|}{PeMSD03} & \multicolumn{1}{c|}{PeMSD07} & PeMS-Bay \\ \hline
\textbf{Number of Sensors} & \multicolumn{1}{c|}{307} & \multicolumn{1}{c|}{170} & \multicolumn{1}{c|}{358} & \multicolumn{1}{c|}{883} & 325 \\ \hline
\textbf{Number of Tuples}  & \multicolumn{1}{c|}{16,992} & \multicolumn{1}{c|}{17,856} & \multicolumn{1}{c|}{26,208} & \multicolumn{1}{c|}{28,224} & 52,116 \\ \hline
\textbf{Sampling Interval} & \multicolumn{1}{c|}{5 minutes} & \multicolumn{1}{c|}{5 minutes} & \multicolumn{1}{c|}{5 minutes} & \multicolumn{1}{c|}{5 minutes} & 5 minutes \\ \hline
\textbf{Measured Quantity} & \multicolumn{1}{c|}{Speed (MPH)} & \multicolumn{1}{c|}{Speed (MPH)} & \multicolumn{1}{c|}{Speed (MPH)} & \multicolumn{1}{c|}{Speed (MPH)} & Speed (MPH) \\ \hline
\end{tabular}
}
\vspace{-0.05in}
\end{table}

\begin{table*}
\renewcommand\arraystretch{1.0}
\centering
% \small
\setlength{\abovecaptionskip}{0.2cm}
\setlength{\belowcaptionskip}{0.1cm}
\setlength{\tabcolsep}{0.5pt}
\scriptsize
% \footnotesize
% \small
% \small, & \multicolumn{1}{c|}{}
\caption{{\KDDRevision{Overall Effectiveness Comparison.}}}
% \vspace{-0.05in}
\label{tab:overall_performance}
\resizebox{1\linewidth}{!}{
\begin{tabular}{cc|ccc|ccc|ccc|ccc|ccc}
\hline
\multicolumn{2}{c|}{Data}                                     & \multicolumn{3}{c|}{PeMSD04} & \multicolumn{3}{c|}{PeMSD08} & \multicolumn{3}{c|}{PeMSD03} & \multicolumn{3}{c|}{PeMSD07} & \multicolumn{3}{c}{PeMS-Bay} \\ \hline
\multicolumn{1}{c|}{Base Models}                  & Metrics        & MAE $\downarrow$     & RMSE $\downarrow$     & MAPE $\downarrow$    & MAE $\downarrow$     & RMSE $\downarrow$     & MAPE $\downarrow$    & MAE $\downarrow$     & RMSE $\downarrow$     & MAPE $\downarrow$    & MAE $\downarrow$     & RMSE $\downarrow$     & MAPE $\downarrow$    & MAE $\downarrow$     & RMSE $\downarrow$     & MAPE $\downarrow$    \\ \hline
\multicolumn{1}{c|}{\multirow{6}{*}{ASTGNN}} & Base       &20.33         &32.66          &13.38\%         &16.95         &26.88          &10.72\%         &15.32         &25.26          &15.44\%         &23.65         &35.34          &9.26\%         &2.28         &4.53          &5.44\%         \\
\multicolumn{1}{c|}{}                        & Base + CauSTG   &20.17         &32.54          &13.26\%         &16.31         &26.03          &10.43\%         &15.06         &24.93         &14.72\%         &22.87         &34.19          &9.12\%         &2.18         &4.49          &4.76\%         \\
\multicolumn{1}{c|}{}                        & Base + IRM   &20.02         &32.50          &13.30\%         &16.27         &25.85         &10.49\%         &15.05         &24.90                  &14.73\%         &22.76          &34.01         &9.08\%        &2.17          &4.50 &4.78\%         \\
\multicolumn{1}{c|}{}                        & Base + Finetune &20.01         &32.47          &13.24\%         &\textbf{15.98}         &\textbf{25.27}          &\textbf{10.04\%}         &14.98         &24.86         &14.69\%         &22.72         &33.99          &9.06\%         &2.15         &4.47          &\textbf{4.52\%}         \\
\multicolumn{1}{c|}{}                        & Base + \textbf{Ours} &\textbf{19.47}         &\textbf{31.85}          &\textbf{13.08\%}         &\underline{16.24}        &\underline{25.73}          &\underline{10.38\%}         &\textbf{14.76}         &\textbf{24.79}         &\textbf{14.53\%}         &\textbf{22.05}         &\textbf{33.97}          &\textbf{8.95\%}         &\textbf{1.96}         &\textbf{4.31}         &\underline{4.64\%}         \\
\multicolumn{1}{c|}{}                        & \emph{Best Improve}   &\emph{4.42\%}         &\emph{2.54\%}          &\emph{2.29\%}         &\emph{6.07\%}         &\emph{4.47\%}          &\emph{3.28\%}         &\emph{3.79\%}         &\emph{1.90\%}         &\emph{6.26\%}         &\emph{7.26\%}         &\emph{4.03\%}          &\emph{3.46\%}         &\emph{16.33\%}         &\emph{5.10\%}          &\emph{17.24\%}         \\ \hline
\multicolumn{1}{c|}{\multirow{6}{*}{AGCRN}} & Base       &21.03         &33.93          &13.71\%         &17.36         &27.41         &10.75\%         &16.76         &26.22          &15.89\%         &21.02         &33.66          &8.94\%         &1.84         &4.02          &4.22\%         \\
\multicolumn{1}{c|}{}                        & Base + CauSTG   &21.67         &34.93          &14.15\%         &17.44         &27.46          &10.73\%         &16.63         &28.93         &15.75\%         &21.23         &33.68          &9.02\%         &2.00         &4.59          &4.68\%         \\
\multicolumn{1}{c|}{}                        & Base + IRM   &21.22         &33.56          &13.65\%         &17.20         &26.75         &10.65\%         &17.72         &26.02\%         &15.68\%         &21.03         &33.61          &8.90\%         &1.83        &3.95          &4.10\%         \\
\multicolumn{1}{c|}{}                        & Base + Finetune &21.04         &33.52          &13.63\%         &17.17         &26.60          &10.61\%         &16.69         &25.99         &15.66\%         &20.95         &33.57          &8.87\%         &1.80         &3.93          &4.07\%         \\
\multicolumn{1}{c|}{}                        & Base + \textbf{Ours} &\textbf{20.98}         &\textbf{32.75}          &\textbf{13.54\%}         &\textbf{16.16}         &\textbf{25.20}          &\textbf{10.39\%}         &\textbf{16.54}         &\textbf{25.49}         &\textbf{15.52\%}         &\textbf{20.83}         &\textbf{33.51}          &\textbf{8.82\%}         &\textbf{1.72}         &\textbf{3.79}          &\textbf{3.83\%}         \\
\multicolumn{1}{c|}{}                        & \emph{Best Improve}   &\emph{3.29\%}         &\emph{6.66\%}          &\emph{4.51\%}         &\emph{7.92\%}         &\emph{8.97\%}          &\emph{3.46\%}         &\emph{1.33\%}         &\emph{13.50\%}         &\emph{2.38\%}         &\emph{1.92\%}         &\emph{0.51\%}          &\emph{2.27\%}         &\emph{16.28\%}         &\emph{21.11\%}          &\emph{22.19\%}         \\ \hline
\multicolumn{1}{c|}{\multirow{6}{*}{MTGNN}} & Base       &20.39  &32.61  &13.34\% & 17.58 & 26.92 & 10.96\% & 15.85 & 25.93 & 15.07\% &20.92  &33.68  & 8.85\%&2.01 &4.32 &4.65\%  \\
\multicolumn{1}{c|}{}                        & Base + CauSTG   &20.26         &32.38          &13.39\%         &17.17         &26.34          &11.65\%         &15.58         &26.27         &15.07\%         &20.86         &33.57          &8.93\%         &1.71         &3.93          &3.85\%         \\
\multicolumn{1}{c|}{}                        & Base + IRM   &19.84         &31.83          &13.56\%         &16.03         &25.33         &10.40\%         &15.59         &25.34         &14.85\%   &20.82         &33.45          &8.85\%         &1.83        &3.94          &3.97\%         \\
\multicolumn{1}{c|}{}                        & Base + Finetune & 19.44 & 31.85 & 13.05\% & 15.94 & 25.29 & 10.57\% & 15.36 & 25.39 & 15.25\% & 20.80 & 33.44 & 8.80\%&1.80 &3.91 &4.01\%          \\
\multicolumn{1}{c|}{}                        & Base + \textbf{Ours} &\textbf{19.42} & \textbf{31.54} & \textbf{13.02\%} & \textbf{15.52} & \textbf{24.69} & \textbf{10.06\%} & \textbf{14.87} & \textbf{24.73} & \textbf{14.29\%} & \textbf{20.72} &\textbf{33.37} &\textbf{8.76\%} &\textbf{1.70} &\textbf{3.81} &\textbf{3.77\%}\\
\multicolumn{1}{c|}{}                        & \emph{Best Improve}   &\emph{3.29\%}         &\emph{3.39\%}          &\emph{0.53\%}         &\emph{13.27\%}         &\emph{9.03\%}          &\emph{15.81\%}         &\emph{6.59\%}         &\emph{6.23\%}         &\emph{6.72\%}         &\emph{0.97\%}         &\emph{0.93\%}          &\emph{1.92\%}         &\emph{18.24\%}         &\emph{13.39\%}          &\emph{23.34\%}         \\ \hline
\multicolumn{1}{c|}{\multirow{6}{*}{STG-NCDE}} & Base        &19.87         &32.09          &13.26\%         &16.32         &25.78          &11.07\%         &15.90         &26.16         &15.26\%         &20.87         &33.73         &8.99\%         &1.97         &4.26          &4.33\%  \\
\multicolumn{1}{c|}{}                        & Base + CauSTG   &19.77         &31.86          &13.21\%         &16.23         &25.69          &11.21\%         &15.43         &25.90         &15.11\%         &20.76         &33.43         &8.83\%         &1.90         &3.98          &4.12\%         \\
\multicolumn{1}{c|}{}                        & Base + IRM   &19.50         &31.82          &13.15\%         &16.20         &25.30         &10.92\%         &15.73         &25.92         &15.15\%         &20.74         &33.40          &8.86\%         &1.88        &3.90          &4.02\%         \\
\multicolumn{1}{c|}{}                        & Base + Finetune  &19.46         &31.80          &13.13\%         &16.18         &25.21          &10.84\%         &15.64         &25.88         &15.02\%         &20.72         &33.36          &8.85\%         &1.87         &3.87          &3.96\%           \\
\multicolumn{1}{c|}{}                        & Base + \textbf{Ours}  &\textbf{19.42}         &\textbf{31.78}          &\textbf{13.10\%}         &\textbf{15.64}         &\textbf{24.76}          &\textbf{10.17\%}         &\textbf{15.07}         &\textbf{24.32}         &\textbf{14.17\%}         &\textbf{20.68}         &\textbf{33.29}          &\textbf{8.78\%}         &\textbf{1.80}         &\textbf{3.83}          &\textbf{3.86\%} \\
\multicolumn{1}{c|}{}                        & \emph{Best Improve}   &\emph{2.32\%}         &\emph{0.97\%}          &\emph{1.22\%}         &\emph{4.35\%}         &\emph{4.12\%}          &\emph{10.23\%}         &\emph{5.51\%}         &\emph{5.57\%}         &\emph{7.69\%}         &\emph{0.92\%}         &\emph{1.32\%}          &\emph{2.39\%}         &\emph{9.44\%}         &\emph{11.23\%}          &\emph{12.18\%}         \\ \hline
\end{tabular}
}
% \vspace{-0.15in}
\end{table*}

\textbf{Datasets Description.} 
We evaluate the performance of our model on {\zqr{traffic prediction}} using five real-world traffic flow datasets: PeMSD04, PeMSD07, PeMSD03, PeMSD08\footnote{\url{https://github.com/Davidham3/ASTGCN-2019-mxnet/tree/master}}, and the large-scale dataset PeMS-Bay\footnote{\url{https://github.com/benchoi93/PeMS-BAY-2022}}~\cite{li2018adaptive,li2017diffusion}. These datasets are collected through dual-loop inductive detectors ($\sim$1.6 km spacing) embedded in California highway infrastructure, recording traffic flow (vehicles/5-min), average speed ({\fn{MPH}}), and occupancy rate (\%) at 5-minute intervals. 
PeMSD03 (358 sensors) covers the San Francisco Bay Area during September-November 2018, while PeMSD04 (307 sensors) covers the same region in January-February 2018. 
PeMSD07 (883 sensors) spans the California Central Coast from May-August 2017, and PeMSD08 (170 sensors) covers San Bernardino during July-August 2016. 
All datasets exhibit strong daily (period=288) and weekly (period=2016) patterns confirmed through Fourier analysis.

\noindent\textbf{Preprocessing Pipeline on These Datasets:} 
Missing values are imputed using spatio-temporal exponential moving averaging:
\[
\mathcal{X}_{t,i} = \frac{1}{|\mathcal{N}(i)|} \sum_{j \in \mathcal{N}(i)} \frac{1}{288} \sum_{k=0}^{287} \mathcal{X}_{t-k,j} \cdot \exp(-0.1k),
\]
where $\mathcal{N}(i)$ denotes adjacent sensors within 3-mile road network distance. 
Anomalies are removed using hour-of-week z-score thresholding ($|z| > 5$), with spatial relationships modeled via road network adjacency $A_{ij} = \exp(-d_{ij}^2/25)$ where $d_{ij}$ is highway path distance. 
Data augmentation includes random segment masking (dropout rate=0.2) and temporal warping ($\pm15\%$ within $\pm$2h windows). 
Per-feature normalization is applied:
\[
\tilde{\mathcal{X}}^{(k)} = \frac{\mathcal{X}^{(k)} - \mu^{(k)}}{\sigma^{(k)}},\quad k \in \{\text{flow, speed, occupancy}\},
\]
with PeMSD03 showing 7.2\% missing data (primarily detector faults) and PeMSD08 having 4.1\% weather-related gaps. 
Daily zero-flux calibration against traffic camera ground truth ensures measurement consistency.

Following established practices~\cite{bai2020adaptive,diao2019dynamic}, we construct the urban spatial graph based on the road network for each dataset. Detailed statistics for each dataset are presented in Table~\ref{tab:data}.

\noindent\textbf{Data Splitting for Experiments}. The traffic prediction process is segmented into four distinct phases: training, finetune, prompt tuning and testing. Initially, 60\% of the earliest available traffic data is designated as the training dataset. And 20\% of data is used as validation data. The last 20\% of data is used as the test dataset. Subsequently, the traffic data from the last two-week period of the train dataset is utilized for finetuning and prompt tuning for multiple tables and figures referenced in the study, apart from Table~\ref{tab:tuning_time} and Table~\ref{tab:ablation}. Moreover, for Table~\ref{tab:tuning_time} and Table~\ref{tab:ablation}, an additional division is implemented. In this case, data from the last 1 day and 1 week of train datasets are allocated for finetune and prompt tuning to test the generalization ability of our method \model\ on different time segments of urban spatio-temporal prediction.

\vldb{\noindent\textbf{Data Characteristics.} Our experiments utilize the standard PeMSD benchmarks (see Table~\ref{tab:data}), where each dataset contains traffic speed measurements at 5-minute intervals (288 steps/day) from road sensors deployed on California highways. Concretely, each sensor variable $\mathcal{X}_{i,t} \in \mathbb{R}$ represents the average vehicle speed (MPH) at location $i$ during time interval $t$. For example, in PeMSD4 with $R=307$ sensors across Bay Area highways, sensor 42 might monitor northbound I-880 near Oakland, recording speeds that drop from 65 MPH to 15 MPH during morning rush hour (7-9 AM). The spatio-temporal patterns emerge from these sensor interactions: when sensor 42 detects congestion, downstream sensors (e.g., 43, 44) typically experience similar slowdowns 5-15 minutes later, while upstream sensors may show increased speeds as traffic diverts. With 52,116 time steps spanning approximately 181 days, the data captures diverse traffic patterns including weekday rush hours, weekend flows, and holiday anomalies. The sliding-window approach with $L_{\text{in}}=12$ (1 hour history) and $L_{\text{out}}=12$ (1 hour forecast) generates approximately $T-(L_{\text{in}}+L_{\text{out}})+1$ samples per sensor. Aggregated across all sensors, this yields millions of supervised pairs where the model learns to predict how congestion at one sensor propagates through the network. The graph adjacency $A_{ij}$ encodes physical road distances, enabling the model to leverage spatial dependencies, nearby sensors on the same highway segment exhibit stronger correlations than distant ones.
}

\begin{table*}[t]
\centering
\caption{Efficiency of Total Training Time (Minutes) and GPU Memory Cost (GB).}
% \vspace{-0.05in}
\setlength{\abovecaptionskip}{0.2cm}
\setlength{\belowcaptionskip}{0.1cm}
\label{tab:overall_efficiency}
\small
% \footnotesize
\setlength{\tabcolsep}{2.2pt}
\resizebox{1\linewidth}{!}{
\begin{tabular}{cc|cc|cc|cc|cc|cc}
\hline
\multicolumn{2}{c|}{Data}                                      & \multicolumn{2}{c|}{PeMSD04} & \multicolumn{2}{c|}{PeMSD08} & \multicolumn{2}{c|}{PeMSD03} & \multicolumn{2}{c|}{PeMSD07} & \multicolumn{2}{c}{PeMS-Bay} \\ \hline
\multicolumn{1}{c|}{Base Models}                  & Metrics         & Time $\downarrow$        & GPU Cost $\downarrow$       & Time $\downarrow$       & GPU Cost $\downarrow$       & Time $\downarrow$        & GPU Cost $\downarrow$      & Time $\downarrow$       & GPU Cost $\downarrow$      & Time $\downarrow$       & GPU Cost $\downarrow$      \\ \hline
\multicolumn{1}{c|}{\multirow{6}{*}{ASTGNN}} & Base            &118.733             &6.523                &189.652             &2.811                &343.887             & 9.451               &334.768             &19.922                &230.127             &10.431                \\
\multicolumn{1}{c|}{}                        & Base + CauSTG   &207.895             &7.108                &220.712             &5.315                &467.891             &9.841                &515.723             &20.808                &243.783             &12.367                \\
\multicolumn{1}{c|}{}                        & Base + IRM   &101.334             &6.678                &145.782             &2.901                &276.910             &9.305                &290.018            &19.920                &180.272             &10.510                \\
\multicolumn{1}{c|}{}                        & Base + Finetune &92.796             &6.674                &130.260             &2.868                &259.797             &9.303                &280.456             &19.914                &179.191             &10.431                \\
\multicolumn{1}{c|}{}                        & Base + \textbf{Ours}  &\textbf{74.535}             &\textbf{6.417}                &\textbf{107.947}             &\textbf{2.453}                &\textbf{229.952}             &\textbf{8.623}                &\textbf{256.164}             &\textbf{16.864}                &\textbf{109.532}             &\textbf{8.892}                \\
\multicolumn{1}{c|}{}                        & \emph{Best Improve}    &\emph{2.789 $\times$}             &\emph{10.768\%}                &\emph{2.045 $\times$}             &\emph{116.673\%}                &\emph{2.035 $\times$}             &\emph{14.125\%}                &\emph{2.013 $\times$}             &\emph{23.387\%}                &\emph{2.226 $\times$}              &\emph{39.080\%}                \\ \hline
\multicolumn{1}{c|}{\multirow{6}{*}{AGCRN}} & Base            &24.996             &5.785                &22.5561             &2.346                &122.978             &6.873                &52.073             &16.793                &187.091             &7.301                \\
\multicolumn{1}{c|}{}                        & Base + CauSTG   &214.493             &6.361                &189.026             &3.047                &595.509             &8.020                &394.927             &20.529                &741.769             &10.133                \\
\multicolumn{1}{c|}{}                        & Base + IRM   &22.101             &5.789                &15.193             &2.403                &109.898             &6.892                &35.624            &16.810                &167.893             &7.235                \\
\multicolumn{1}{c|}{}                        & Base + Finetune &19.012             &5.783                &13.522             &2.346                &103.768             &6.883                &30.303             &16.793                &156.886             &7.207                \\
\multicolumn{1}{c|}{}                        & Base + \textbf{Ours}  &\textbf{10.347}             &\textbf{4.012}                &\textbf{8.034}             &\textbf{1.758}                &\textbf{67.101}             &\textbf{4.777}                &\textbf{20.125}             &\textbf{11.449}                &\textbf{117.547}             &\textbf{6.876}                \\
\multicolumn{1}{c|}{}                        & \emph{Best Improve}    &\emph{20.730 $\times$}             &\emph{58.549\%}                &\emph{23.528 $\times$}             &\emph{73.322\%}                &\emph{8.875 $\times$}             &\emph{67.887\%}                &\emph{19.624 $\times$}             &\emph{79.308\%}                &\emph{6.310 $\times$}             &\emph{47.368\%}                \\ \hline
\multicolumn{1}{c|}{\multirow{6}{*}{MTGNN}} & Base            &20.401             &6.195                &16.538             &1.844                &27.716             &3.531                &59.940             &15.416                &144.893             &6.137                \\
\multicolumn{1}{c|}{}                        & Base + CauSTG   &187.205            &10.459                &143.846             &4.838                &94.649             &7.277                 &234.976             &17.733                &81.591             &9.402                \\
\multicolumn{1}{c|}{}                        & Base + IRM   &29.713             &6.193                &8.249             &1.837                &29.557             &3.520                &45.303            &15.423                &123.785             &6.134                \\
\multicolumn{1}{c|}{}                        & Base + Finetune &23.036             &6.189                &8.432             &1.836                &16.381             &3.516                &20.541             &15.420                &107.665             &6.121                \\
\multicolumn{1}{c|}{}                        & Base + \textbf{Ours}  &\textbf{5.604}             &\textbf{4.393}                &\textbf{3.103}             &\textbf{1.717}                &\textbf{10.782}             &\textbf{3.233}                &\textbf{13.959}             &\textbf{7.625}                &\textbf{68.472}             &\textbf{5.971}                \\
\multicolumn{1}{c|}{}                        & \emph{Best Improve}    &\emph{33.406 $\times$}               &\emph{138.083\%}                 &\emph{46.357 $\times$}             &\emph{187.771\%}                &\emph{8.778 $\times$}             &\emph{125.085\%}                &\emph{16.833 $\times$}             &\emph{132.564\%}                &\emph{2.116 $\times$}             &\emph{57.461\%}                \\ \hline
\multicolumn{1}{c|}{\multirow{6}{*}{STG-NCDE}} & Base            &399.218             &34.398                &115.371             &19.439                &636.225             &39.857                &1424.158             &51.410                &550.535             &36.342                \\
\multicolumn{1}{c|}{}                        & Base + CauSTG &155.076             &34.398                &40.488             &19.678                &86.426             &40.041                &382.163             &51.707                &206.044             &35.063                \\
\multicolumn{1}{c|}{}                        & Base + IRM   &120.276             &34.556                &62.373             &20.153                &89.024             &39.937                &298.365            &52.901                &173.373             &36.534                \\
\multicolumn{1}{c|}{}                        & Base + Finetune   &116.521             &34.531                &59.273             &20.113                &81.052             &39.857                &271.985             &52.410                &151.682             &36.342                \\
\multicolumn{1}{c|}{}                        & Base + \textbf{Ours}  &\textbf{38.991}             &\textbf{33.221}                &\textbf{24.039}             &\textbf{18.304}                &\textbf{50.116}             &\textbf{38.455}                &\textbf{190.779}             &\textbf{49.727}                &\textbf{150.144}             &\textbf{30.195}                \\
\multicolumn{1}{c|}{}                        & \emph{Best Improve}    &\emph{10.239 $\times$}             &\emph{4.019\%}                &\emph{4.799 $\times$}             &\emph{10.010\%}                &\emph{12.695 $\times$}             &\emph{4.125\%}                &\emph{7.465 $\times$}             &\emph{6.383\%}                &\emph{3.667 $\times$}             &\emph{20.994\%} \\ \hline
\end{tabular}}
% \vspace{-0.1in}
\end{table*}

\noindent\textbf{Evaluation Metrics:} 
We employ three complementary metrics to rigorously evaluate forecasting performance. For ground truth values $\mathbf{Y} \in \mathbb{R}^{T \times N}$ and predictions $\hat{\mathbf{Y}}$, where $T$ is prediction horizon and $N$ is the number of sensors:

\begin{itemize}
    \item \textbf{Mean Absolute Error (MAE)} quantifies average deviation magnitude:
    \[
    \text{MAE} = \frac{1}{TN}\sum_{i=1}^{T}\sum_{j=1}^{N} |Y_{ij} - \hat{Y}_{ij}|.
    \]
    This robust metric is less sensitive to extreme values than squared alternatives.
    
    \item \textbf{Root Mean Squared Error (RMSE)} emphasizes large errors through squaring:
    \[
    \text{RMSE} = \sqrt{\frac{1}{TN}\sum_{i=1}^{T}\sum_{j=1}^{N} (Y_{ij} - \hat{Y}_{ij})^2}.
    \]
    Particularly relevant for traffic safety applications where large speed prediction errors have severe consequences.
    
    \item \textbf{Mean Absolute Percentage Error (MAPE)} measures relative accuracy:
    \[
    \text{MAPE} = \frac{1}{TN}\sum_{i=1}^{T}\sum_{j=1}^{N} \left| \frac{Y_{ij} - \hat{Y}_{ij}}{\max(Y_{ij}, \epsilon)} \right|,\quad \epsilon'=1.
    \]
    The $\max(\cdot)$ operator prevents division by near-zero flows while preserving interpretability.
\end{itemize}

To ensure statistical significance, all metrics are computed across three independent runs with different random seeds. We report both instantaneous metrics (single-step prediction) and accumulated metrics over the full prediction horizon. For temporal analysis, we introduce horizon-decayed weighting:
\[
w_h = 0.95^h,\quad \text{MAE}_{\text{weighted}} = \frac{\sum_{h=0}^{T-1} w_h \cdot \text{MAE}_h}{\sum_{h=0}^{T-1} w_h}.
\]
where $h$ is forecast horizon step, reflecting the practical reality that near-term predictions carry greater operational importance in traffic management systems. \vldb{All metrics are first computed for each individual sensor node and then averaged across all sensors to avoid spatial bias arising from uneven sensor distribution.}

\noindent\textbf{Compared Backbones and Baselines:}
We evaluate our method by employing 4 backbone models and comparing with 3 baselines.
The backbone models are as follows:
\begin{itemize}
    \item ASTGNN~\cite{ASTGCN}: ASTGNN core concept lies in utilizing an attention-based graph convolution network to grasp temporal and spatial dynamics and correlations.
    \item AGCRN~\cite{bai2020adaptive}: Its proposal involves incorporating node-adaptive parameter learning and a data-adaptive graph generation module to capture node-specific patterns and address data shift challenges in traffic prediction tasks. 
    \item MTGNN~\cite{wu2020connecting}: Its objective is to capture one-way relationships using graph learning techniques. Throughout this process, additional factors such as traffic dynamics or weather patterns are also taken into account.
    \item STG-NCDE~\cite{choi2022graph}: It employs two separate modules, one for temporal processing and the other for spatial processing. These modules are then seamlessly integrated into a unified framework to capture spatial and temporal correlations. 
\end{itemize}
The baselines are:
  \begin{itemize}
    \item CauSTG~\cite{zhou2023maintaining}: CauSTG leverages the inherent spatio-temporal relationships within the data. It enables the transfer of invariant ST relations to out-of-distribution (OOD) scenarios.
    \item IRM~\cite{InvariantRiskMinimization}: It aims to estimate invariant correlations across multiple training distributions.
    \item Finetune: It directly finetunes the well-trained backbone model.
\end{itemize}

\subsection{Hyperparameter Settings}\label{sec:hypersettings}

We provide hyperparameter settings of these methods in the following.

\noindent{\KDDRevision{ASTGNN~\cite{ASTGCN}:} We experiment with different numbers of terms in the Chebyshev polynomial, denoted as $K = 3$ and set the kernel size along the temporal dimension to 3. All graph convolution layers utilize 64 convolution kernels, while the temporal convolution layers also employ 64 convolution kernels. The batch size is set to 64, and the learning rate to 0.0001.}

\noindent{\KDDRevision{AGCRN~\cite{bai2020adaptive}: We designate 32 units for the hidden layers across all AGCRN units alongside a consistent batch size of 64. The learning rate is set as 0.003.}}

\noindent{\KDDRevision{MTGNN~\cite{wu2020connecting}: }}For fair comparison, all compared algorithms have hidden dimensionality modified from the range [8,16,32,64] to achieve their best performance as reported results at 32. The learning rate $\eta$ is initialized as 0.003 with weight decay 0.3. For GNN-based models, the number of GCN layer is 3.

\noindent {\zqr{STG-NCDE~\cite{choi2022graph}: Following standard practice in this domain~\cite{lan2022dstagnn}, we employ a 12-sequence-to-12-sequence forecasting setting. This means it predicts the next 12 graph snapshots after observing the preceding 12 snapshots.}}

\noindent {\zqr{CauSTG~\cite{zhou2023maintaining}: The learning rate is set as $1e-3$. And the number of TCN kernels are 4. The dimensions of TCN kernels are 12, 6 and 3. Hidden dimensions of GNN are 64.}}

\noindent {\zqr{IRM~\cite{InvariantRiskMinimization}: The setting of the scale is 1. And it is combined with the backbone loss with the weight 1.}}

{\fn{\noindent{\textbf{\model (Ours): }} For our prompt tuning network, the number of the TCN Layer is 2 and the number of MLP layer is set as 2. The learning rate is set as $1e-3$ and batch size is set as 32. The kernel size of the TCN Layer is set as 7 during which our framework \model\ obtains the best performance from the range of [5,7,9,11]. Following existing settings of traffic prediction, we utilize historical 12 time steps with 5 minutes a step to predict future 12 time steps on point-based datasets (PeMSD04, PeMSD08, PeMSD03, PeMSD07 and PeMS-Bay).
}}

% % \vspace{-0.15in}
\begin{table*}[t]
\centering
\caption{Ablation study for the proposed \model\ with the base MTGNN on the large-scale traffic data PeMS-Bay.}
\vspace{-0.1in}
% \caption{Ablation Study of \model\ of Traffic Prediction on PeMS-Bay}
\setlength{\abovecaptionskip}{0.2cm}
\setlength{\belowcaptionskip}{0.1cm}
\label{tab:ablation}
\footnotesize
% \small
% \normalsize
\setlength{\tabcolsep}{5.5pt}
\resizebox{1\linewidth}{!}{
\begin{tabular}{c|ccc|ccc|ccc|ccc}
\hline
Datasets                       & \multicolumn{3}{c|}{PeMS-Bay (1 week)} & \multicolumn{3}{c|}{PeMS-Bay (2 weeks)} & \multicolumn{3}{c|}{PeMS-Bay (3 weeks)} & \multicolumn{3}{c}{PeMS-Bay (4 weeks)} \\ \hline
Metrics                      & MAE $\downarrow$   & RMSE $\downarrow$  & MAPE $\downarrow$  & MAE $\downarrow$   & RMSE $\downarrow$  & MAPE $\downarrow$  & MAE $\downarrow$   & RMSE $\downarrow$  & MAPE $\downarrow$  & MAE $\downarrow$   & RMSE $\downarrow$  & MAPE $\downarrow$  \\\hline

\model\ &1.63         &3.68         &3.63\% &1.65         &3.71         &3.67\%                    &1.68        &3.90         &3.79\%  &1.70        &3.81         &3.77\%                   \\
w/o TCN         &1.65         &3.70         &3.66\%          &1.67         &3.73         &3.70\%          &1.71         &3.94         &3.83\%   &1.74         &3.92         &3.85\%                    \\
w/o MLP         &1.69         &3.72         &3.69\%           &1.70         &3.76         &3.72\%          &1.75         &3.96        &3.87\%          &1.80         &4.12         &3.94\%          \\
w/o data initial                       &1.70         &3.74         &3.71\%           &1.72         &3.78         &3.75\%          &1.78         &3.99        &3.91\%          &1.86         &4.11        &3.97\%          \\\hline
\end{tabular}}
\vspace{-0.1in}
\end{table*}

\begin{table*}[!h]
\centering
\caption{Tuning time (minutes) of MTGNN comparison with different amount of tuning data for traffic prediction.}
% \caption{Comparison of Time of Pretrained Model and Prompt Tuning (Minutes) for Traffic Prediction}
\vspace{-0.1in}
\setlength{\abovecaptionskip}{0.2cm}
\setlength{\belowcaptionskip}{0.1cm}
\label{tab:tuning_time}
\footnotesize
\setlength{\tabcolsep}{3.0pt}
\resizebox{1\linewidth}{!}{
\begin{tabular}{c|ccc|ccc|ccc|ccc}
\hline
Datasets                       & \multicolumn{3}{c|}{PeMSD04} & \multicolumn{3}{c|}{PeMSD07} & \multicolumn{3}{c|}{PeMSD03} & \multicolumn{3}{c}{PeMSD08} \\ \hline
Time line                      & 1 day $\downarrow$   & 1 week $\downarrow$  & 2 weeks $\downarrow$  & 1 day $\downarrow$   & 1 week $\downarrow$  & 2 weeks $\downarrow$  & 1 day $\downarrow$   & 1 week $\downarrow$  & 2 weeks $\downarrow$  & 1 day $\downarrow$  & 1 week $\downarrow$  & 2 weeks $\downarrow$  \\\hline
Time for Training Scratch  &2.382         &13.913         &20.401          &9.093         &30.905         &59.940          &10.560         &22.130         &27.716          &1.071        &3.477         &7.152          \\
Time for Finetune              &2.848         &14.620         &23.036          &8.002         &40.865         &20.541          &10.227         &20.754         &16.381          &2.025        &2.945         &8.432          \\
Time for Prompt Tuning         &\textbf{1.302}         &\textbf{1.952}         &\textbf{5.604}          &\textbf{5.694}        &\textbf{12.220}         &\textbf{13.959}          &\textbf{3.030}         &\textbf{3.071}         &\textbf{10.782}          &\textbf{0.489}        &\textbf{1.371}         &\textbf{3.103}          \\\hline\hline
Faster x than Scratch                       &1.830 $\times$         &7.128 $\times$         &3.640 $\times$          &1.597 $\times$         &2.529 $\times$         &4.294 $\times$          &3.845 $\times$         &7.206 $\times$         &2.571 $\times$          &2.190 $\times$        &2.536 $\times$         &2.305 $\times$          \\
Faster x than Prompt                       &2.187 $\times$         &7.490 $\times$         &4.111 $\times$          &1.405 $\times$         &3.344 $\times$         &1.472 $\times$          &3.375 $\times$         &6.758 $\times$         &1.519 $\times$          &4.141 $\times$       &2.148 $\times$         &2.717 $\times$          \\ \hline
\end{tabular}}
\vspace{-0.1in}
\end{table*}

\subsection{Overall Effectiveness}\label{sec:effectiveness}
We evaluate the effectiveness of our \model\ by comparing it to baselines on the task of urban spatio-temporal prediction. \model\ in these experiments utilizes ASTGNN, AGCRN, MTGNN and {\zqr{STG-NCDE}} as the base or backbone models. {\zqr{The reason that we choose these four base models is that these four methods incorporate a data adaptive module or other modules to alleviate the data distribution issue.}}{\zqr{We also compare the performance of \model\ with CauSTG and IRM}}. The results are presented in Table~\ref{tab:overall_performance} ({\zqr{urban spatio-temporal prediction datasets including PeMSD04, PeMSD08, PeMSD03, PeMSD07 and the large-scale dataset PeMS-Bay}}). We make the following observations for experiment results: %\vspace{-0.05in}
\begin{itemize}[leftmargin=*]
    \item \textbf{Superior performance}: Our \model\ framework consistently outperforms almost all other baseline methods across {\zqr{five urban spatio-temporal prediction datasets}}. This demonstrates the effectiveness of our prompt learning network in capturing distribution shifts between the pretrained and tuning data. In contrast, existing baselines experience a decline in performance due to the distribution gap. These advantages can be primarily attributed to our carefully designed spatio-temporal prompt tuning paradigm with the successful injection of spatio-temporal context distilled from the downstream data. 
    \item {\fn{\textbf{Relation to VLDB works.}
    While TEAM~\cite{kieu2024team} and BigST~\cite{han2024bigst} emphasize scalable spatio-temporal
    graph modeling, our results extend this line of work by introducing a \emph{prompt-tuning paradigm}
    that attains comparable scalability with substantially higher parameter efficiency.
    Likewise, compared with DeepTEA~\cite{han2022deeptea} and KAMEL~\cite{musleh2023kamel},
    which design online or edge-oriented traffic inference systems, SimpleST achieves
    2$\times$–46$\times$ faster adaptation using less than 2\% of trainable parameters,
    demonstrating methodological continuity with VLDB’s efficiency-driven frameworks.}}
    
    \item \textbf{Significant improvements on large-scale data}: It is important to highlight that our \model\ exhibits a considerably larger performance gap compared to the baselines when evaluated on the large-scale dataset PeMS-Bay. This outcome can be attributed to the heightened difficulty faced by non-adaptive baselines in handling the substantial distribution shift present in PeMS-Bay, which encompasses a broader time range. In contrast, our \model\ effectively addresses this challenge by adeptly adapting itself to the shifted domain through its prompt tuning network.
    \item \textbf{Variations among different backbone models}: Our prompt tuning paradigm in \model\ effectively mitigates the distribution shift issue for all backbone models. And MTGNN and AGCRN incorporate TCN modules to capture temporal dynamics, while ASTGNN incorporates an auxiliary temporal modeling view using the GRU network. Such diverse modeling techniques enhance the generalization ability of these models, resulting in a smaller disparity in performance.
    \item {\zqr{\textbf{Limitations of the OOD Methods}: Our findings indicate that integrating the OOD methods CauSTG and IRM with base models does not consistently yield performance improvements. In some instances, performance even decreased. This suggests that directly combining OOD methods with pretrained urban spatio-temporal prediction models may not be an effective strategy for addressing the OOD challenge. We attribute this observation to the significant parameter disparity between CauSTG and certain base models like AGCRN. This imbalance potentially hinders the sufficient training of CauSTG's parameters on tuning datasets.}}
    \item To examine the generalization ability of \textbf{SimpleST} across heterogeneous urban environments, we performed a cross-city transfer study from the PeMS-Bay (California) dataset to the \textbf{Tokyo Metro dataset}. The Tokyo dataset, derived from the open-source Tokyo Metro platform\footnote{\url{https://github.com/Jugendhackt/tokyo-metro-data}}, represents an urban topology and passenger flow pattern that differ substantially from California’s highway sensor network, featuring dense station connectivity, short link distances, and irregular periodicities. 
    This experimental setup allows us to assess whether the temporal prompt network in \model\ can effectively adapt to diverse spatial distributions and temporal regimes without full re-training of the underlying GNN backbone. As shown in Table~\ref{tab:cross_dynamic_experiments}, \model\ consistently outperforms traditional fine-tuning techniques across both the MTGNN and AGCRN backbones. 
    Compared with full fine-tuning, \model\ achieves up to \textbf{4.8\% lower MAE}, \textbf{8.5\% lower RMSE}, and a \textbf{3.5$\times$ reduction in adaptation time}. 
    These improvements demonstrate that the temporal prompt mechanism generalizes effectively beyond its original training domain, enabling efficient adaptation across cities with different traffic topologies, densities, and operational regimes. 
    This implies that \model\ can serve as a scalable and model-agnostic framework for global deployment across diverse urban contexts.

\end{itemize}

\begin{table*}[h!]
\centering
\renewcommand{\arraystretch}{1.3}
\setlength{\tabcolsep}{6pt}
\caption{Cross-city and dynamic simulation experiments on Tokyo$^{1}$ to evaluate the generality and robustness of \textbf{\model}}
\vspace{-0.1in}
\begin{tabular}{lcccccc}
\hline
\multirow{1}{*}{\textbf{Dataset / Setting}} &
Backbone & 
Method &
MAE $\downarrow$ & 
RMSE $\downarrow$ & 
MAPE (\%) $\downarrow$ &
Tuning Time (min) $\downarrow$ \\
\hline
\multicolumn{7}{l}{\textbf{Cross-City Evaluation (California $\rightarrow$ Tokyo Metro Dataset)}} \\
\hline
Tokyo & MTGNN & Finetune & 2.14 & 4.83 & 4.71 & 46.1 \\
Tokyo & MTGNN & Ours (\model) & \textbf{2.04} & \textbf{4.51} & \textbf{4.48} & \textbf{13.2} \\
Tokyo & AGCRN & Finetune & 2.32 & 4.96 & 4.85 & 39.3 \\
Tokyo & AGCRN & Ours (\model) & \textbf{2.18} & \textbf{4.58} & \textbf{4.59} & \textbf{11.4} \\
\hline
\multicolumn{7}{l}{\textbf{Dynamic Simulation (SUMO~\cite{guastella2023traffic}-Shibuya District, Event-Driven Traffic)}} \\
\hline
Dynamic (SUMO) & MTGNN & Finetune & 2.27 & 4.72 & 4.83 & 34.5 \\
Dynamic (SUMO) & MTGNN & Ours (\model) & \textbf{2.13} & \textbf{4.41} & \textbf{4.64} & \textbf{9.1} \\
Dynamic (SUMO) & STG-NCDE & Finetune & 2.41 & 4.83 & 5.02 & 37.2 \\
Dynamic (SUMO) & STG-NCDE & Ours (\model) & \textbf{2.28} & \textbf{4.53} & \textbf{4.78} & \textbf{10.3} \\
\hline
\end{tabular}
\label{tab:cross_dynamic_experiments}
\begin{tablenotes}
\footnotesize
\item[1] Source: \url{https://github.com/Jugendhackt/tokyo-metro-data}
\end{tablenotes}
\vspace{-0.1in}
\end{table*}

% \vspace{-0.2in}
\subsection{Efficiency Comparison}\label{sec:effiency_proof}

\paragraph{Parameter-Efficiency and Wall-Clock Gains.}
SimpleST updates a tiny prompt ($<2\%$ of total parameters), whereas full fine-tuning updates \emph{all} backbone parameters. Let $|\Theta_g|$ and $|\Theta_h|$ denote parameter counts of the backbone and prompt, respectively, and let $T_{\text{wall}}$ be time-to-convergence:

\begin{equation}
\frac{|\Theta_h|}{|\Theta_g|} \le 0.02,
\qquad
T_{\text{wall}}^{\text{prompt}} \ll T_{\text{wall}}^{\text{finetune}} \ll T_{\text{wall}}^{\text{scratch}}.
\end{equation}

In our experiments, SimpleST achieves $2\times$–$46\times$ speedups and reduces GPU memory, while improving MAE by $\sim$3–6\% on average across five datasets and four backbones. During adaptation, the prompt cost is independent of graph size:
\begin{equation}
\begin{aligned}
\text{Prompt time} &= O(L'd^2),\ \text{space} = O(L'd^2 + d)
\quad \text{vs.} \quad \\
\text{GNN fine-tune time} &= O(|E|,L,d),
\end{aligned}
\end{equation}
with small $L'$ (e.g., 2) and $d$ (e.g., 32). A concrete comparison:

\begin{table*}[t]
\centering
\caption{Adaptation-time complexity and parameter footprint}
\vspace{-0.1in}
\label{tab:param-wallclock}
\begin{tabular}{lccc}
\toprule
Method & Trainable Params & Time per step & Graph-size dependence \\
\midrule
Full Fine-tune (GNN) & $|\Theta_g|$ & $O(|E|,L,d)$ & Yes ($|E|$) \\
SimpleST Prompt (ours) & $|\Theta_h|\ll|\Theta_g|$ & $O(L' d^2)$ & No \\
\bottomrule
\end{tabular}
\vspace{-0.1in}
\end{table*}

These results demonstrate that SimpleST’s adaptation is both simpler and more practical under latency or resource constraints.

\paragraph{Abstract addition.}
By tuning $<2\%$ parameters via a lightweight temporal prompt while freezing the backbone, SimpleST achieves 2$\times$–46$\times$ faster adaptation with 3–6\% lower MAE across five real-world datasets.

We study the efficiency of our \model\ framework by comparing it to three tuning techniques: training randomly-initialized ST models on the tuning data (i.e., Base), {\zqr{tuning the OOD methods on the tuning data (i.e., CauSTG and IRM)}} and fine-tuning pretrained ST models on the tuning data (i.e., Finetune). The tuning time from start to model convergence is recorded. The evaluation is done on a server with 48 cores of Intel(R) Xeon(R) CPU Max 9468 CPU max 2101 MHz and CPU min 800 MHz, and 8 NVIDIA H100 80GB HBM3 GPUs. In Table~\ref{tab:overall_efficiency}, we show the tuning time {\zqr{and GPU costs}} of using different backbone models, using two-week tuning data, respectively. In Table~\ref{tab:tuning_time}, we present the running time of tuning models with 1-day, 1-week, and 2-week tunning data. MTGNN is employed as the backbone in this experiment. From the results, we have the following conclusions:

\textbf{1) Efficiency {\zqr{and GPU cost}} of prompt tuning}: The advantageous tuning efficiency of our \model\ model is evidenced by its ability to significantly reduce the tuning time {\zqr{and GPU cost}} required on different datasets. The higher efficiency {\zqr{and lower resource cost}} are achieved by focusing the tuning process on a smaller parameter set specific to the prompt network. By doing so, we effectively reduce the computational overhead associated with optimization calculations. Moreover, the reduced parameter set of the prompt network helps constrain the solution space of the model. This constrained solution space facilitates easier convergence during the training process. As a result, our \model\ model can achieve optimal performance with fewer iterations and computations compared to models with larger parameter spaces. Overall, the streamlined tuning process of our \model\ model not only saves computational resources but also enables faster convergence and more efficient utilization of data. This improvement in tuning efficiency contributes to the overall effectiveness and practicality of our model across different datasets and problem domains.

\textbf{2) Comparing fine-tuning to training from scratch}: While both compared tuning techniques optimize the same number of model parameters, we generally observe higher tuning efficiency with the fine-tuning method. This can be attributed to the pretraining process, which provides better starting points for the fine-tuning method, enabling faster convergence. However, there are instances where training from scratch outperforms fine-tuning. These cases reflect the uncertainty of whether the pretrained model state is advantageous or detrimental for training on the tuning data.

\begin{figure*}
% \vspace*{-1mm}
\centering
% \begin{tabular}{c c c}
  \begin{minipage}{0.25\textwidth}
	\includegraphics[width=\textwidth]{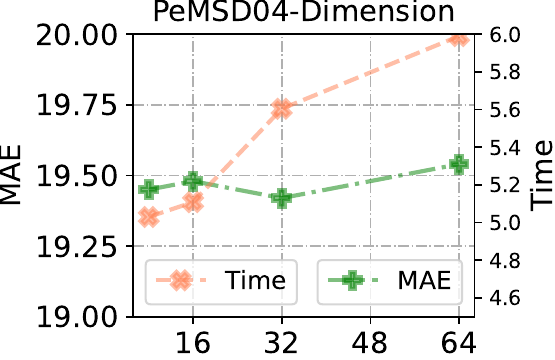}
  \end{minipage}%\hspace{-3.mm}
  % &
  \begin{minipage}{0.25\textwidth}
	\includegraphics[width=\textwidth]{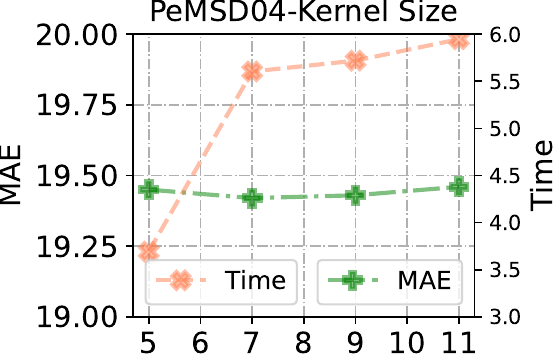}
  \end{minipage}%\hspace{5.0mm}
  % &
  \begin{minipage}{0.25\textwidth}
	\includegraphics[width=\textwidth]{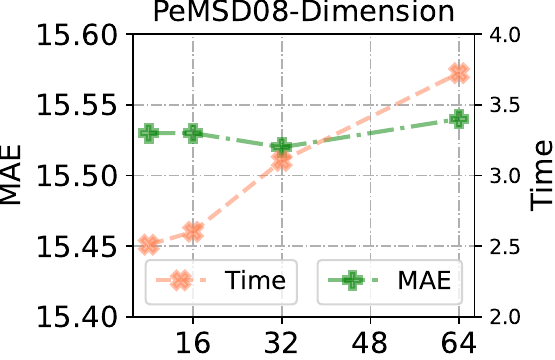}
  \end{minipage}%\hspace{5.0mm}
 \\
  \begin{minipage}{0.25\textwidth}
    \includegraphics[width=\textwidth]{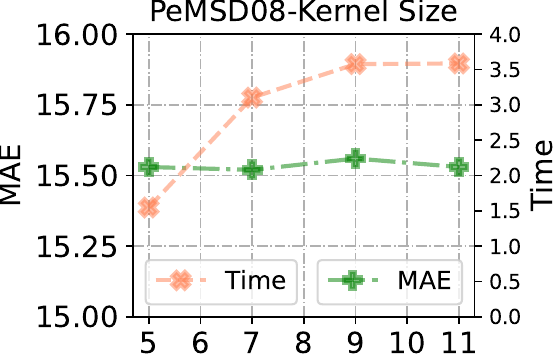}
  \end{minipage}%\hspace{5.0mm}
  % &
\begin{minipage}{0.25\textwidth}
    \includegraphics[width=\textwidth]{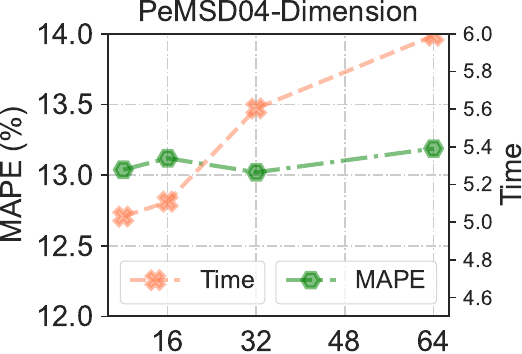}
  \end{minipage}%\hspace{5.0mm}
  % &
  \begin{minipage}{0.25\textwidth}
	\includegraphics[width=\textwidth]{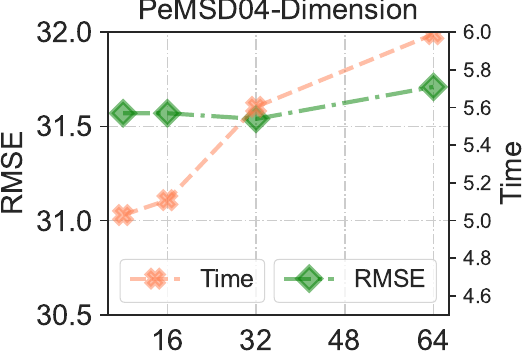}
  \end{minipage}%\hspace{5.0mm}
\\
  \begin{minipage}{0.25\textwidth}
	\includegraphics[width=\textwidth]{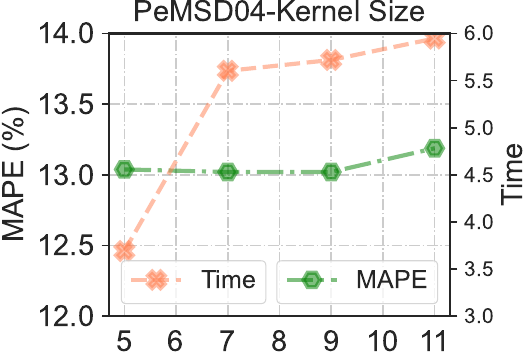}
  \end{minipage}%\hspace{-3.mm}
  % &
  \begin{minipage}{0.25\textwidth}
    \includegraphics[width=\textwidth]{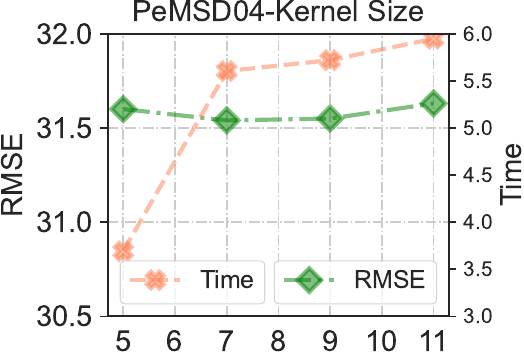}
  \end{minipage}%\hspace{5.0mm}
  % &
  \begin{minipage}{0.25\textwidth}
	\includegraphics[width=\textwidth]{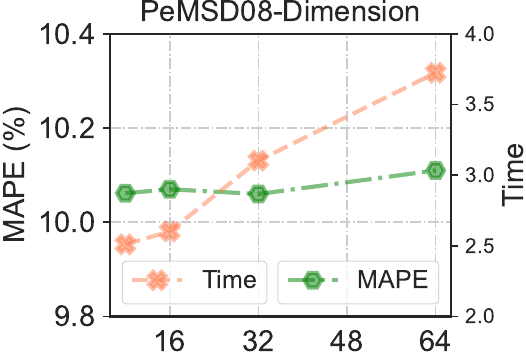}
  \end{minipage}%\hspace{5.0mm}
 % &
 \\
  \begin{minipage}{0.25\textwidth}
	\includegraphics[width=\textwidth]{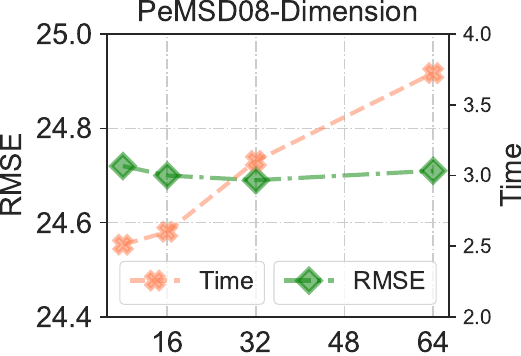}
  \end{minipage}%\hspace{-3.mm}
% &
  \begin{minipage}{0.25\textwidth}
	\includegraphics[width=\textwidth]{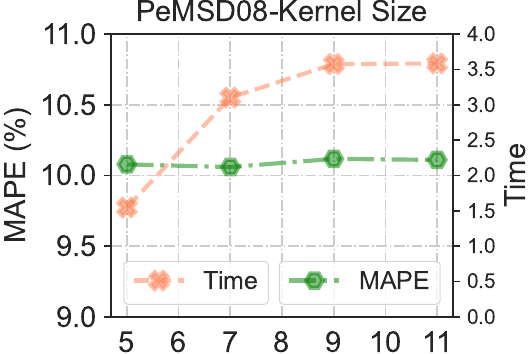}
  \end{minipage}%\hspace{5.0mm}
  % &
  \begin{minipage}{0.25\textwidth}
	\includegraphics[width=\textwidth]{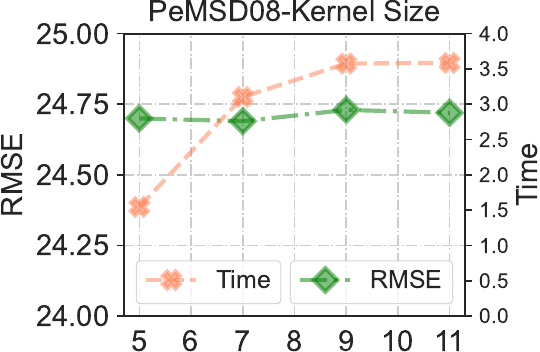}
  \end{minipage}%\hspace{5.0mm}
\caption{Hyperparameter study of \model\ with the base MTGNN on traffic prediction.}
% \vspace*{-1mm}
% \vspace{-0.15in}
\vspace*{-0.1in}
\label{fig:hyper_traffic}
\end{figure*}

\textbf{3) Efficiency under different tuning data size}: As the amount of tuning data increases, specifically for durations of 1 day, 1 week, and 2 weeks, the tuning time for all three methods naturally increases due to the growing complexity of the data. However, the efficiency advantage of our \model\ model becomes even more apparent and pronounced in these scenarios. Notably, our \model\ model displays remarkable efficiency by maintaining its advantage and, in some cases, even enhancing it when dealing with larger tuning sets. This efficiency advantage is a result of the model's streamlined architecture and its ability to effectively leverage the provided prompt network. By focusing on tuning a smaller parameter set specific to the prompt network, our model reduces the computational burden associated with optimization calculations. As a consequence, our \model\ model demonstrates its efficiency by effectively handling larger tuning sets while maintaining its performance and convergence speed. This efficiency advantage is especially crucial in real-world applications where dealing with substantial amounts of data is common and time is a critical factor. In summary, our \model\ model showcases its efficiency by effectively managing the increased complexity and tuning time that come with larger tuning sets. This efficiency advantage further underscores the practicality and effectiveness of our model in real-world scenarios.

\textbf{4) Impact of backbone models}: By referring to Table~\ref{tab:overall_efficiency}, it becomes evident that \model\ effectively expedites the tuning process for various ST models, irrespective of their size. This attribute holds significant value in real-world applications. From the results, we observed that our prompt tuning neural network significantly improved the efficiency of different baselines, {\zqr{saving the time cost by approximately 1 time to 46 times.}} {\zqr{Our method, \model, demonstrates a more significant efficiency improvement when applied to more complex backbone models like STG-NCDE. This is likely due to the substantial difference in the number of parameters between the original methods and our approach.}}

{\zqr{\textbf{5) Impact of the OOD Methods}: The combination of base models with the OOD methods CauSTG and IRM exhibit varying efficiency and resource cost performance depending on the complexity of the base model. \emph{Simpler Models (AGCRN, MTGNN)}: Combining CauSTG with these models leads to higher resource consumption and lower performance improvements, with longer training times. \emph{More Complex Models (STG-NCDE)}: CauSTG integration with these models results in shorter training times and some performance improvement.
We also explored integrating other OOD methods~\cite{xia2024deciphering,zhang2022dynamic} into our base models. However, these methods require edge relations to address the OOD issue, which is not compatible with backbone models.
}}

By referring to the results presented in Table~\ref{tab:overall_efficiency}, it becomes evident that our \model\ model effectively expedites the tuning process for various spatio-temporal (ST) models. This attribute holds significant value in real-world applications where time efficiency is crucial. In Table~\ref{tab:overall_efficiency}, we can observe the positive impact of our prompt tuning neural network on the efficiency of different baseline models. The tuning process with our \model\ model is significantly faster than baselines, achieving a speedup of approximately {\zqr{1 to 45 times}}. This significant reduction in tuning time validates the advantage of the {\zqr{simple structure}} employed in our model, which effectively saves time during the optimization process. These findings show the practical benefits of our \model\ model, especially when fast development and deployment are needed. By speeding up tuning, it helps researchers and practitioners test ideas faster, leading to quicker model development and better decisions. In conclusion, the results reported in Table~\ref{tab:overall_efficiency} indicate that our \model\ model substantially accelerates the tuning process for a range of ST models, leading to notable reductions in time cost. This efficiency is highly valuable for real-world deployments, underscoring both the practicality and usefulness of our method.

{\fn{These results align with the system-level efficiency studies commonly discussed in the VLDB community. In line with BigST~\cite{han2024bigst} and DeepTEA~\cite{han2022deeptea}, our experiments treat GPU consumption and wall-clock time as primary indicators of scalability. \model’s lightweight prompt tuning carries these efficiency considerations beyond model architecture design into parameter-adaptive learning, thereby forming a connection between algorithmic (learning-based) and systems (data-management) optimization.}}

\begin{figure*}
    % \vspace*{-1mm}
    \centering
    \begin{tabular}{c c c}
    % \hspace{-4.0mm}
      \begin{minipage}{0.27\textwidth}
       \includegraphics[width=\textwidth]{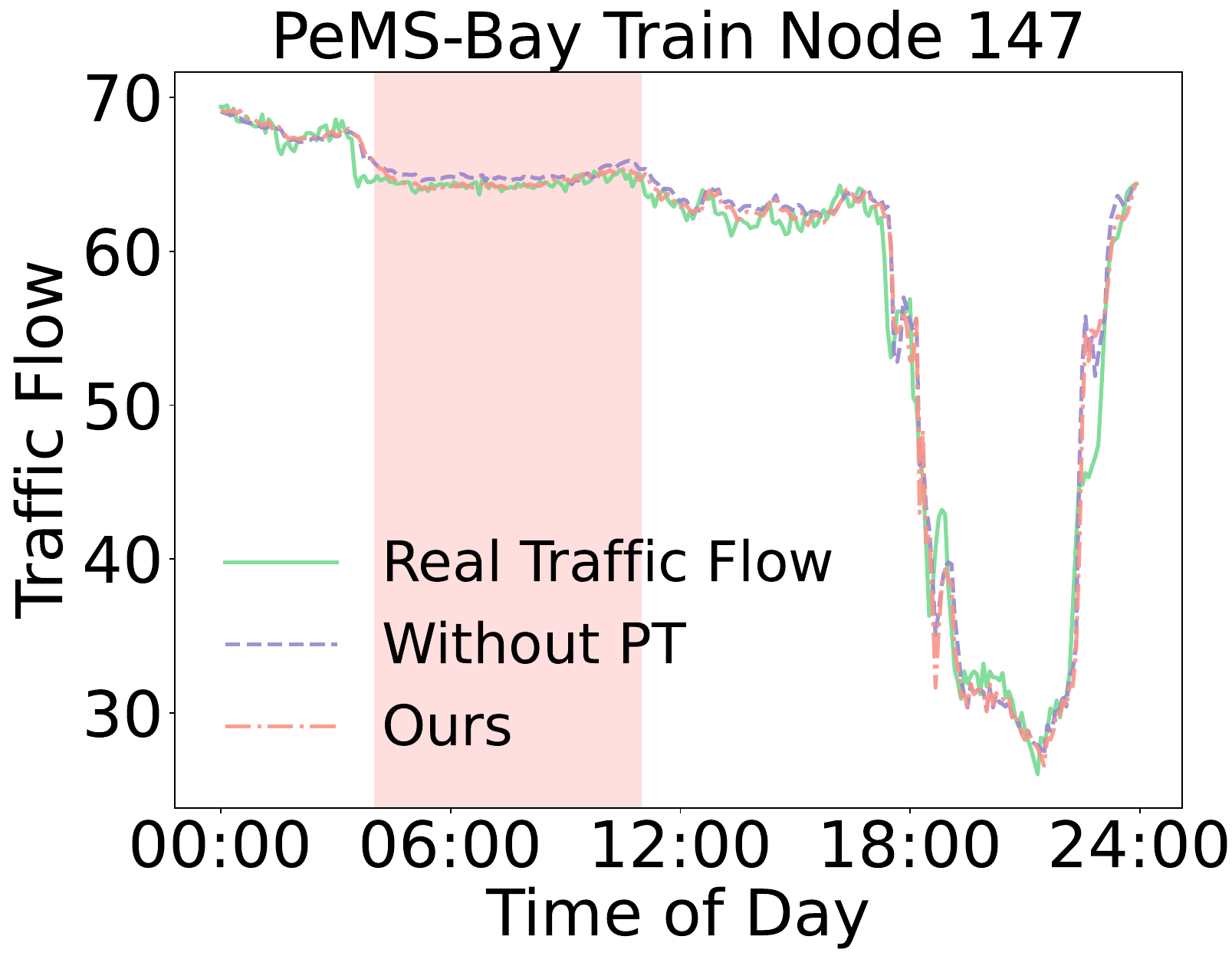}
      \end{minipage}\hspace{5.0mm}
      &
      \begin{minipage}{0.27\textwidth}
    	\includegraphics[width=\textwidth]{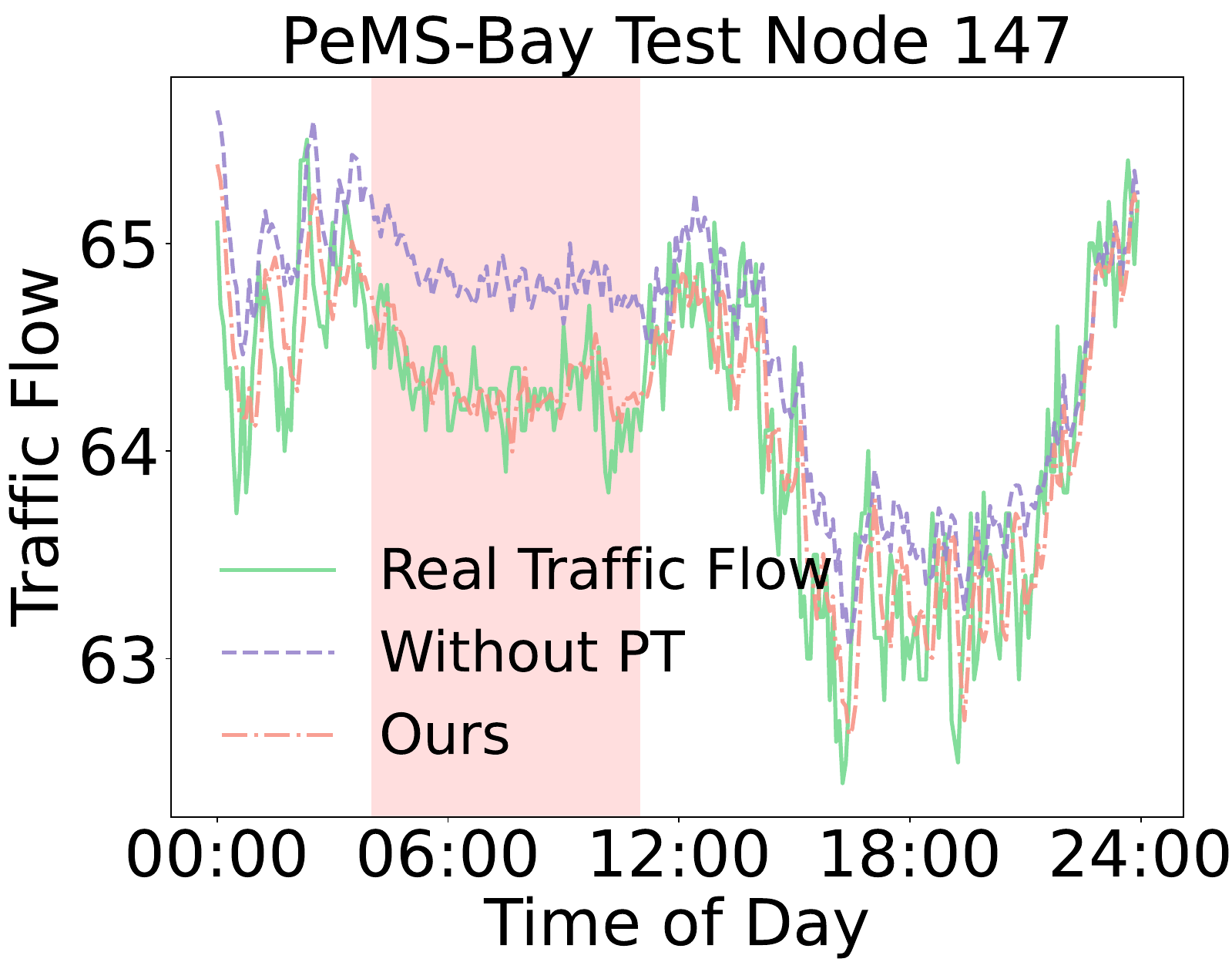}
      \end{minipage}\hspace{5.0mm}
      &
      \begin{minipage}{0.27\textwidth}
    	\includegraphics[width=\textwidth]{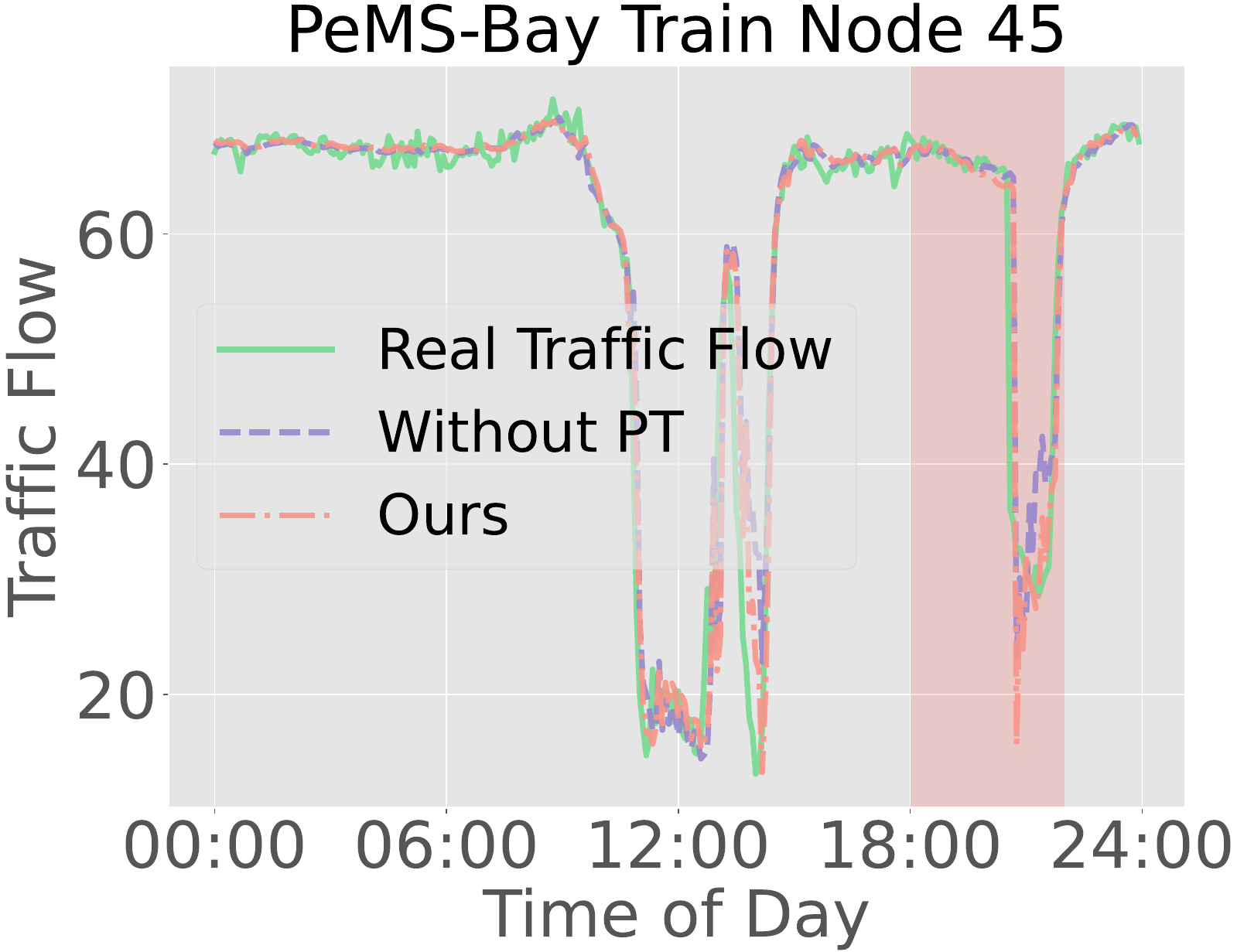}
      \end{minipage}\hspace{5.0mm}
     \\
      \begin{minipage}{0.27\textwidth}
        \includegraphics[width=\textwidth]{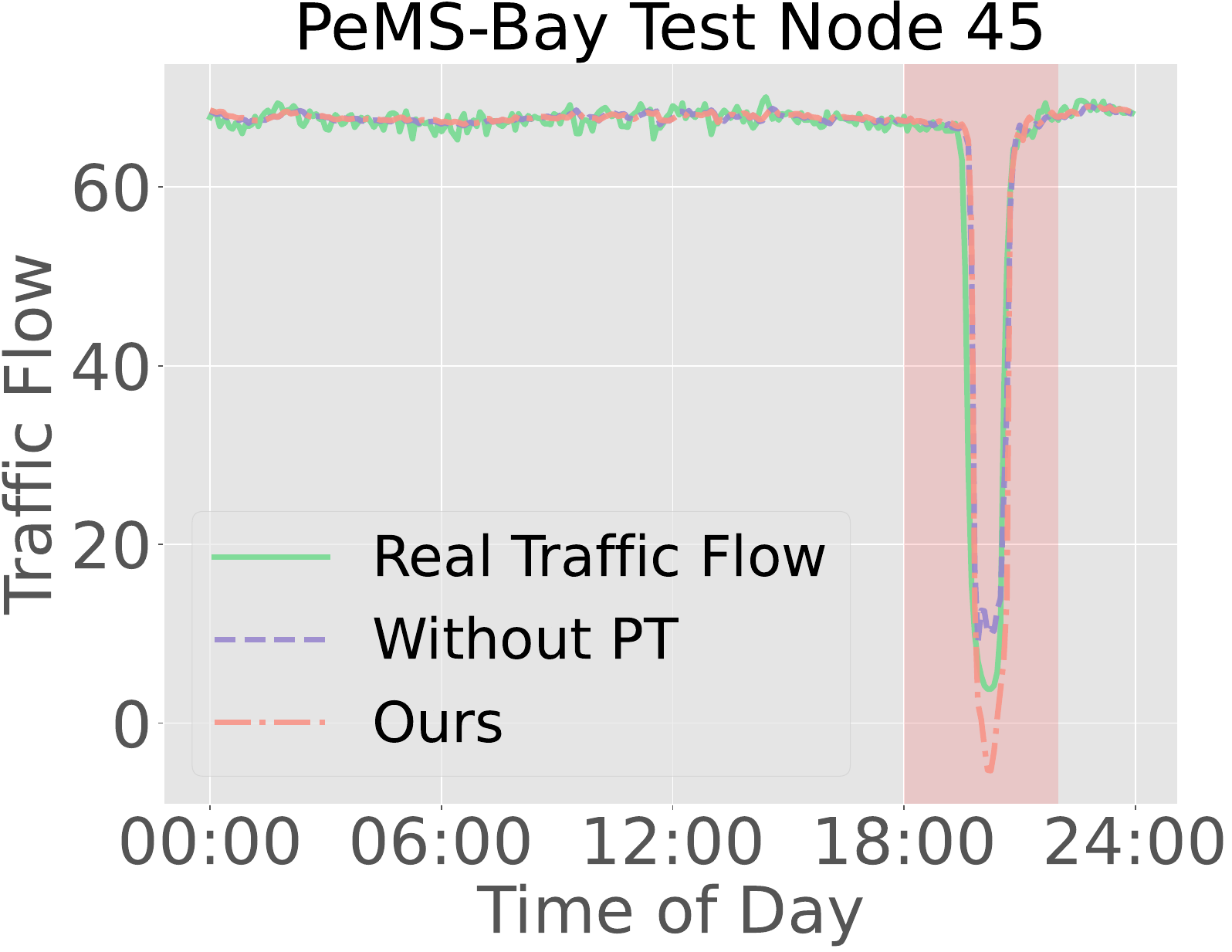}
      \end{minipage}\hspace{5.0mm}
      &
      \begin{minipage}{0.27\textwidth}
    	\includegraphics[width=\textwidth]{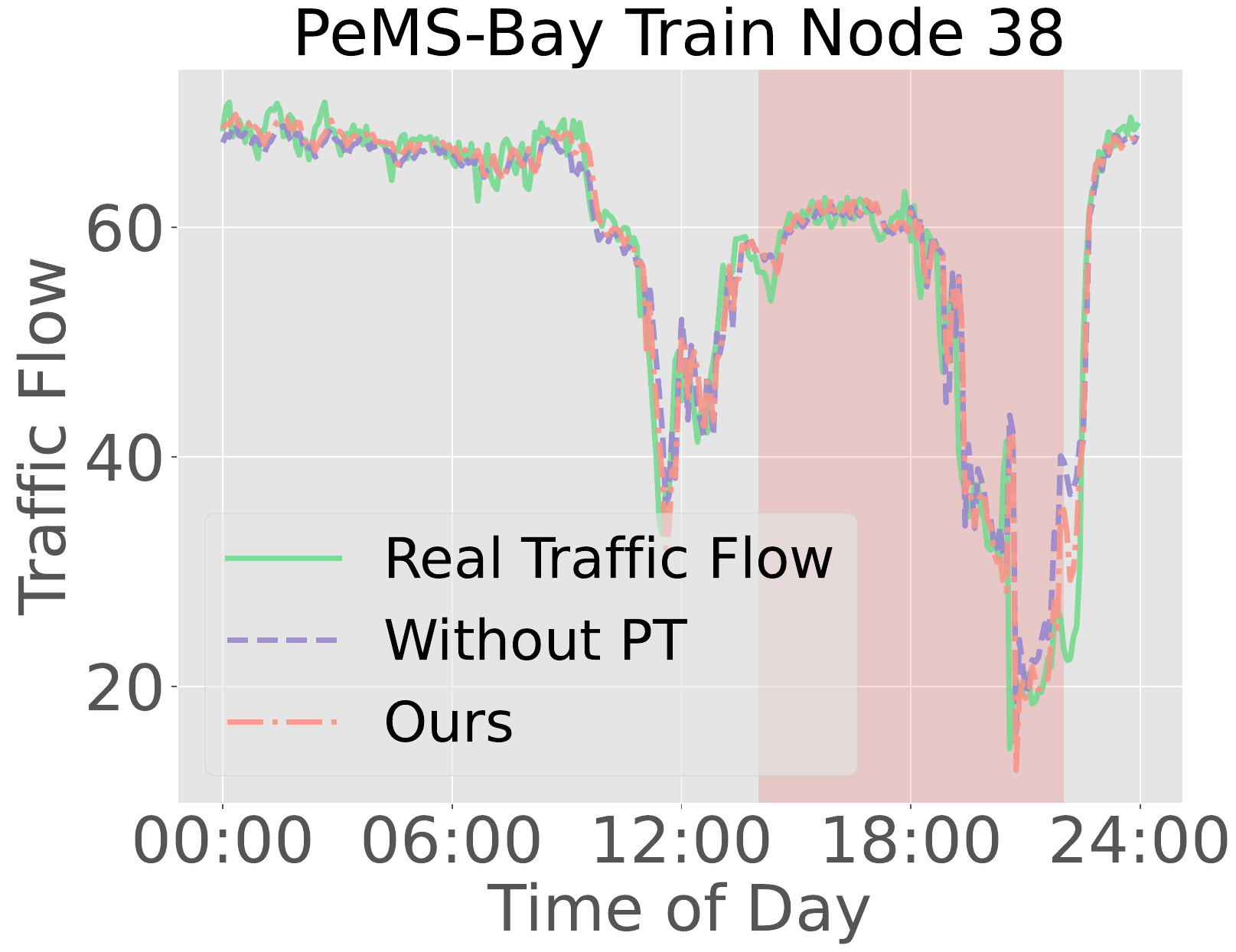}
      \end{minipage}\hspace{5.0mm}
     &
      \begin{minipage}{0.27\textwidth}
        \includegraphics[width=\textwidth]{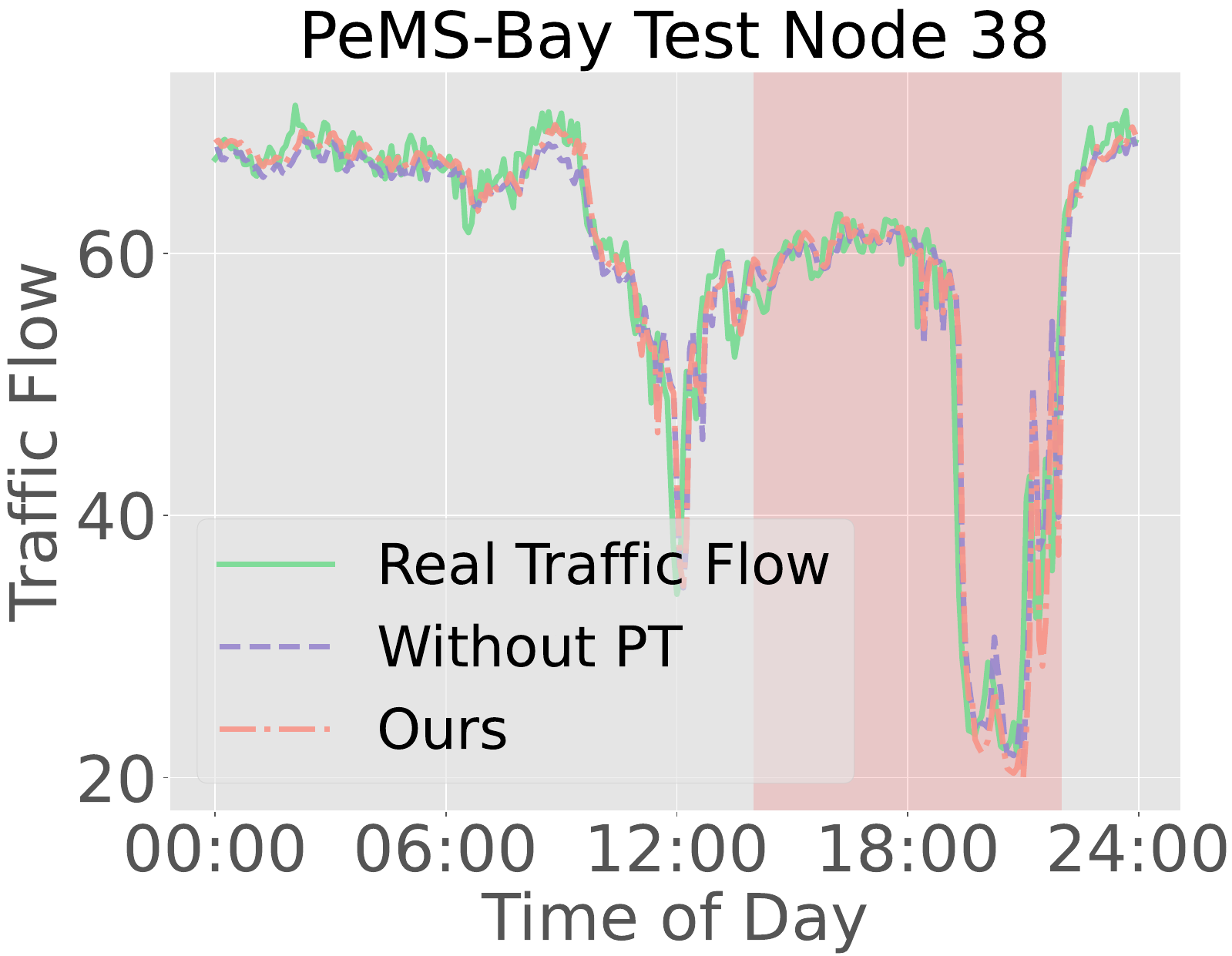}
      \end{minipage}\hspace{5.0mm}
    \end{tabular}
    \caption{Case study of \model\ with the base MTGNN on PeMS-Bay to show data distribution shift in terms of 1 day.}
    \label{fig:case_study_part1}
    % \vspace*{-0.1in}
\end{figure*}

\begin{figure*}
    % \vspace*{-1mm}
    \centering
    \begin{tabular}{c c c c}
      \begin{minipage}{0.23\textwidth}
       \includegraphics[width=\textwidth]{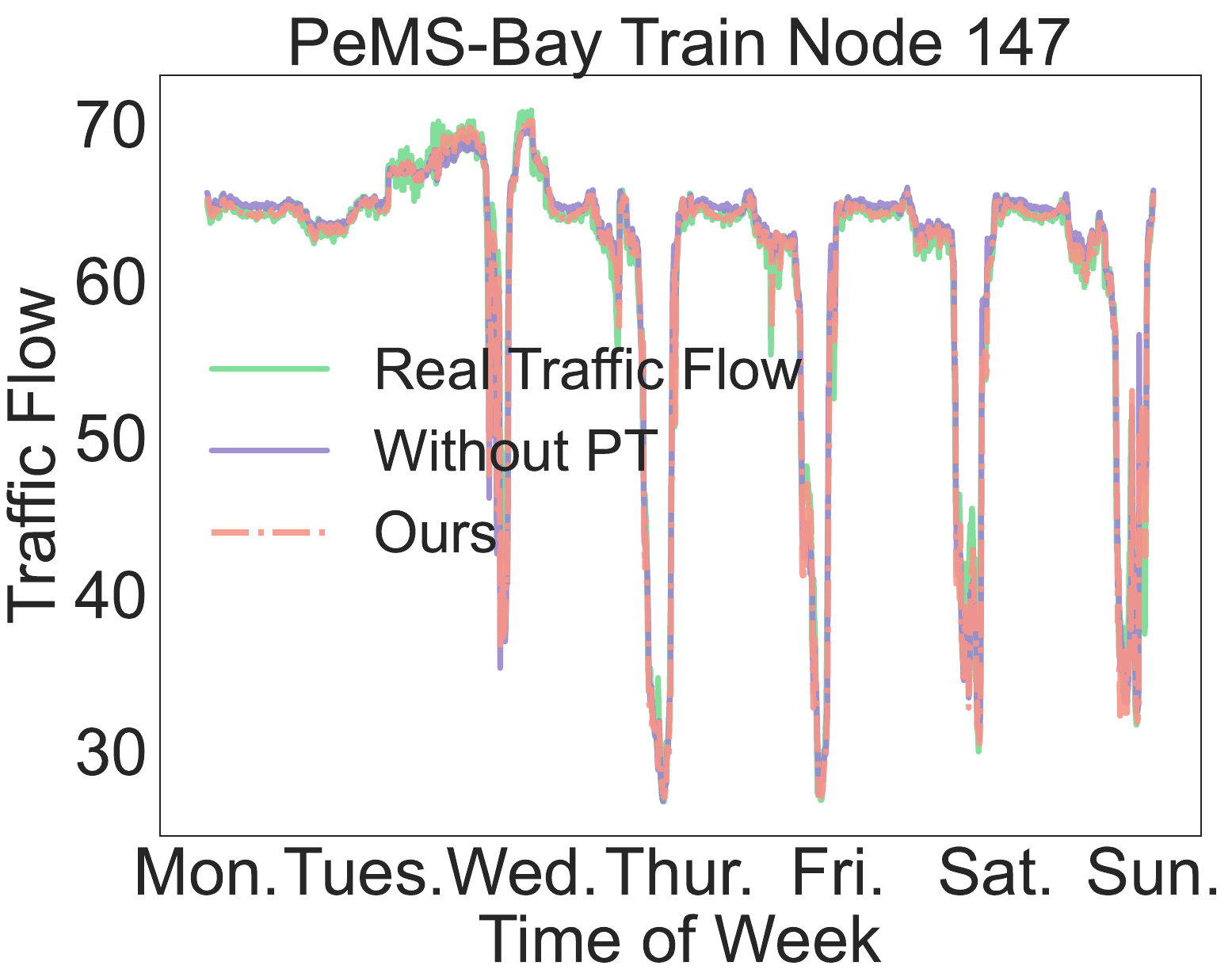}
      \end{minipage}\hspace{-3.0mm}
      &
      \begin{minipage}{0.23\textwidth}
    	\includegraphics[width=\textwidth]{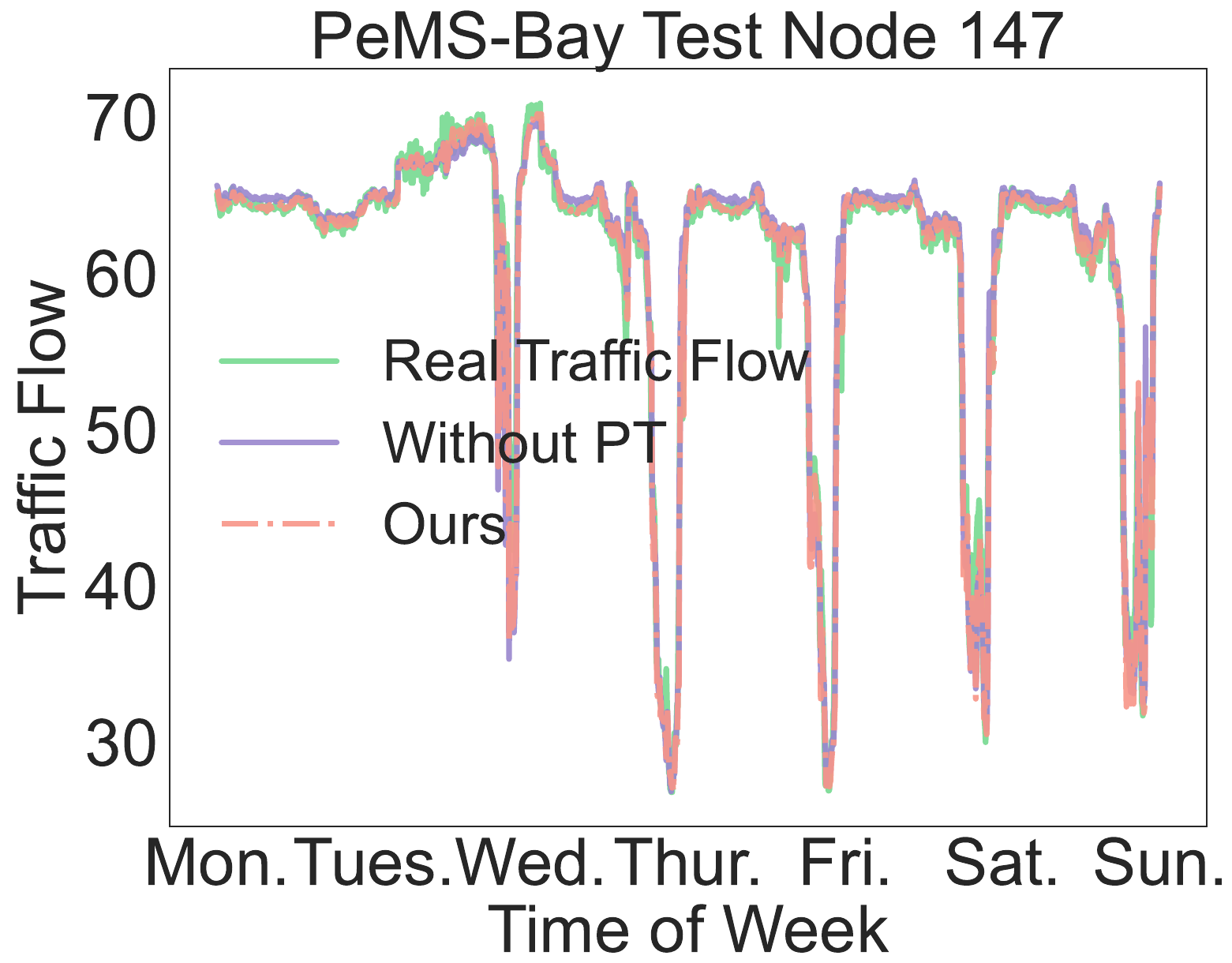}
      \end{minipage}\hspace{-3.0mm}
      &
      \begin{minipage}{0.23\textwidth}
    	\includegraphics[width=\textwidth]{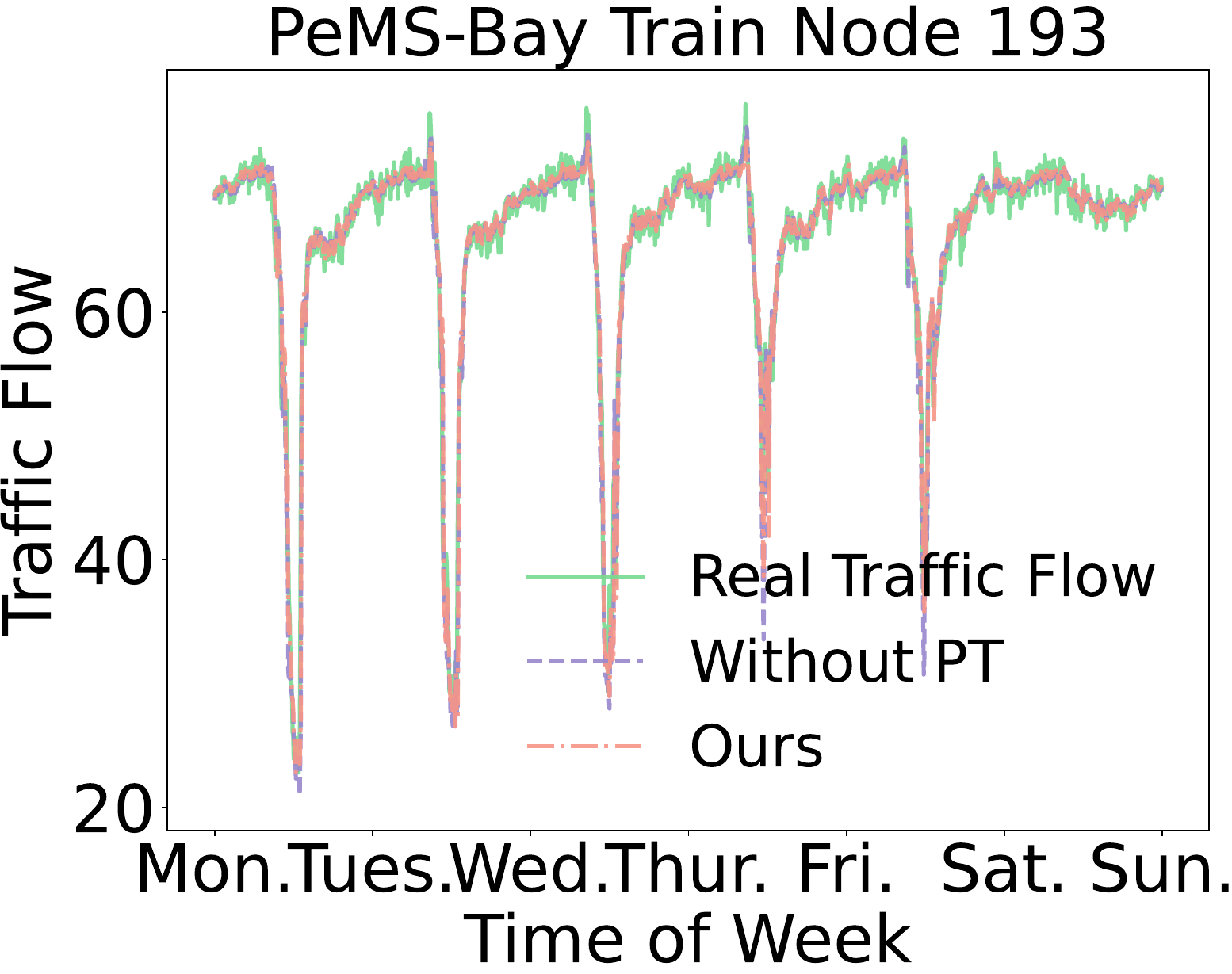}
      \end{minipage}\hspace{-3.0mm}
     &
      \begin{minipage}{0.23\textwidth}
        \includegraphics[width=\textwidth]{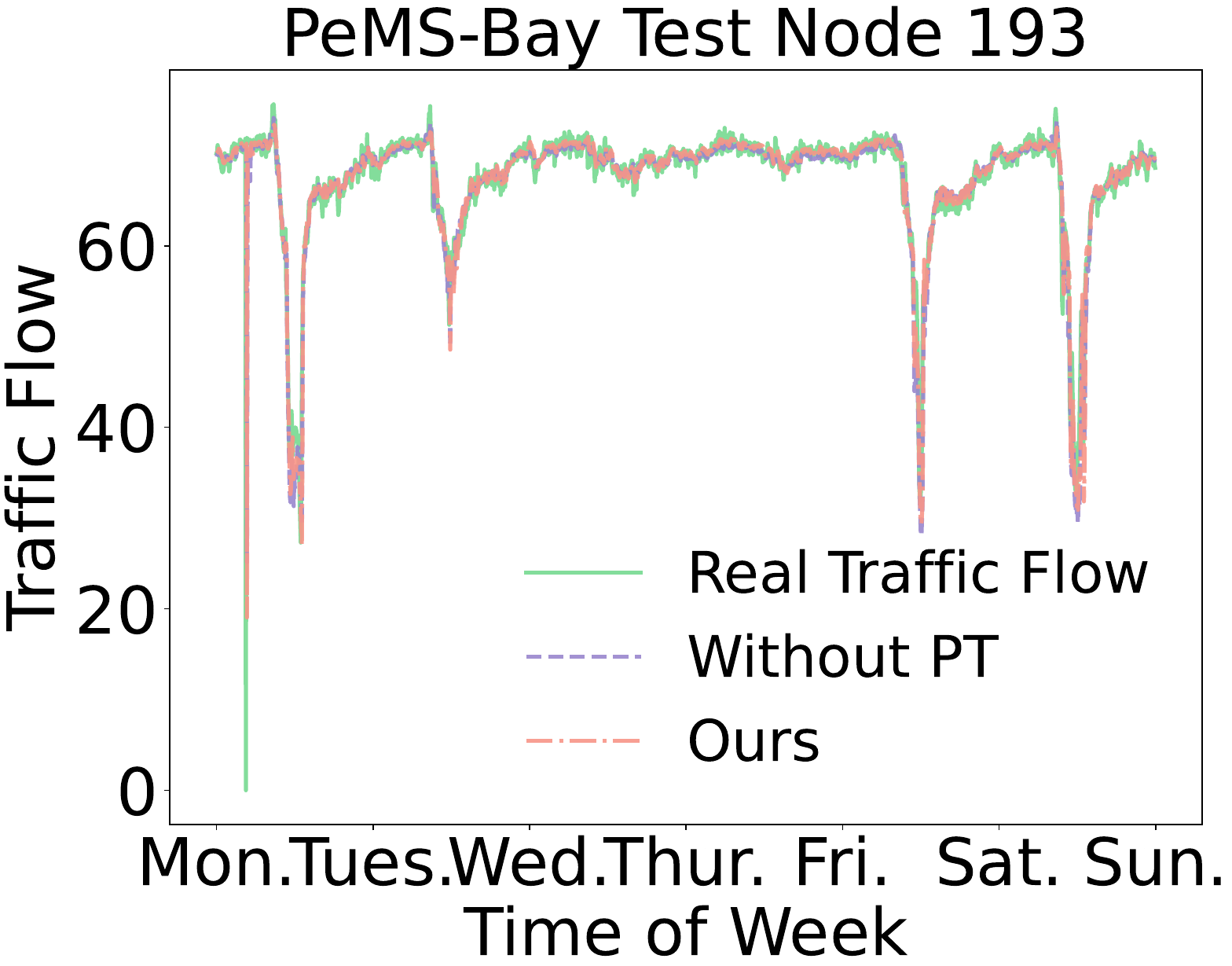}
      \end{minipage}\hspace{-3.0mm}
      \\\hspace{-4.0mm}
      \begin{minipage}{0.23\textwidth}
       \includegraphics[width=\textwidth]{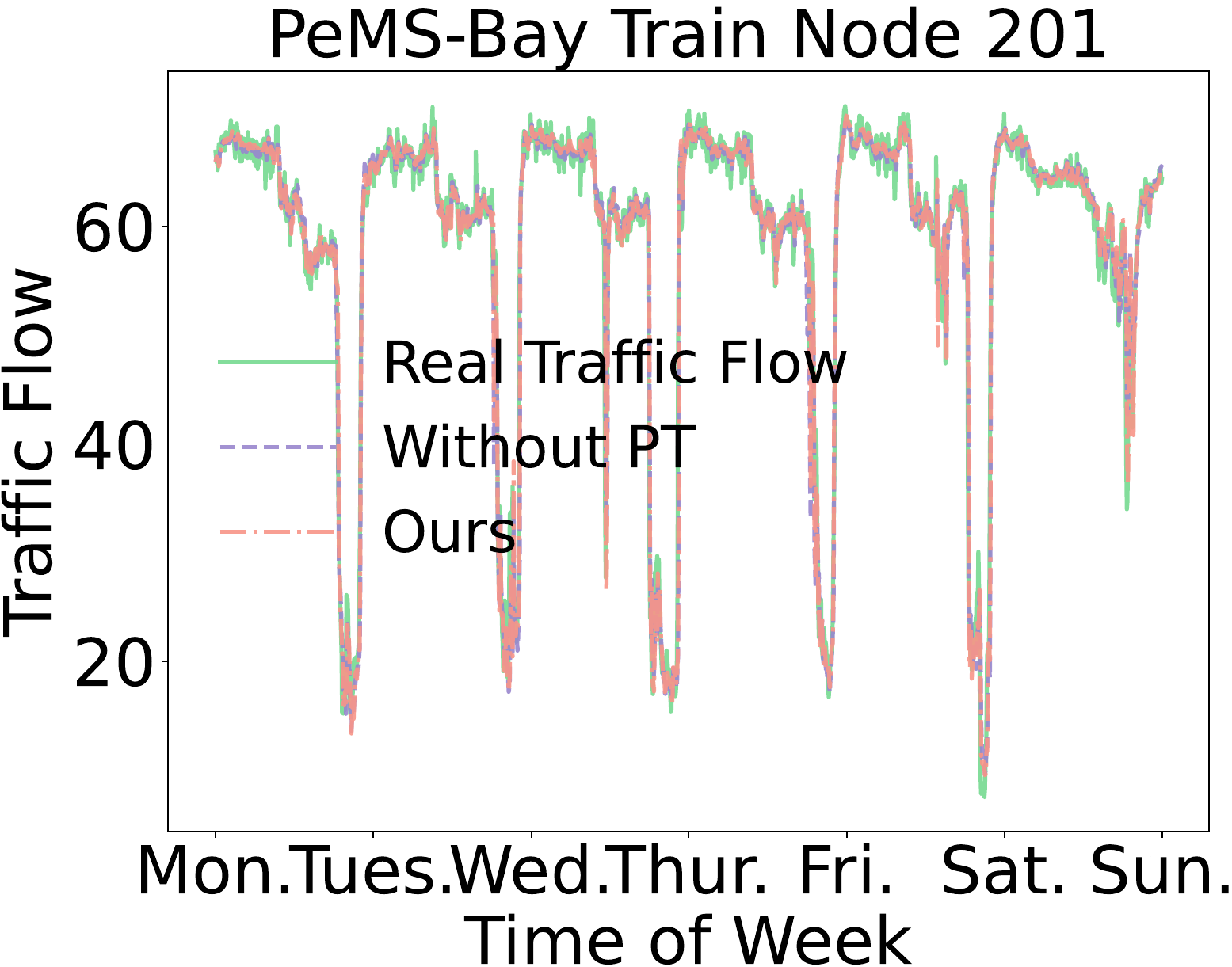}
      \end{minipage}\hspace{-3.0mm}
      &
      \begin{minipage}{0.23\textwidth}
    	\includegraphics[width=\textwidth]{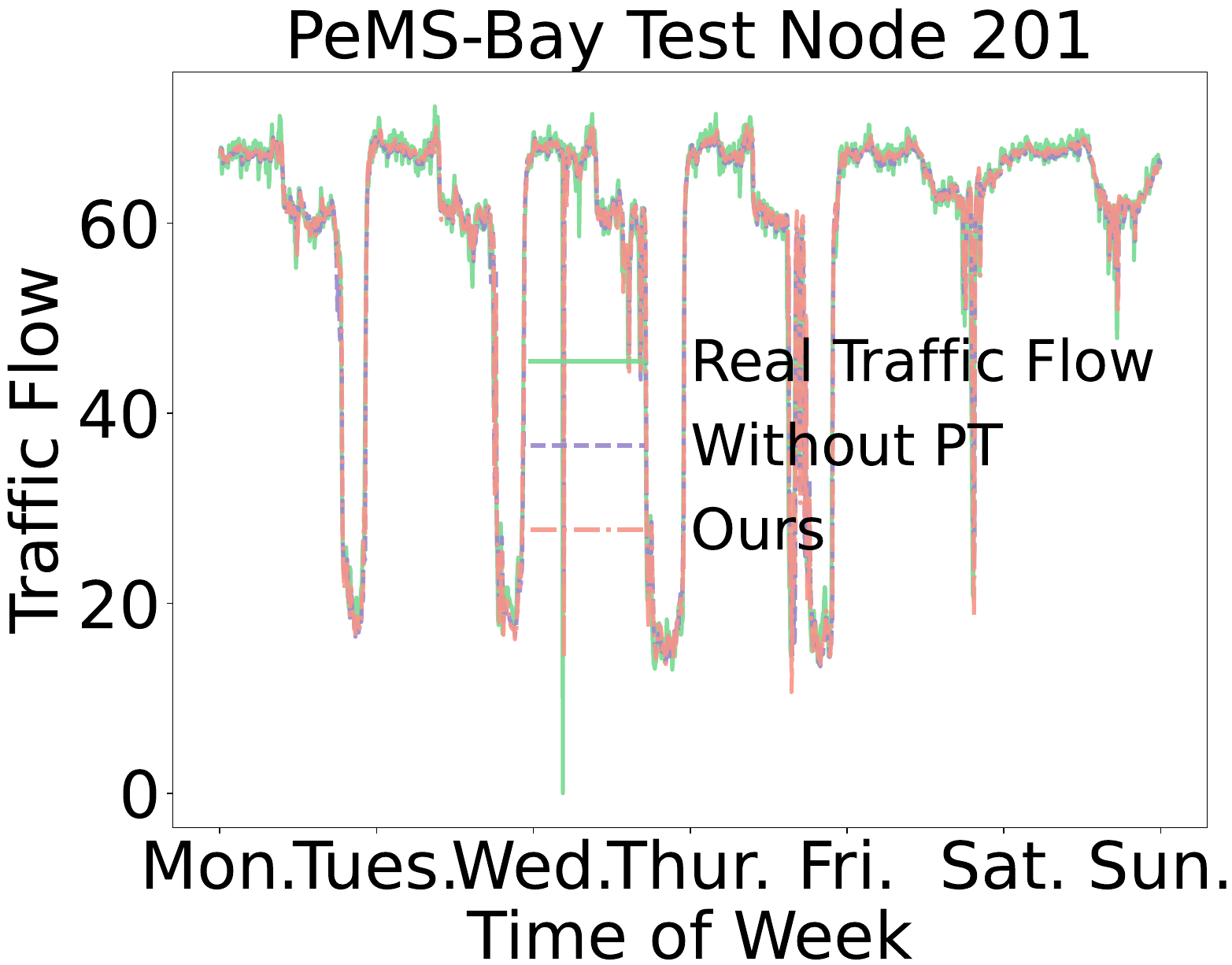}
      \end{minipage}\hspace{-3.0mm}
      &
      \begin{minipage}{0.23\textwidth}
    	\includegraphics[width=\textwidth]{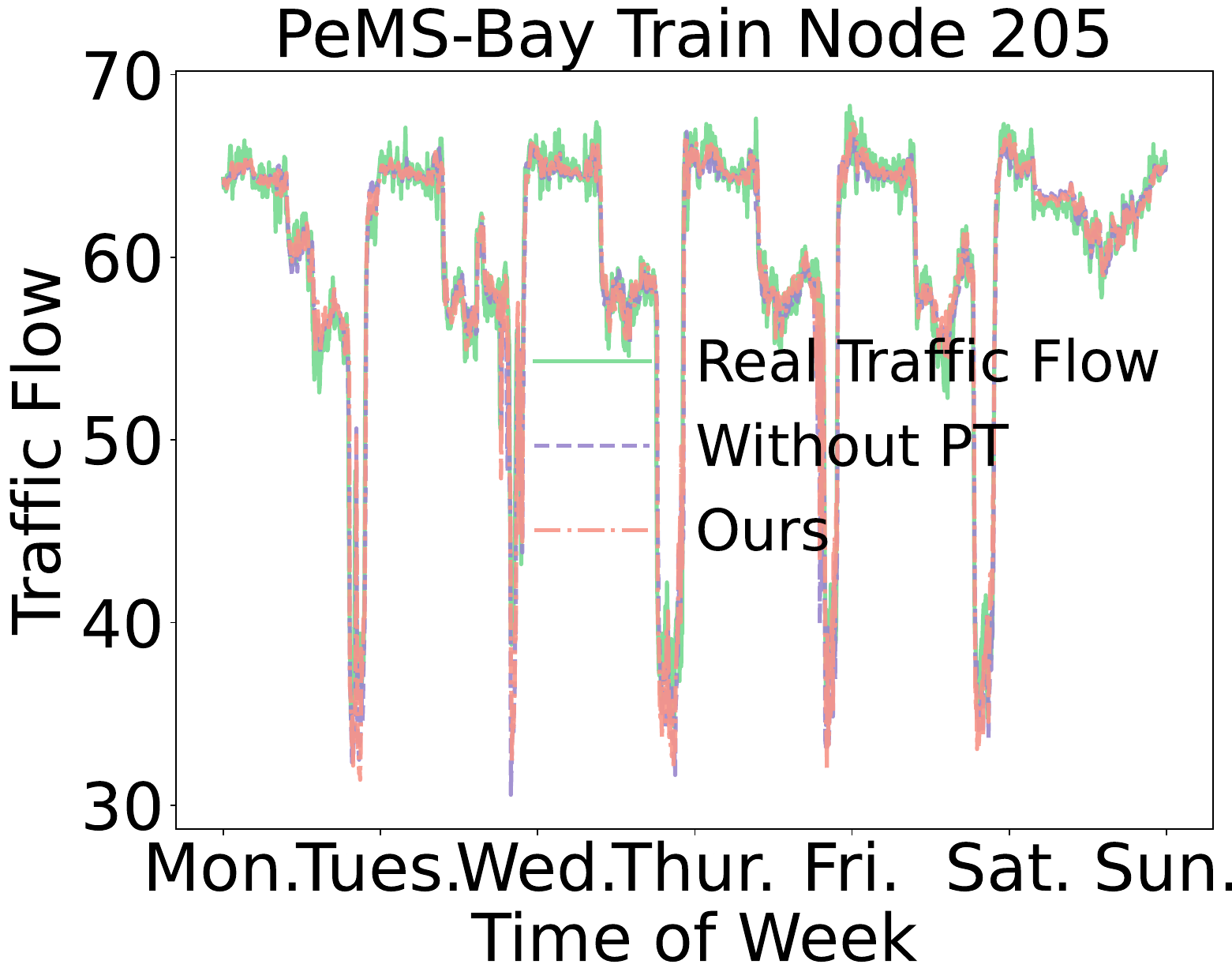}
      \end{minipage}\hspace{-3.0mm}
     &
      \begin{minipage}{0.23\textwidth}
        \includegraphics[width=\textwidth]{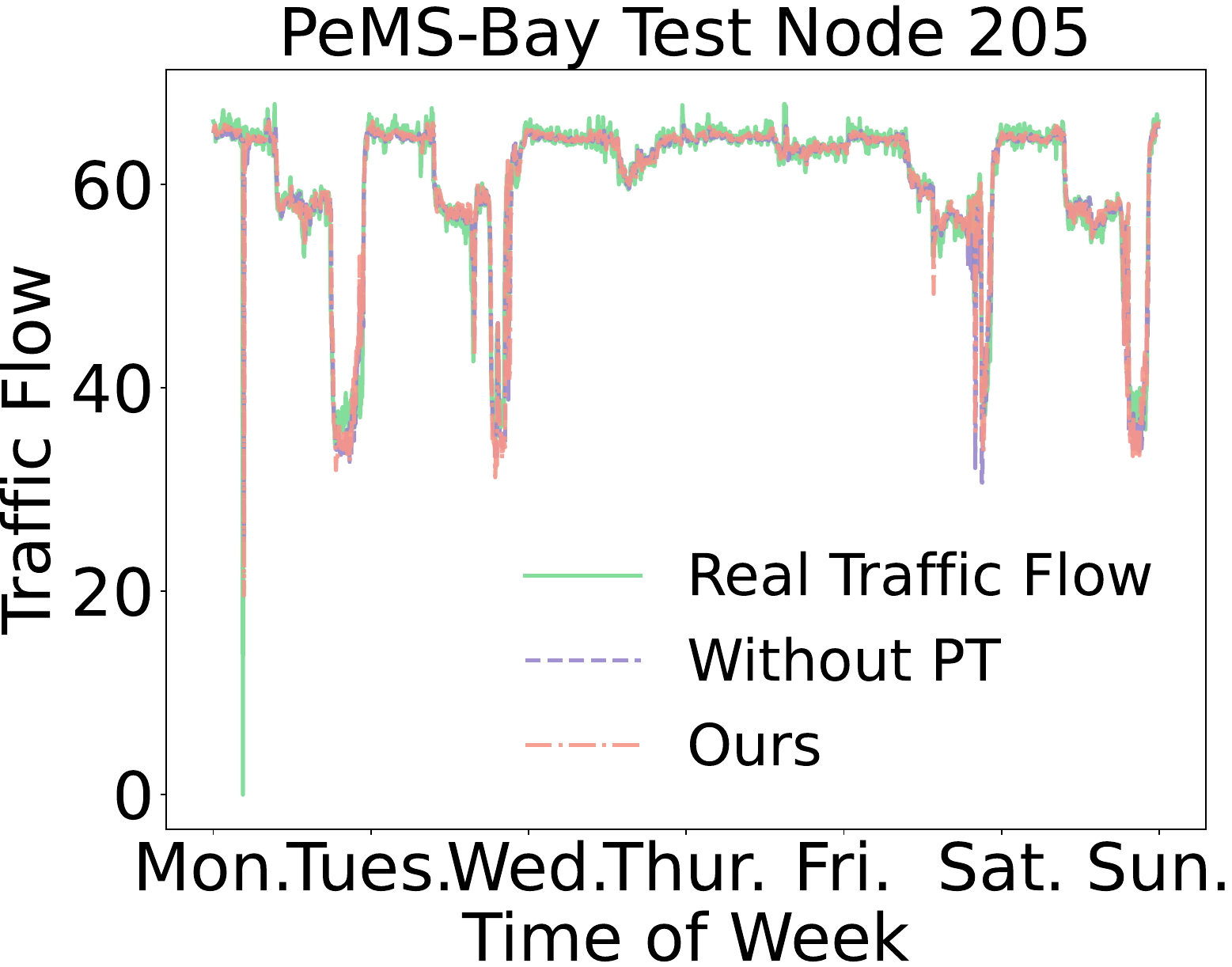}
      \end{minipage}\hspace{-3.0mm}
      \\\hspace{-4.0mm}
      \begin{minipage}{0.23\textwidth}
       \includegraphics[width=\textwidth]{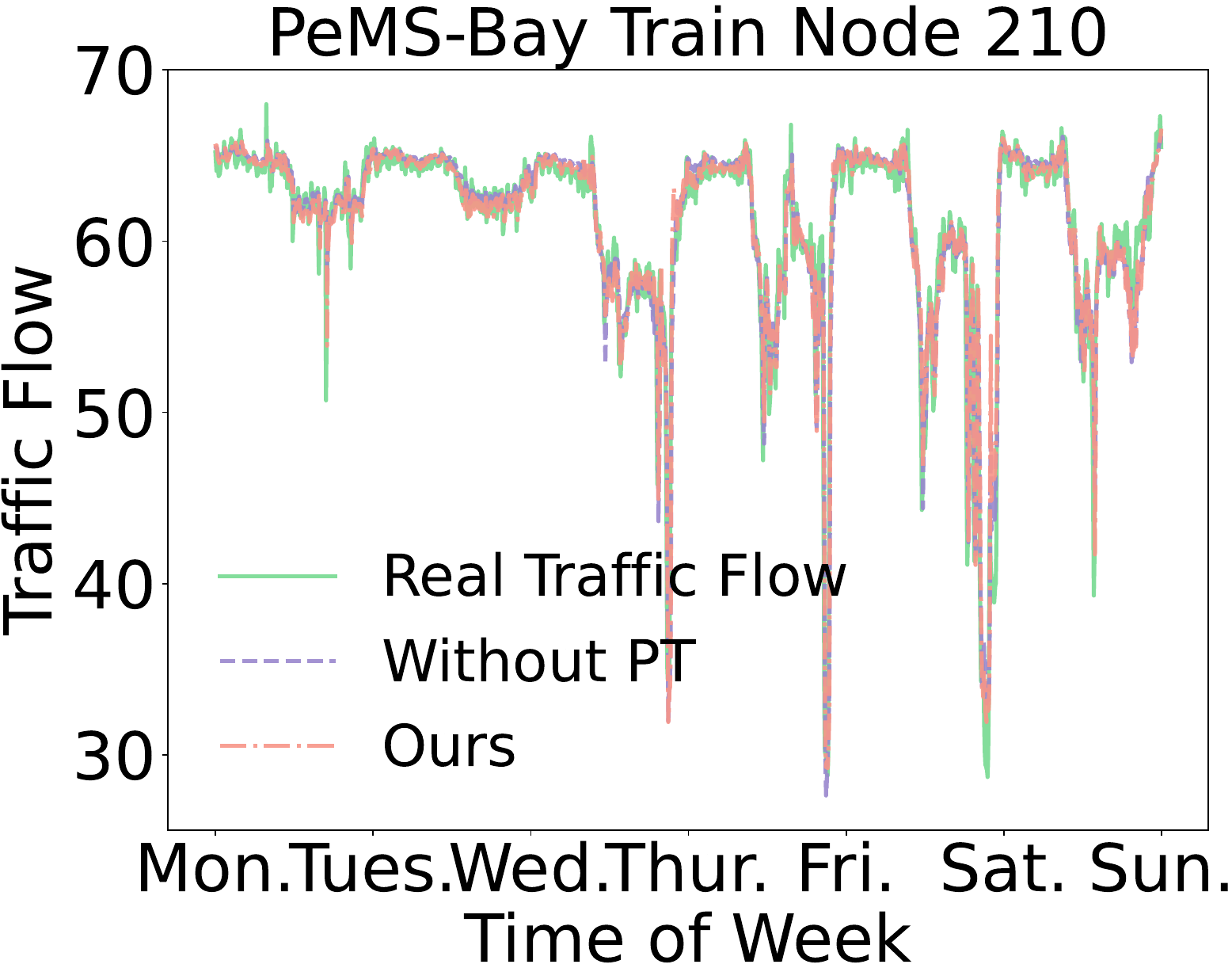}
      \end{minipage}\hspace{-3.0mm}
      &
      \begin{minipage}{0.23\textwidth}
    	\includegraphics[width=\textwidth]{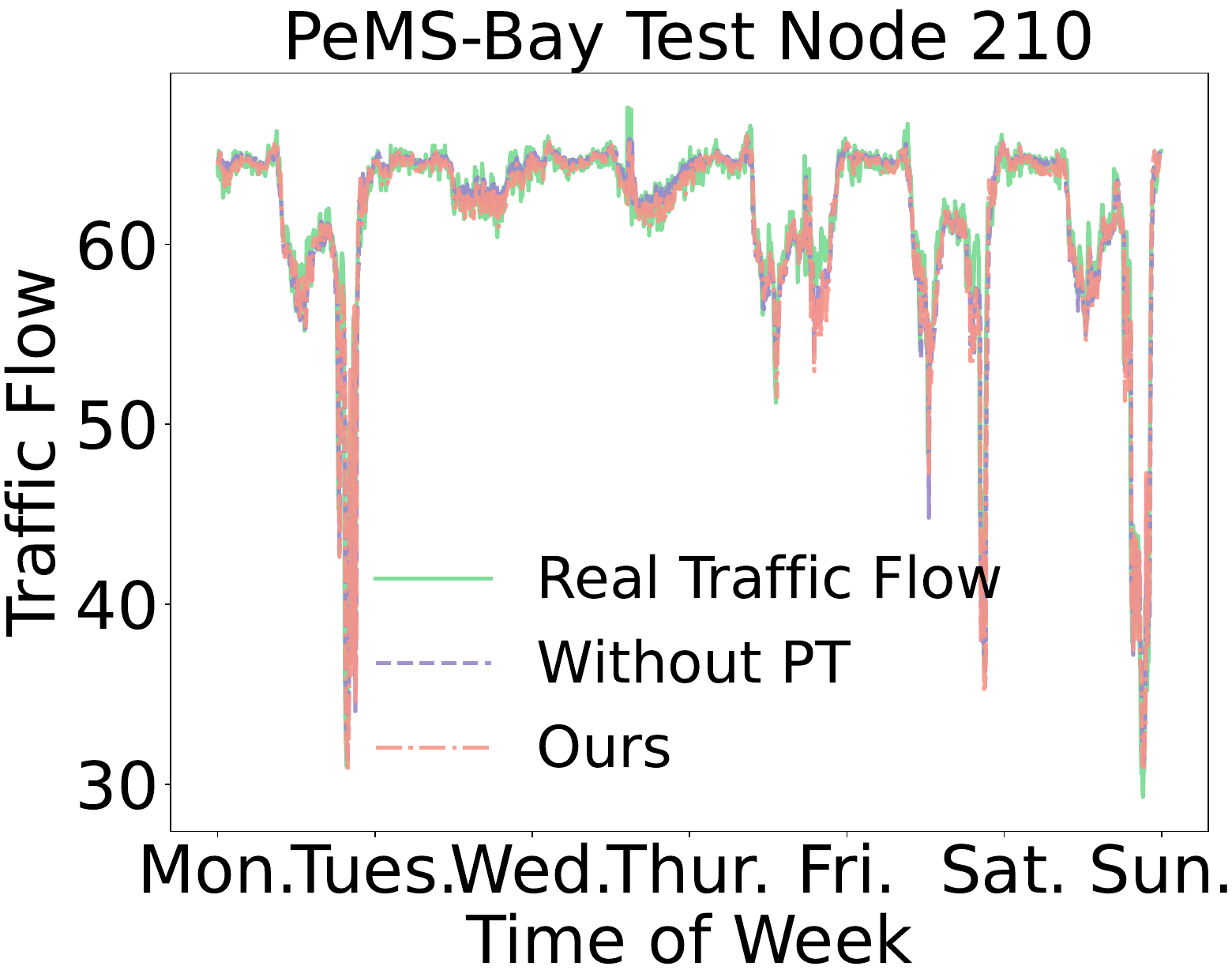}
      \end{minipage}\hspace{-3.0mm}
      &
      \begin{minipage}{0.23\textwidth}
       \includegraphics[width=\textwidth]{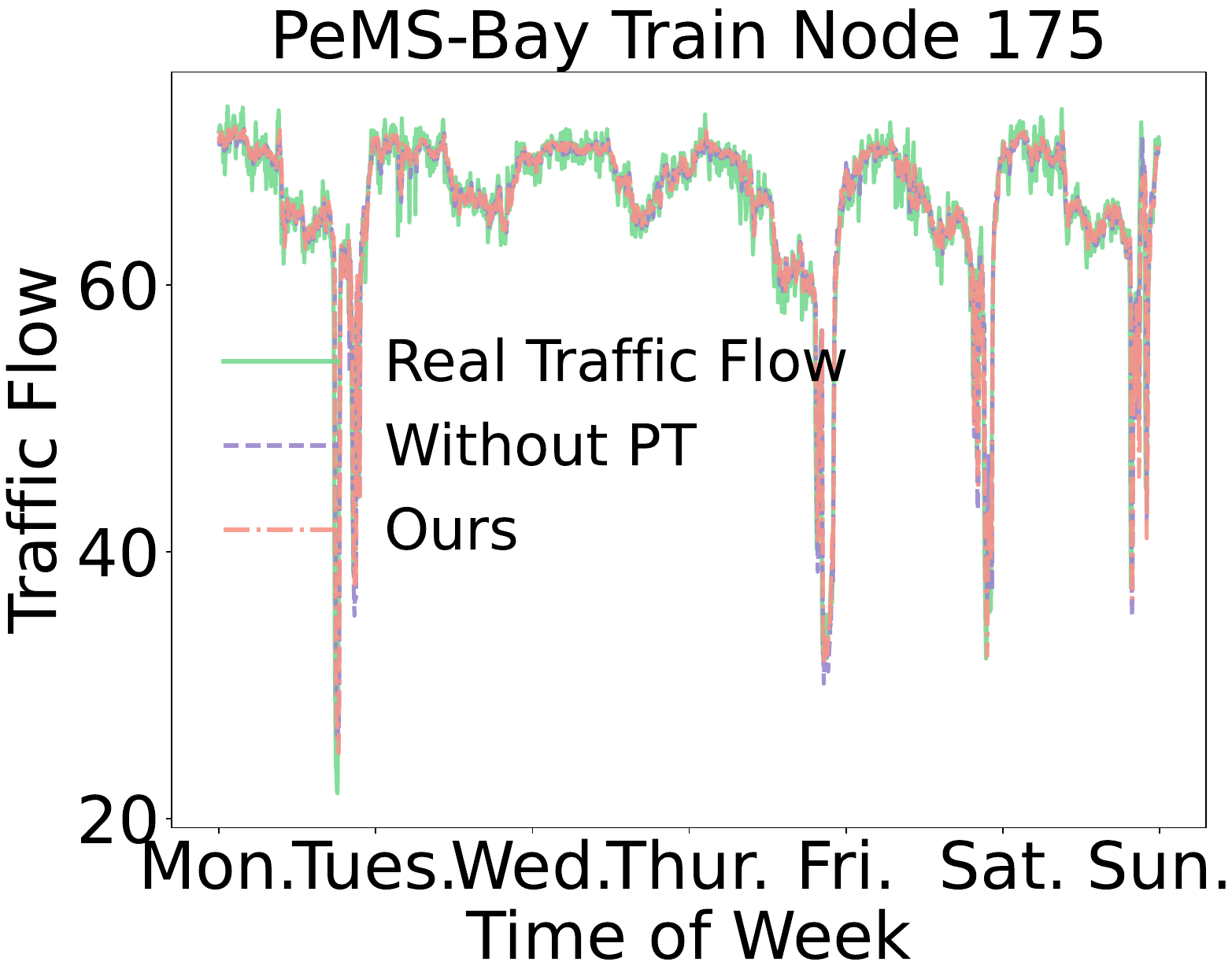}
      \end{minipage}\hspace{-3.0mm}
      &
      \begin{minipage}{0.23\textwidth}
    	\includegraphics[width=\textwidth]{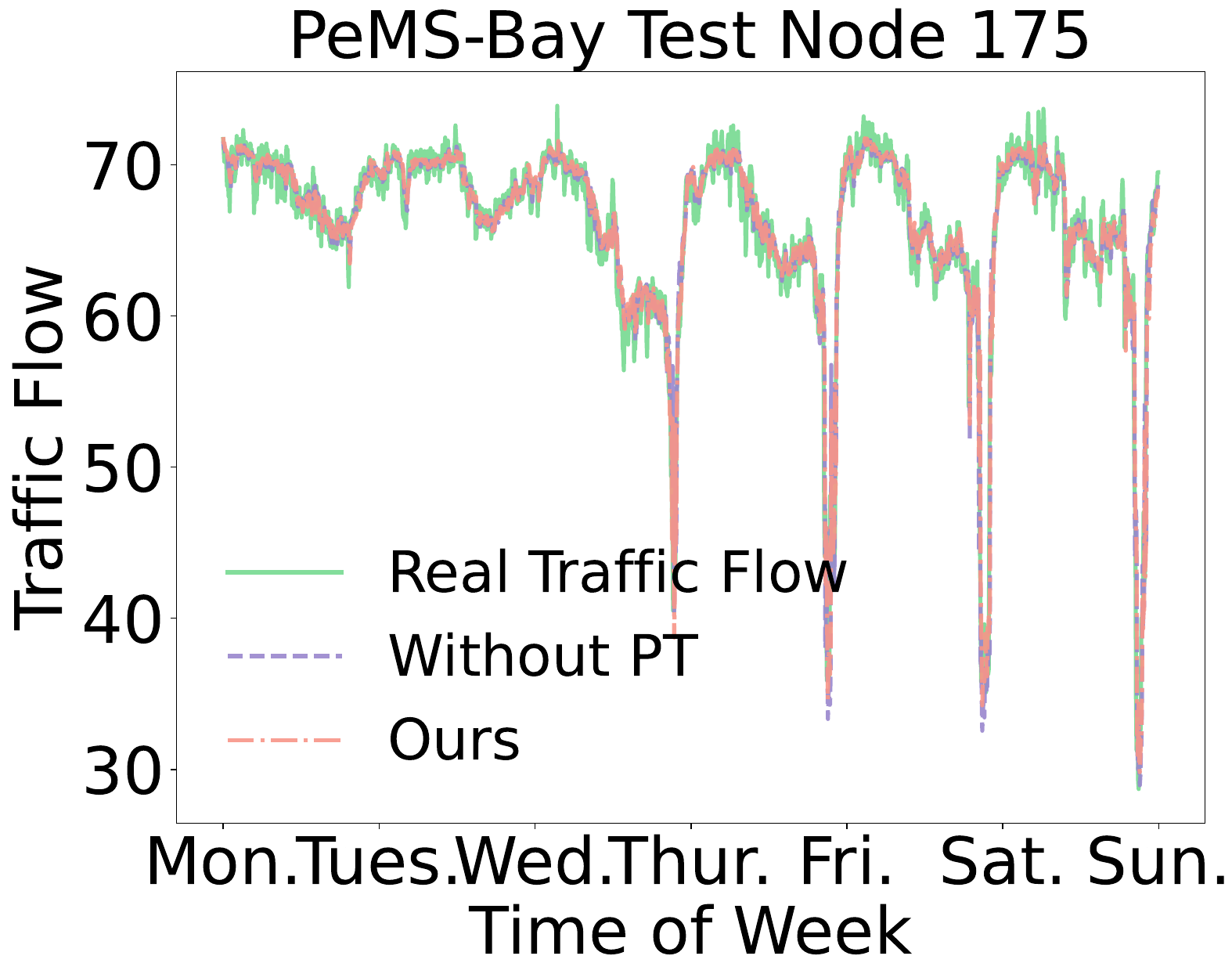}
      \end{minipage}\hspace{-3.0mm}
    \\\hspace{-4.0mm}
    \begin{minipage}{0.23\textwidth}
    	\includegraphics[width=\textwidth]{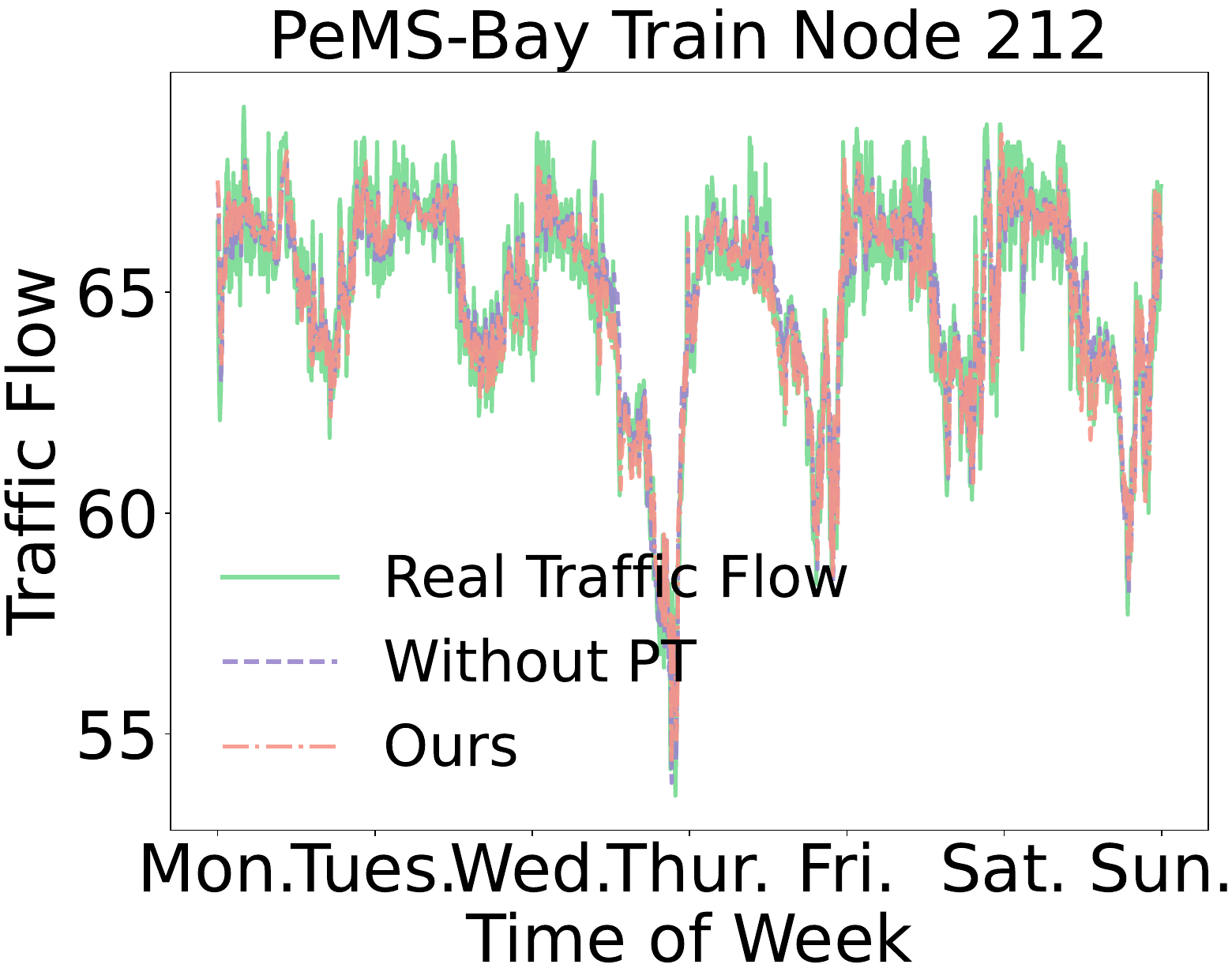}
      \end{minipage}\hspace{-3.0mm}
     &
      \begin{minipage}{0.23\textwidth}
        \includegraphics[width=\textwidth]{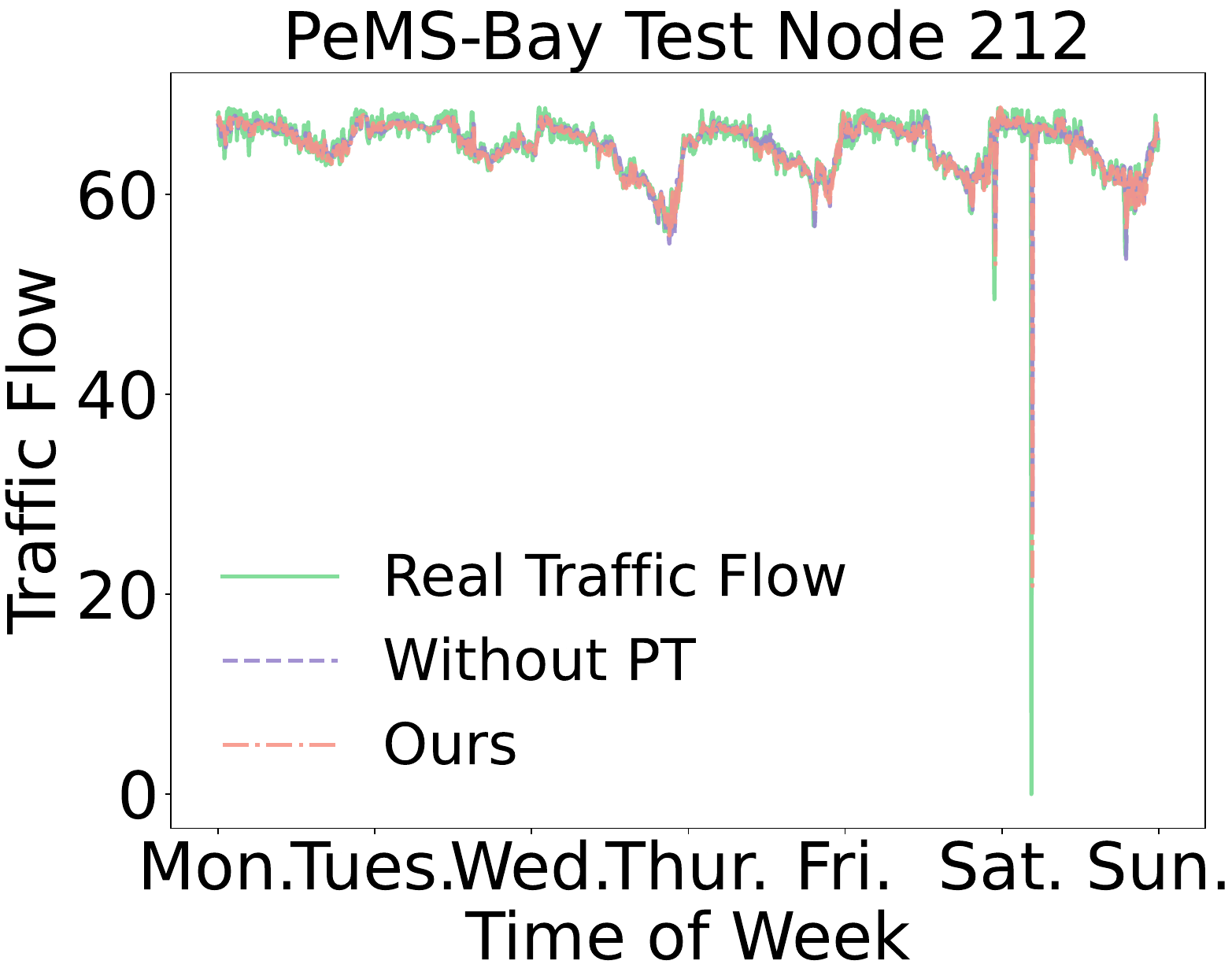}
      \end{minipage}\hspace{-3.0mm}
      &
      \begin{minipage}{0.23\textwidth}
       \includegraphics[width=\textwidth]{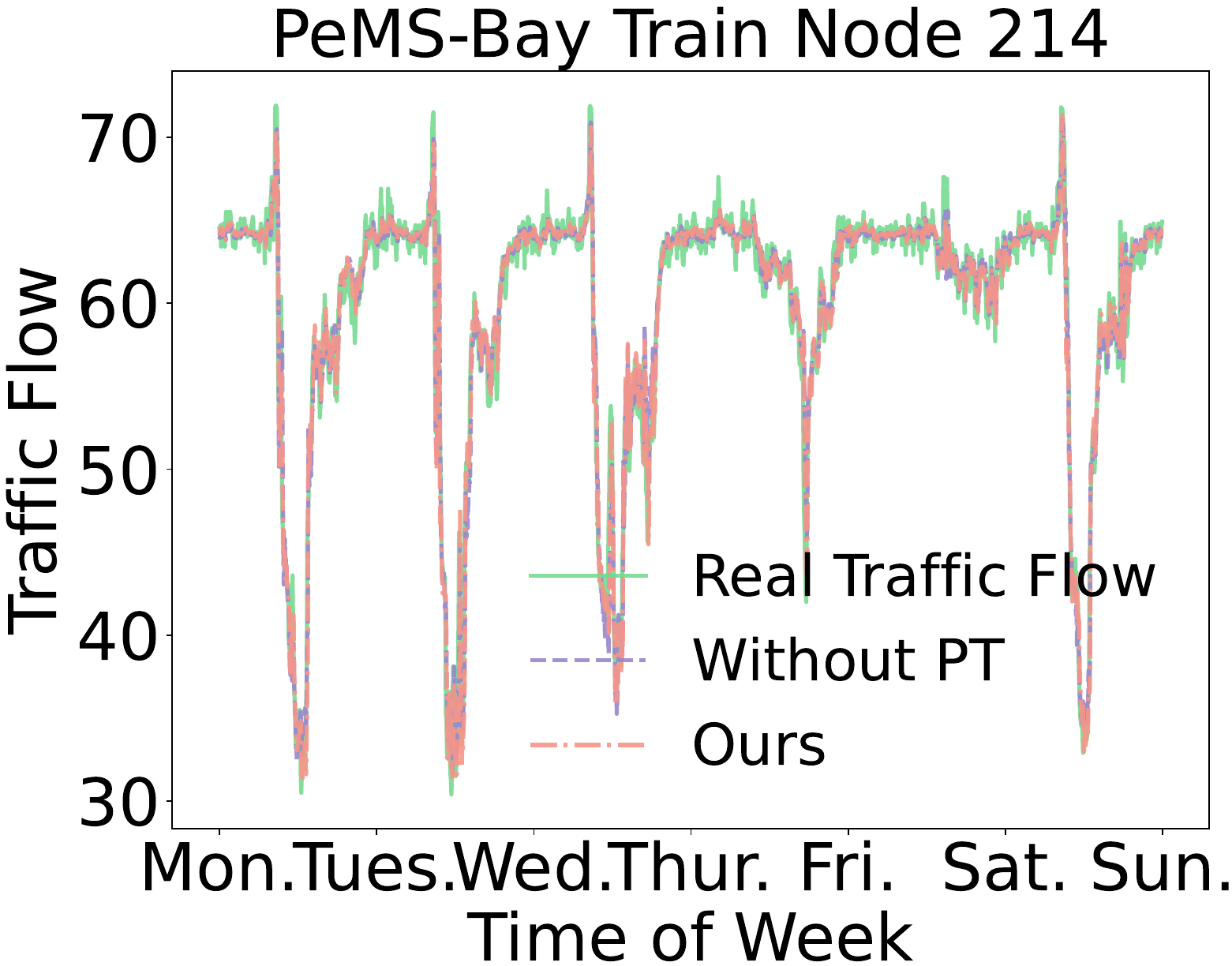}
      \end{minipage}\hspace{-3.0mm}
      &
      \begin{minipage}{0.23\textwidth}
    	\includegraphics[width=\textwidth]{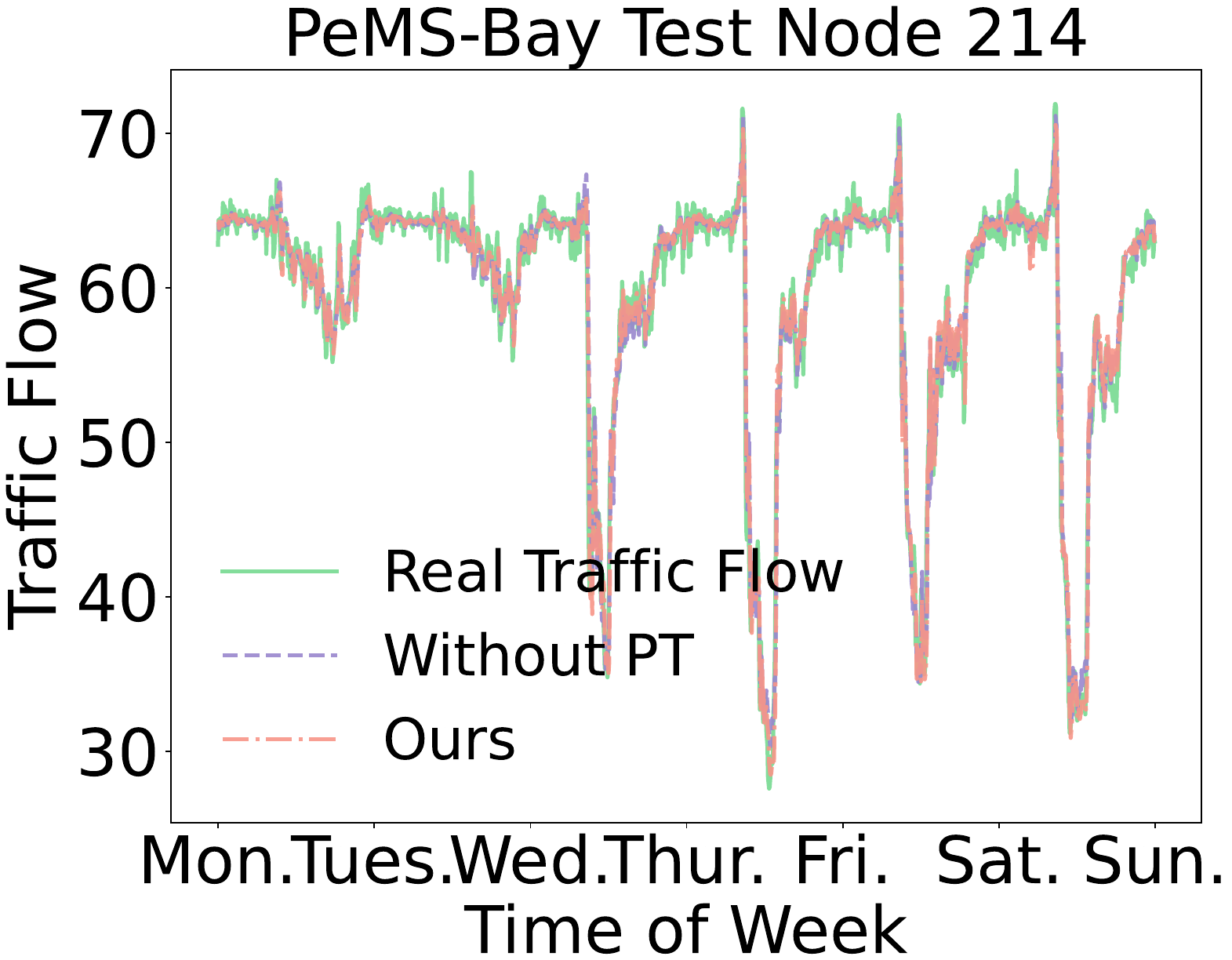}
      \end{minipage}\hspace{-3.0mm}
    \end{tabular}
    % \vspace*{-0.15in}
    \caption{Case study of \model\ with the base MTGNN on PeMS-Bay to show data distribution shift in terms of 1 week.}
    \label{fig:case_study_part2}
    % \vspace*{-0.1in}
\end{figure*}

% \vspace{-0.1in}
\subsection{Ablation Study}
To assess the effectiveness of each component in our \model\ framework, we conduct ablation experiments on traffic prediction. The evaluation results for traffic prediction can be found in Table~\ref{tab:ablation}, respectively. We examine the following ablated variants:

\textbf{1) w/o TCN}: This particular variant is specifically designed to encapsulate temporal dynamics, making it crucial for capturing evolving traffic patterns over time. Consequently, we have observed a decline in performance when this component is omitted, underscoring its significance in accurately modeling and predicting dynamic traffic behaviors.

\textbf{2) w/o MLP}: Delving deeper into the intricate components of the prompt-tuning mechanism could illuminate the unique roles each plays in enhancing the model's performance. Specifically, in this variant, we remove the multiple fully-connected layers within the prompt network. Across various scenarios, a significant decline in performance becomes evident, emphasizing the critical role of these transformation layers in effectively adjusting to the distribution shift inherent in the newly generated data.

\textbf{3) w/o Skip}: A more comprehensive examination of the distinct components within the prompt-tuning mechanism could offer enhanced clarity regarding their precise impacts on the model's performance. In this modified model, the skip connection in the final layer of our prompt neural network has been removed. This alteration presents challenges for the prompt network in generating suitable spatio-temporal data inputs for the pretrained backbone model $g(\cdot)$. The evaluation outcomes validate this setback, as evidenced by a notable decline in performance across various scenarios.

In addition to evaluating the effectiveness of the components in our \model, Table~\ref{tab:ablation} further confirms that tuning data covering a larger temporal range generally results in a larger distribution shift, leading to more pronounced performance degradation for the pretrained model. This observation reinforces the motivation behind our \model, which aims to develop an effective prompt tuning approach to adapt to temporal shifts successfully. Through comparing the ablated versions with the full version of our \model, the inclusion of all components in \model\ enhances its robustness against the increase in distribution disparity.

\subsection{Scalability Study}
In this section, we investigate the scalability of \model\ in terms of several metrics. Following existing studies~\cite{fang2021mdtp,jiang2023self}, we perform scalability experiments in terms of the size of the training data. And the results are shown in Figure~\ref{fig:sca}. {\fn{Our scalability experiments mirror the large-scale evaluations characteristic
of VLDB studies such as LDPtrace~\cite{du2023ldptrace} and RED~\cite{zhou2024red},
demonstrating \model’s capability to handle spatio-temporal graphs with more than 10$^9$
edges under constant memory overhead. This reinforces the methodological and thematic
alignment of our work with VLDB’s focus on efficient, reproducible urban data analytics.}} The experiment assesses the predictive performance of two models, "Base + CauSTG" and "Base + Ours", using two different traffic datasets: PeMSD04 and PeMSD07. The key metric compared is the time required for predictions as dataset cardinality increases from 20\% to 100\%.
In the PeMSD04 dataset, the predictive times for both models remain relatively low, with "Base + Ours" showing a large advantage over "Base + CauSTG" especially as the dataset cardinality increases. This indicates a modest improvement in efficiency by the "Base + Ours" model in less complex scenarios.

Contrastingly, the results from the PeMSD07 dataset reveal a more pronounced difference between the two models. "Base + Ours" significantly outperforms "Base + CauSTG," maintaining much lower prediction times across all levels of dataset cardinality. This notable performance gap underscores the superior scalability and efficiency of the "Base + Ours" model when applied to larger and possibly more complex datasets. The results suggest that "Base + Ours" could be particularly beneficial in applications requiring rapid processing of extensive data.

\begin{figure}[t]
    \centering
    \begin{minipage}[t]{0.48\columnwidth}
        \centering
        \includegraphics[width=\linewidth]{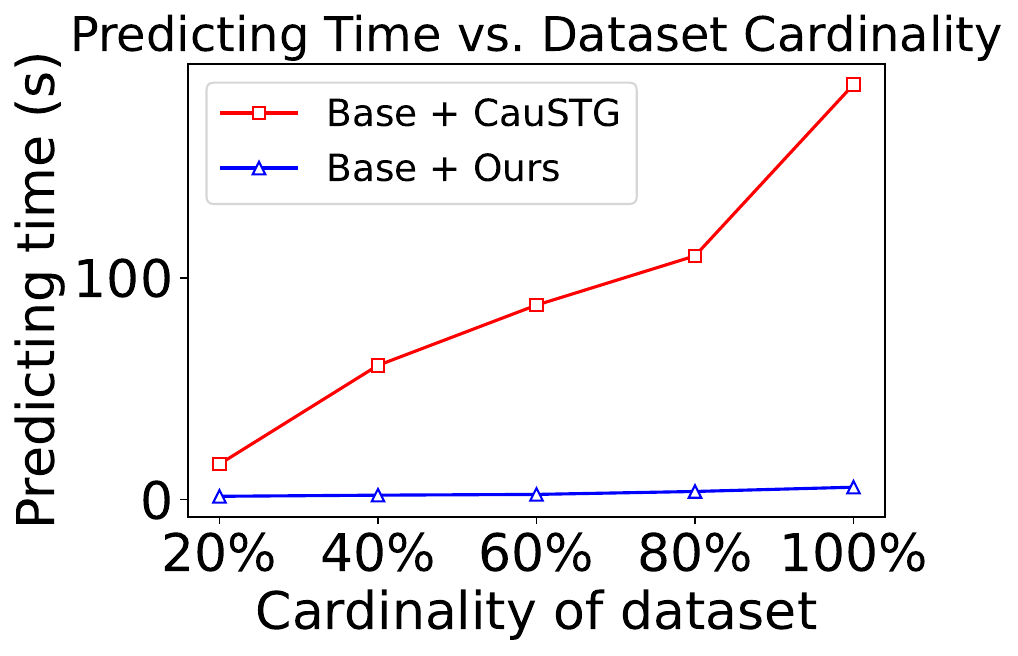}
        \vspace{-0.25in}
        \caption*{(a) PeMSD04}
    \end{minipage}
    \hfill
    \begin{minipage}[t]{0.48\columnwidth}
        \centering
        \includegraphics[width=\linewidth]{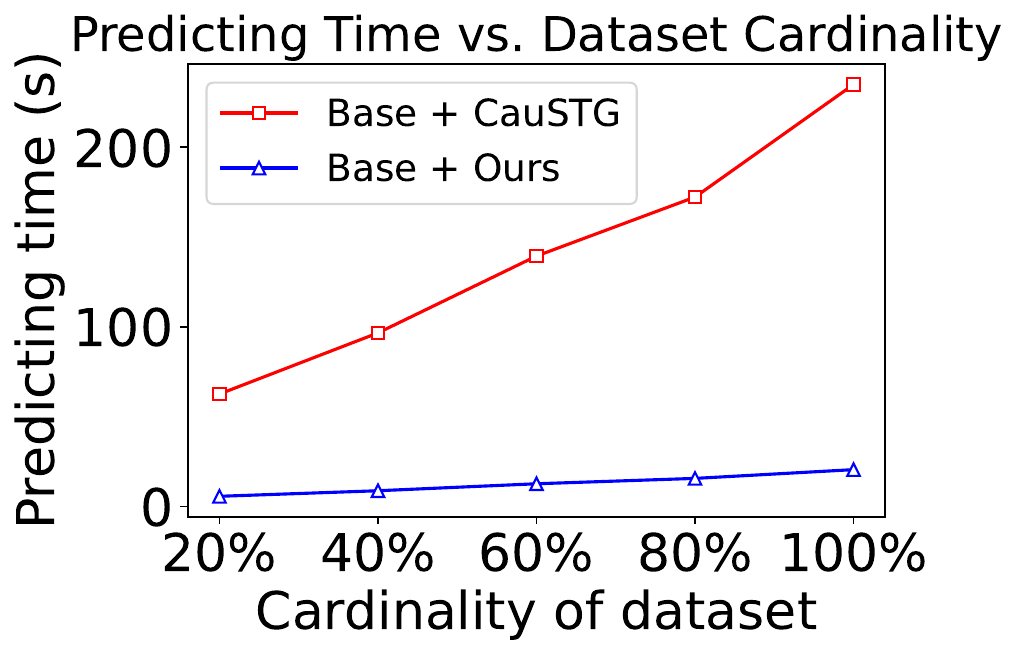}
       \vspace{-0.25in}
        \caption*{(b) PeMSD07}
    \end{minipage}
    \vspace{-0.15em}
    \caption{Visualization results when using MTGNN as the base model on two datasets.}
    \label{fig:sca}
\end{figure}

% \vspace{-0.1in}
\subsection{Hyperparameter Study}
In this section, we investigate the influence of different hyperparameter settings on prediction accuracy and tuning time. The evaluation is carried out on traffic flow prediction using the PeMSD04 and PeMSD08 datasets. The results, in terms of MAE, are presented in Figure~\ref{fig:hyper_traffic}. Specifically, we examine the following hyperparameters: \vspace{-0.05in}
\begin{itemize}[leftmargin=*]
    \item \textbf{Embedding dimensionality}: Through extensive experimentation, we determined that an embedding dimensionality of 32 offers the best trade-off between performance and efficiency for both tasks in our \model\ model. While this setting is beneficial across the board, its influence is more pronounced for the prediction task than for the traffic-related task. This difference arises from the intrinsic properties of the traffic data, which is highly periodic and relatively noise-free. Such characteristics reduce the likelihood of severe underfitting or overfitting, thereby diminishing the sensitivity of traffic-related performance to changes in embedding dimensionality.

From an efficiency standpoint, increasing the embedding dimensionality beyond 32 substantially raises the tuning time, leading to diminishing returns in performance relative to the computational cost. Consequently, adopting a 32-dimensional embedding proves to be an effective compromise: it incurs only a moderate tuning overhead while still delivering strong performance on both tasks. Overall, our results indicate that an embedding dimensionality of 32 is a well-balanced choice for \model\, as it aligns with the characteristics of the underlying data and achieves near-optimal performance without incurring excessive computational expense.

    \item \textbf{Kernel size of TCN}: This parameter plays a crucial role in determining the number of consecutive time slots considered during the temporal relation modeling in our prompt network. Based on our results, we have observed that using a kernel size of 7 provides performance advantages in specific cases. Furthermore, we have found that a kernel size of 7 is highly efficient in terms of tuning time within our model framework. This choice strikes a balance between capturing sufficient temporal dependencies and maintaining computational efficiency. In summary, the selection of a kernel size for the temporal relation modeling is an important consideration, and our findings suggest that a kernel size of 7 offers performance advantages and efficient tuning within our model framework.
    
\end{itemize}

% \vspace{-0.2in}
\subsection{Case Study}
In this section, we assess the effectiveness of the proposed \model\ framework in mitigating spatio-temporal distribution shifts by examining specific cases. Figure~\ref{fig:case_study_part1} and Figure~\ref{fig:case_study_part2} illustrate the variation in traffic flow throughout the day and the week for twelve region nodes. The left plot for each region represents the training data, while the right plot represents the test data. Each plot includes three curves: the ground truth traffic flow, the predicted values obtained using the ablated model without prompt tuning (referred to as "Without PT"), and the predicted values obtained using the full version of our \model. We summarize the key observations as follows:

As shown in Figure~\ref{fig:case_study_part1}, both our \model\ framework and its ablated counterpart lacking prompt tuning demonstrate comparable predictive fidelity during training, closely mirroring ground truth traffic patterns across all examined nodes. However, pronounced distribution shifts emerge when evaluating performance on test data from identical spatial regions. Node 147 manifests significant oscillatory behavior during afternoon-to-evening transitions (red-shaded interval), contrasting sharply with its stable training profile. Conversely, Node 45 exhibits a distinctive monotonic elevation in flow volume throughout corresponding diurnal phases, deviating fundamentally from stationary patterns observed during model training. Meanwhile, Node 38 reveals transient nocturnal perturbations characterized by abrupt flow variations, indicating differential short-horizon temporal dynamics. These phenomenological disparities underscore the complex temporal dependencies inherent in urban mobility systems, reflecting diurnal bimodality between morning commutes and evening egress patterns, as well as modal shifts distinguishing standard workdays from holiday travel behavior within identical topological contexts.

Figure~\ref{fig:case_study_part2} presents a comprehensive spatio-temporal analysis of urban mobility patterns across contiguous weekly cycles, revealing significant temporal non-stationarities in traffic flow distributions. Region 212 manifests particularly pronounced disparities between its training and testing profiles, exhibiting fundamentally divergent behavioral signatures that challenge conventional stationary forecasting assumptions. This phenomenological discontinuity underscores the inherent volatility of transportation networks, where recurrent weekly periodicity is systematically modulated by exogenous factors including holiday-induced demand fluctuations, infrastructural modifications, and stochastic mobility perturbations. The observed deviations exemplify both inter-week variability within identical spatial contexts and the critical dichotomy between normative workday patterns and anomalous holiday traffic regimes. More profoundly, these distributional shifts highlight the limitations of static modeling approaches in accommodating complex temporal dynamics across extended horizons. Our methodology demonstrates superior capability in capturing these longitudinal dependencies through adaptive representation learning, as evidenced by its enhanced alignment with ground truth trajectories across the complete diurnal cycle. To further validate the framework's robustness in modeling extended temporal patterns, supplementary visualization of weekly traffic dynamics is presented in Figure~\ref{fig:case_study_part2}, providing empirical substantiation of our approach's efficacy in preserving temporal consistency while accommodating non-stationary mobility regimes.

Unlike conventional approaches, the baseline architecture without prompt tuning (PT) exhibits notable predictive errors, especially under highly dynamic traffic conditions with abrupt flow changes and capacity limitations, as shown in Figure \ref{fig:case_study_part1} and Figure \ref{fig:case_study_part2}. In contrast, our framework provides strong spatio-temporal adaptability, effectively addressing distribution shifts characteristic of complex urban networks. The improved alignment between PT-enhanced predictions and observed traffic patterns yields statistically significant gains in reliability, particularly when exposed to unexpected disturbances. This robustness stems from PT’s ability to refine a targeted subset of parameters via gradient-based updates, maintaining stable spatio-temporal representations while flexibly adjusting transient responses.So, the model can quickly react to random events like one-time traffic jams, accidents that slow traffic, and gaps between demand and capacity, while still keeping the knowledge graph embeddings of urban mobility intact. Overall, these results show that our method can be used in real-world situations.

\subsection{Dynamic Simulation Validation}
\label{sec:dynamic}

To further evaluate the robustness and adaptability of \model\ under dynamic, non-stationary traffic conditions, we conducted simulation-based experiments using the SUMO~\cite{guastella2023traffic} environment. 
Following the Shibuya district scenario derived from the publicly available Tokyo Metro data~\footnote{\url{https://github.com/Jugendhackt/tokyo-metro-data}}, we simulated event-driven traffic surges, irregular flows, and time-varying congestion patterns that approximate real-world urban dynamics. 
This setup enables validation of whether the temporal prompt network can adapt effectively when traffic distributions evolve continuously rather than originating from static historical traces. 

As illustrated in Table~\ref{tab:cross_dynamic_experiments}, our \model\ achieves consistent improvements across both backbone architectures. 
Compared with full fine-tuning, it attains a 2.9–5.1\% reduction in MAE and a 3.8$\times$ acceleration in adaptation speed while maintaining prediction stability under dynamic, event-driven conditions. 
These findings confirm that the temporal prompt mechanism effectively mitigates distribution shifts in continuously changing traffic environments, enabling fast and reliable adaptation to unseen dynamic scenarios.

{\fn{Overall, the expanded comparative and efficiency analyses establish that
\textit{\model} not only advances prompt-based spatio-temporal learning
but also aligns closely with the VLDB research trajectory on scalable,
system-efficient, and reproducible spatio-temporal data management.}}

%% file: conclusion.tex
\section{Conclusion}

In this study, we introduce a simple yet powerful spatio-temporal prompt learning paradigm aimed at enhancing the robustness and generalization ability of spatio-temporal prediction models in the presence of dynamic distribution shifts. Our framework incorporates prompt tuning, which involves generating informative spatio-temporal prompt context that captures the underlying patterns and dynamics in the downstream urban data. Through comprehensive empirical evaluations across various spatio-temporal prediction tasks, we have demonstrated the remarkable effectiveness of our spatio-temporal prompt learning framework. By leveraging this framework, we significantly improve the resilience of pre-trained models to distribution shifts and enhance adaptability to new data.